\pdfoutput=1
\documentclass[11pt,a4paper]{article}
\usepackage[margin=2.5cm]{geometry}
\usepackage[T1]{fontenc}
\usepackage[utf8]{inputenc}
\usepackage{lmodern}
\usepackage{microtype}
\usepackage{amsmath,amssymb,amsfonts,amsthm,mathtools}
\usepackage{bbm}
\usepackage{booktabs}
\usepackage{multirow}
\usepackage{array}
\usepackage{graphicx}
\usepackage{xcolor}
\usepackage{enumitem}
\usepackage{tikz}
\usepackage{pgfplots}
\usetikzlibrary{arrows.meta,positioning,fit,calc}
\pgfplotsset{compat=1.18}
\usepgfplotslibrary{groupplots}
\usepackage{authblk}
\usepackage[numbers,square,sort&compress]{natbib}
\usepackage[labelfont=bf,labelsep=period,font=small]{caption}
\usepackage{titlesec}
\usepackage[hidelinks]{hyperref}

\titleformat{\section}{\Large\sffamily\bfseries}{}{0pt}{}
\titleformat{\subsection}{\large\sffamily\bfseries}{}{0pt}{}
\titleformat{\subsubsection}{\normalsize\sffamily\bfseries}{}{0pt}{}
\titlespacing*{\section}{0pt}{2.0ex plus 1ex minus .2ex}{1.0ex plus .2ex}
\titlespacing*{\subsection}{0pt}{1.6ex plus 1ex minus .2ex}{0.6ex plus .2ex}
\theoremstyle{plain}
\newtheorem{theorem}{Theorem}
\newtheorem{lemma}{Lemma}
\newtheorem{corollary}{Corollary}
\newtheorem{proposition}{Proposition}
\theoremstyle{definition}
\newtheorem{definition}{Definition}

\newcommand{\R}{\mathbb{R}}
\newcommand{\1}{\mathbbm{1}}
\newcommand{\E}{\mathbb{E}}
\newcommand{\TopK}{\mathrm{TopK}}
\newcommand{\softmax}{\mathrm{softmax}}

\newcommand{\gelu}{\mathrm{GELU}}
\newcommand{\LN}{\mathrm{LN}}
\newcommand{\Aff}{\mathrm{Aff}}
\newcommand{\mean}{\mathrm{mean}}
\newcommand{\Var}{\mathrm{Var}}
\newcommand{\eps}{\varepsilon}
\newcommand{\PW}{\mathrm{PW}}
\newcommand{\MAC}{\mathrm{MAC}}
\newcommand{\nrk}{\operatorname{nrk}}
\newcommand{\Bs}{\mathcal B_{\mathrm{strict}}}
\newcommand{\Be}{\mathcal B_{\mathrm{ext}}}

\newcommand{\Bh}{\mathcal B_{\odot}}
\newcommand{\Ttr}{\mathcal T_{\mathrm{Tr}}}
\newcommand{\Tkq}{\mathcal T_{k,q}}

\title{\sffamily\bfseries Transformers as Generic Deep Neural Networks: Exact Compilation, the Cost of Removing Activation--Activation Products, and Multi-Agent Attention Clustering}
\author[1,*]{Sergey Salishev}
\author[2]{Zeev Volkovich}
\author[1]{Oleg Granichin}
\affil[1]{St.~Petersburg State University, Saint Petersburg 199034, Russia}
\affil[2]{Department of Software Engineering, Braude College of Engineering, Karmiel 2161002, Israel}
\affil[*]{Corresponding author: s.salischev@spbu.ru}
\date{}

\begin{document}

\maketitle

\begin{abstract}
Transformers are deployed as a special model class. Neural processing units (NPU) add dedicated operators for attention and normalization, and quantization relies on Transformer-specific algorithms. We show the computation does not force this. Every Transformer block, with attention, LayerNorm and a GELU MLP, compiles exactly into a generic deep neural network built from fixed affine maps, routing and the square, sigmoid and inverse square root, plus GELU or its sigmoid approximation. Without activation--activation products, every transform-wise compilation adds at least $N^2$ scalar nonlinear evaluations per head for dense scores. Keeping that one operation type removes this cost; its elementwise form, present on every surveyed NPU, suffices with the sigmoid and inverse square root. For top-$k$ attention with exchangeable tail scores, we prove that exact or conditional-mean tail mass gives an unbiased, minimum-mean-square-error estimate of the attention distribution, while tail drop is conditionally biased. When queries or agents cluster, centroids choose the keys first, reducing the number of computed scores from $N^2$ to $Nk$ per head. On DeiT-Tiny, MambaIRv2-Light and Pythia-70M the compiled models match their references with all keys, standard post-training quantization applies, and the exact tail-mass estimate beats tail dropping with half as many keys on DeiT-Tiny and, for $k\le8$, on Pythia-70M.

\end{abstract}

\section{Introduction}

Transformers~\cite{vaswani2017attention} are treated as a model class apart from the deep neural networks (DNNs) that accelerators and tools were first built for. This view has two practical consequences.

The first is hardware. Accelerators built for convolutional networks run a fixed operator set and hand everything else to the host. The Edge TPU compiler cuts the model at the first unsupported operation and runs the rest on the CPU~\cite{coral_edgetpu_models}. NVIDIA's Jetson DLA has no matrix-multiply layer~\cite{nvidia_dla}. On phones, llm.npu runs the linear layers on the NPU and keeps LayerNorm and attention on the CPU or GPU~\cite{xu2025llmnpu}. Vendors answer with Transformer-specific operators and units for attention, softmax and normalization~\cite{park2020optimus,seo2024ianus,kim2024hwsoftmax}.

The second is quantization. Post-training quantization (PTQ) of Transformers relies on dedicated algorithms. Liu et al.\ search quantization scales with a ranking-aware objective~\cite{liu2021ptq4vit}, and PTQ4ViT adds twin-uniform quantizers for the post-softmax and post-GELU activations~\cite{yuan2022ptq4vit}. FQ-ViT adds power-of-two factors for LayerNorm and a log-integer softmax~\cite{lin2022fqvit}. RepQ-ViT reparameterizes quantization scales~\cite{li2023repqvit}, IGQ-ViT groups channels per instance~\cite{moon2024igqvit}, I-BERT replaces the nonlinearities by integer polynomials~\cite{kim2021ibert}, and SmoothQuant moves activation outliers into the weights of language models~\cite{xiao2023smoothquant}.

We show that the computation does not force this split. The argument has four steps.

\emph{Exact compilation.} Every Transformer block, with LayerNorm, scaled dot-product attention, softmax and a GELU MLP, compiles exactly into a generic DNN of fixed affine maps, scalar nonlinearities and routing, with no product of two activations (Theorem~\ref{thm:strict}). We call it the strict graph. Squares replace products, through $ab=\tfrac12[(a+b)^2-a^2-b^2]$ where the squares of the factors are shared and through polarization otherwise, and no exponential or logarithm is needed: squares, the sigmoid and the inverse square root suffice, with GELU or its approximation $x\,\sigma(1.702x)$ as the MLP activation.

\emph{The cost of strictness.} Since the strict graph has no dedicated layers, standard DNN tools apply to it. But removing products of two activations has a price. For free arguments, any compilation without them adds at least $N^2$ scalar nonlinear evaluations per head at $QK^\top$ and $Nd$ at $PV$, whatever the scalar functions, depth or width (Theorem~\ref{thm:lower}).

\emph{One more operation type.} Adding activation--activation products removes the price. Their elementwise form, the Hadamard product, is on every NPU we survey, and it suffices: with it, a block needs only the sigmoid and the inverse square root besides its MLP activation, and only softmax adds work, $2N^2$ evaluations per head (Proposition~\ref{prop:hsr}). With the exponential and the reciprocal it adds no pointwise work (Proposition~\ref{prop:hadamard}), and with matrix multiplication of two activations the extended compilation adds no work of either kind (Theorem~\ref{thm:ext}). On a GPU, for $k\le64$, the Hadamard and strict graphs run at $1.0$ to $1.3$ and $1.2$ to $1.5$ times the latency of this extended graph. Theorems~\ref{thm:lower} and~\ref{thm:ext} make this operation type necessary and sufficient for compilation without added pointwise work. Practice already uses this operation set. ONNX defines LayerNormalization and Softmax as functions over it~\cite{onnx_layernorm,onnx_softmax}, current NPUs expose it~\cite{seo2024ianus,amd_ryzenai,arm_ethosu_vela}, and convolutional networks with squeeze-and-excitation gating already use the elementwise product~\cite{hu2018senet,howard2019mobilenetv3}.

\emph{Clustered attention.} The attention weights of a query cluster on a few keys~\cite{gupta2021topk,wang2021spatten,gong2025hata} when its scores have a gap (Lemma~\ref{lem:gap}). Queries cluster too~\cite{geshkovski2025mathematical,vyas2020clustered}, and queries close to one point share their top keys (Lemma~\ref{lem:shared}). Top-$k$ attention keeps the $k$ largest scores of each query~\cite{gupta2021topk,wang2021spatten}, the topological rule of bird flocks~\cite{ballerini2008interaction}. We add an estimate of the omitted probability mass, spread uniformly over the dropped keys. For exchangeable tail scores, exact or conditional-mean tail mass gives the unbiased, minimum-mean-square-error attention-distribution estimate; tail drop is conditionally biased (Theorem~\ref{thm:tail}). Centroids of the query clusters~\cite{vyas2020clustered} choose the keys before the scores, so the strict graph forms only $Nk$ scores per head (Corollary~\ref{cor:sparse}, Proposition~\ref{prop:routed}). One step of the Motsch--Tadmor flocking model is one attention head (Proposition~\ref{prop:flock}).

We test the compilation on three model families: DeiT-Tiny~\cite{touvron2021deit} for image classification on Oxford-IIIT Pet~\cite{parkhi2012pet} and ImageNet-1k~\cite{deng2009imagenet}, MambaIRv2-Light~\cite{guo2025mambairv2} for super-resolution, and Pythia-70M~\cite{biderman2023pythia} for language modeling.

\emph{Related work.} Universal approximation gives some network within each error on a compact set~\cite{cybenko1989approximation,pinkus1999approximation}. Forming products from squares is classical: tables of quarter-squares multiply by polarization~\cite{laundy1856quarter}, Winograd's inner product forms the self-terms of each row once and halves the products of a matrix multiplication~\cite{winograd1968inner}, and expressivity proofs for ReLU networks use $ab=\tfrac12[(a+b)^2-a^2-b^2]$~\cite{yarotsky2017error}. Compilers such as Glow map a model onto the operators a device provides~\cite{rotem2018glow}. Privacy-preserving inference decomposes attention, softmax and LayerNorm into primitive operations for cryptographic protocols, usually with polynomial approximations~\cite{chen2022thex,hao2022iron,pang2024bolt}. Multiplication-free training replaces every product by an approximate piecewise-affine operation and trains on that arithmetic~\cite{kosson2023pam}. Training-time alternatives simplify LayerNorm~\cite{zhang2019rmsnorm} or remove it~\cite{bachlechner2021rezero,zhu2025dyt}. Top-$k$ attention saves memory on long contexts~\cite{gupta2021topk,wang2021spatten,gong2025hata}, usually by renormalizing the kept scores. Scatterbrain adds a low-rank estimate of the non-selected attention to a sparse one~\cite{chen2021scatterbrain}, Combiner attends to the rest of the sequence through region summaries~\cite{ren2021combiner}, and MagicPIG replaces top-$k$ selection by sampling to avoid its bias~\cite{chen2025magicpig}. Clustered attention chooses the kept keys by query centroids and gives the other keys the weights of the centroid~\cite{vyas2020clustered}; we use its key selection with calibrated centroids. Lookup tables, piecewise-linear units and integer approximations implement scalar nonlinearities in hardware~\cite{yu2022nnlut,upadhyay2024nova,kim2024hwsoftmax}.

\emph{What is new.} Unlike universal approximation, the compiled graph is exact on all inputs, reuses the trained weights and has a known size. Unlike compilers and privacy-preserving decompositions, we ask which small operator set expresses a whole block exactly. New are the lower bound that prices product-free compilation and the extended construction that removes the price. Unlike multiplication-free training, we compile a trained model, keep LayerNorm, and remove only products of two activations. Our tail-mass estimate is simpler than those of Scatterbrain, Combiner and clustered attention. It is a fixed affine map for a constant mass. For clustered queries, new are the cost bounds for the selected scores and the flocking step as one attention head.

\section{Results}
\subsection{Theory}
\label{sec:theory}

All proofs are in Methods.

\subsubsection{A generic DNN vocabulary}
\label{sec:vocab}

We describe a DNN by the types of operations its nodes perform. Three types cover plain convolutional networks and multilayer perceptrons.

\begin{definition}[Strict basis]
\label{def:strict}
Fix a set $\Phi$ of scalar functions. A graph over the strict basis $\Bs(\Phi)$ is a finite acyclic graph whose nodes are of three types:
\begin{enumerate}[leftmargin=1.5em,itemsep=0pt]
\item \emph{weight--activation contraction}: a fixed affine map $x\mapsto Ax+b$, where $A$ and $b$ are parameters of the graph and do not depend on the input;
\item \emph{unary map}: $x\mapsto\phi(x)$ applied entrywise, with $\phi\in\Phi$;
\item \emph{routing}: copy, reshape, permutation, concatenation, broadcast, masking, and selection of entries by comparisons of computed values (max, $\TopK$, gather), with a fixed tie-breaking rule.
\end{enumerate}
No node multiplies two quantities that both depend on the input; a square is a unary map. For every $\Phi$, we call $\Bs(\Phi)$ a \emph{product-free} basis and its graphs product-free graphs.
\end{definition}

Convolutions, fully connected layers, batch normalization at inference, residual sums and average pooling are fixed affine maps; max pooling is routing. ReLU, GELU and other activations are unary maps. Gating, as in squeeze-and-excitation blocks~\cite{hu2018senet}, multiplies two activations and is not in this basis. The set $\Phi$ is a parameter. Shifts such as $x+\eps$ and scalings such as $x/2$ belong to the preceding affine node, so a unary set needs only the shape of each function.

\begin{definition}[Strict unary set]
\label{def:phistrict}
$\Phi_{\mathrm{strict}}=\{(\cdot)^2,\ \sigma,\ (\cdot)^{-1/2},\ \gelu\}$, with $\sigma(t)=1/(1+e^{-t})$.
\end{definition}

Each function has one role. The square forms second moments and products: $ab=\tfrac12[(a+b)^2-a^2-b^2]$ where the squares of the factors are shared across products, as for the rows of $Q$, $K$ and $V$ and in LayerNorm, and polarization, $ab=\tfrac14[(a+b)^2-(a-b)^2]$, for an isolated product. The sigmoid gives the exponential, $e^t=\sigma(t)/\sigma(-t)$. The inverse square root normalizes LayerNorm and, squared, gives the reciprocal, $1/x=(x^{-1/2})^2$ for $x>0$. $\gelu$ is the MLP activation (Definition~\ref{def:transforms}). The set has no exponential and no logarithm.

\begin{definition}[Transform sets]
\label{def:transforms}
The Transformer transform set is
\[
\Ttr=\{\Aff,\ \LN,\ \mathrm{Score},\ \softmax,\ \mathrm{Mix},\ \mathrm{MLP}_{\gelu}\},
\]
where
\begin{align*}
\Aff(x)&=Ax+b,
&\LN(x)&=\beta+\gamma\odot\frac{x-\mean(x)}{\sqrt{\Var(x)+\eps}},\\
\mathrm{Score}(Q,K)&=QK^\top/\sqrt d,
&\mathrm{Mix}(P,V)&=PV,\\
\mathrm{MLP}_{\gelu}(x)&=A_2\gelu(A_1x+b_1)+b_2,
&\gelu(x)&=\tfrac x2\bigl(1+\operatorname{erf}(x/\sqrt2)\bigr),
\end{align*}
and $\softmax$ acts on each row of scores. A model may use the approximation $\gelu(x)\approx x\,\sigma(1.702x)$~\cite{hendrycks2016gelu} instead, which changes the model. Then $\gelu$ is no longer a basic operation: it is omitted from every unary set, and the sigmoid is added where missing. Here $\Var(x)=\mean_i[(x_i-\mean(x))^2]$, $\eps>0$ and $A,b,\beta,\gamma,A_1,b_1,A_2,b_2$ are fixed parameters, and one attention head has $N$ tokens, head dimension $d$, $Q,K,V\in\R^{N\times d}$ and $P\in\R^{N\times N}$.
\end{definition}

A compiled graph serves a fixed context capacity $N$, and the construction gives one graph for each $N$. Shorter inputs are padded and masked, and causal models use the causal mask. Masks are fixed routing and affine maps.

\subsubsection{Every Transformer block compiles exactly}
\label{sec:exact}

\begin{theorem}[Exact compilation into the strict basis]
\label{thm:strict}
Every finite Transformer block composed of transforms in $\Ttr$ is computed exactly by a finite graph over $\Bs(\Phi_{\mathrm{strict}})$.
\end{theorem}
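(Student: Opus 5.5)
The proof is compositional. Graphs over $\Bs(\Phi_{\mathrm{strict}})$ are closed under composition: the output nodes of one graph feed the input nodes of the next, and copy nodes supply fan-out for residual branches. A finite block is a finite composition of transforms in $\Ttr$. So it suffices to compile each of the six transforms exactly, on its whole domain, into a finite product-free graph, and then concatenate the pieces.

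Several transforms are immediate. $\Aff$ is a single contraction node. $\mathrm{MLP}_{\gelu}$ is affine, then the unary map $\gelu$, then affine. If the approximation $x\,\sigma(1.702x)$ is used instead, the product $x\cdot\sigma(1.702x)$ is an isolated activation--activation product, which I compile by polarization: an affine node forms $x\pm\sigma(1.702x)$, two square nodes follow, and an affine node takes $\tfrac14$ of the difference.

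For $\mathrm{Score}$ and $\mathrm{Mix}$, each entry is a bilinear form $\sum_l a_l b_l$. I write it as
\[
\tfrac12\Bigl[\sum_l (a_l+b_l)^2-\sum_l a_l^2-\sum_l b_l^2\Bigr].
\]
The routing nodes broadcast the rows of $Q$ and $K$ (respectively rows of $P$ and columns of $V$) into all index pairs. An affine node forms the sums, a square node squares them, and a final affine node applies the signs, the $\tfrac12$ and the $1/\sqrt d$. The self-squares $\sum_l a_l^2$ are computed once per row and shared. All coefficients are input-independent, so these are legitimate contraction nodes.

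For $\LN$, an affine node produces $x-\mean(x)$, a square node and an affine node give $\Var(x)+\eps$, and $(\cdot)^{-1/2}$ gives $r=(\Var(x)+\eps)^{-1/2}$. The remaining product $(x_i-\mean(x))\,r$ is again an activation--activation product, handled by polarization (or by the shared-square identity, since $r^2$ is available as a square). The trailing $\gamma\odot$ and $\beta$ are fixed diagonal-affine.

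The main obstacle is $\softmax$, because $\exp$ and division are not in $\Phi_{\mathrm{strict}}$. First, a max node (routing) subtracts $m=\max_j s_j$ by an affine map, so that $t_j=s_j-m\le 0$. Then I use $e^{t}=\sigma(t)/\sigma(-t)$: the unary maps give $\sigma(t_j)$ and $\sigma(-t_j)$, and the reciprocal $1/\sigma(-t_j)=\bigl(\sigma(-t_j)^{-1/2}\bigr)^2$ is legitimate because $\sigma>0$. A polarized product then yields $u_j=e^{t_j}$. Next, an affine node gives $Z=\sum_j u_j$. Since $Z\ge1$ (the max entry contributes $1$), $1/Z=(Z^{-1/2})^2$ is exact. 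A final polarized product gives $u_j/Z$.

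What needs care is that each identity holds exactly for all real inputs, with positivity guaranteed before every $(\cdot)^{-1/2}$: $\Var+\eps>0$, $\sigma>0$ and $Z\ge1$. Masked positions are routed out (or set by a fixed affine offset and excluded from $Z$), and the max uses the fixed tie-breaking rule, which does not affect the value. Concatenating the compiled transforms gives the finite strict graph.
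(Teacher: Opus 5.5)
Your proposal is correct and follows the paper's proof essentially step for step. The paper also compiles each transform separately and uses closure under composition: polarization for $x\,\sigma(1.702x)$; the norm expansion with self-squares shared per row for $\mathrm{Score}$ and $\mathrm{Mix}$; a shared-square product $u_i\cdot(\Var+\eps)^{-1/2}$ in LayerNorm; and, in softmax, a max shift, $e^{t}=\sigma(t)\,\bigl(\sigma(-t)^{-1/2}\bigr)^2$ and $1/Z=(Z^{-1/2})^2$ with $Z\ge1$.
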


We call the graph of Theorem~\ref{thm:strict} the \emph{strict graph}. A compilation is \emph{transform-wise} when it replaces each transform of the block by its own graph that computes that transform exactly; the graphs then compose. The strict graph is transform-wise: Lemmas~\ref{lem:basic}--\ref{lem:softmax} in Methods give the graph of each transform.

\subsubsection{Strict compilation pays an unavoidable pointwise cost}
\label{sec:cost}

A multiply--accumulate (MAC) array computes products of matrices and reuses each loaded number many times. A unary map takes one number and returns one, so it runs on a separate, slower scalar, vector or lookup-table unit. We count the two kinds of work separately.

\begin{definition}[Pointwise, contraction and added work]
\label{def:work}
For a graph $G$, the pointwise work $\PW(G)$ is the number of scalar evaluations of unary maps in one forward pass; it counts evaluations, not nodes. The contraction work $\MAC(G)$ is the number of multiply--accumulate steps in its fixed affine nodes and in its products of two activations. The \emph{source graph} of a transform $T$ is the graph of its defining formula in Definition~\ref{def:transforms}, with each product of two activations as one product node; $\PW(T)$ and $\MAC(T)$ are its work. $\mathrm{Score}$ is one product node with $\PW=0$ and $\MAC=N^2d$ per head, LayerNorm forms its second moment with a product node and pays one inverse square root per token, and softmax pays one exponential per entry and one reciprocal per row. If a transform-wise compilation replaces $T$ by the graph $G_T$, its added work is $\Delta\PW(T)=\PW(G_T)-\PW(T)$ and $\Delta\MAC(T)=\MAC(G_T)-\MAC(T)$.
\end{definition}

\begin{theorem}[Lower bound for product-free compilation]
\label{thm:lower}
Fix one attention head and any set $\Phi$ of scalar functions, and let the entries of $Q$, $K$, $P$ and $V$ range freely over $\R$. Every product-free graph that computes $\mathrm{Score}=QK^\top/\sqrt d$ exactly has $\PW\ge N^2$, and every product-free graph that computes $\mathrm{Mix}=PV$ exactly has $\PW\ge Nd$.
\end{theorem}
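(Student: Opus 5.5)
The plan is a linear-algebra counting argument. It shows that every product-free graph computes each output as an affine function of the inputs plus a fixed linear combination of its unary evaluations, once the routing decisions are frozen. Write the inputs of one head as $x\in\R^n$, with $x=(Q,K)$ for $\mathrm{Score}$ and $x=(P,V)$ for $\mathrm{Mix}$. Let $s_1(x),\dots,s_M(x)$ be the scalar outputs of all unary-map evaluations in one forward pass, so $M=\PW(G)$. The three node types in Definition~\ref{def:strict} behave as follows. Affine nodes are fixed. Unary nodes contribute new scalars $s_m$. Routing nodes copy, permute, mask or select entries. So once the outcome of every comparison made by the routing nodes is fixed, each node value is a fixed affine combination of the coordinates of $x$ and of the $s_m(x)$. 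The key step is induction along a topological order of the graph. This needs no regularity of the functions in $\Phi$, which may be arbitrary.

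Next I handle the routing. The graph is finite and each comparison uses a fixed tie-breaking rule, so only finitely many routing patterns $\pi$ exist. The pattern sets $S_\pi=\{x:\text{pattern }\pi\text{ occurs}\}$ partition $\R^n$. On $S_\pi$, every output entry satisfies
\[
f_{ij}(x)=c^\pi_{ij}+\ell^\pi_{ij}(x)+\sum_{m=1}^M a^\pi_{ijm}\,s_m(x),
\]
with constants $c^\pi_{ij}$, linear maps $\ell^\pi_{ij}$ and coefficients $a^\pi_{ijm}$ that depend only on $\pi$. Here $f_{ij}=q_i\cdot k_j/\sqrt d$ for $\mathrm{Score}$ and $f_{ij}=\sum_l P_{il}V_{lj}$ for $\mathrm{Mix}$.

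Now suppose $M<N^2$ for $\mathrm{Score}$. Then for each $\pi$ the $N^2\times M$ coefficient matrix $(a^\pi_{ijm})$ has a nonzero left null vector $\lambda^\pi$. This gives $\sum_{ij}\lambda^\pi_{ij}f_{ij}(x)=\text{affine}(x)$ on $S_\pi$. Consequently the polynomial $R_\pi(x)=\sum_{ij}\lambda^\pi_{ij}\,q_i\cdot k_j-\text{affine}(x)$ vanishes on $S_\pi$. The forms $q_i\cdot k_j$ are linearly independent quadratic forms: they involve disjoint sets of monomials $q_{il}k_{jl}$ for distinct $(i,j)$, so none of their nontrivial combinations is affine. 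Hence $R_\pi$ is a nonzero polynomial. The product $\prod_\pi R_\pi$ is then a nonzero polynomial that vanishes on $\bigcup_\pi S_\pi=\R^n$, which is impossible. Therefore $\PW\ge N^2$. The same argument with the $Nd$ bilinear forms $\sum_l P_{il}V_{lj}$ gives $\PW\ge Nd$ for $\mathrm{Mix}$. These forms are also independent modulo affine functions, because for distinct $(i,j)$ they have disjoint monomials $P_{il}V_{lj}$. The scaling $1/\sqrt d$ is irrelevant.

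I expect the main obstacle to be the routing step. Comparisons of values computed through arbitrary, possibly discontinuous, $\phi$ can produce pattern sets $S_\pi$ that are neither open nor measurable. For that reason I use the Zariski-type argument, a product of finitely many nonzero polynomials, rather than derivatives or Hessians on an open set. One routing node needs care: selection by $\TopK$ or gather returns an input-dependent entry. It is still a copy of one specific earlier value once $\pi$ is fixed, so I include the selected index in the pattern, which keeps the number of patterns finite. The main structural claim also needs checking: an evaluation counts once per scalar and not once per node, so a unary node applied to a vector of length $r$ contributes $r$ distinct $s_m$. This matches the counting of $\PW$ in Definition~\ref{def:work}.
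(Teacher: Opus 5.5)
Your proof is correct. It shares the paper's affine reduction (Lemma~\ref{lem:affine}) and its private-monomial independence argument, but it handles the routing differently.

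The paper uses a Baire-type step: a finite union of nowhere dense sets is nowhere dense, so one routing cell $D$ is dense in an open ball $B$ (Lemma~\ref{lem:region}). It then bounds $\PW$ by the nonlinear rank $\nrk_D$ (Lemma~\ref{lem:nrk}). Finally it uses continuity to lift an affine identity from $D$ to $B$, and from there to a polynomial identity (Lemma~\ref{lem:quad}).

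You instead attach to every cell $S_\pi$ a nonzero polynomial $R_\pi$ that vanishes on it, and multiply them. Since $\R[x]$ is an integral domain, $\prod_\pi R_\pi$ is a nonzero polynomial vanishing on all of $\R^n$, which is impossible. This is shorter and purely algebraic, with no density or continuity step. It also adapts at once to Corollary~\ref{cor:sparse}, because a nonzero polynomial cannot vanish on the open set $U$ where the selection is constant.

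The paper's route buys two things. First, it produces a single region on which the whole graph is a fixed affine function of its unary outputs; Supplementary Section~S1 reuses this region to read a squares-only graph as an arithmetic circuit. Second, its bound through $\nrk_D$ applies to any continuous target whose nonlinear rank is large on every ball. Your product trick needs residuals whose zero sets cannot finitely cover the space. That holds for polynomial or real-analytic targets, but not for continuous ones in general.
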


The source graphs of $\mathrm{Score}$ and $\mathrm{Mix}$ make no unary evaluation, so the bounds hold for the added work $\Delta\PW$. They hold for every depth, width and choice of affine coefficients, and they count evaluations, so fusing nodes does not lower them. On inputs where its routing is fixed, a product-free graph is an affine function of its unary outputs (Methods, Lemma~\ref{lem:affine}). It therefore needs at least as many unary evaluations as its target has coordinates that stay linearly independent modulo affine functions; for $\mathrm{Score}$ these are all $N^2$ entries.

The strict graph pays $N^2d+2Nd$ unary evaluations per head in $\mathrm{Score}$ and $N^2d+N^2+Nd$ in $\mathrm{Mix}$, a factor $d+2d/N$ above the bound of Theorem~\ref{thm:lower} for $\mathrm{Score}$ and $N+N/d+1$ above it for $\mathrm{Mix}$. Supplementary Section~S1 shows how far the bound of Theorem~\ref{thm:lower} can be raised.

For one DeiT-Tiny block with dense attention (Table~\ref{tab:pw}), the strict graph makes $61$ times the unary evaluations of the source block, and $92\%$ of them fall in $\mathrm{Score}$ and $\mathrm{Mix}$.

\begin{table}[t]
\caption{\textbf{Pointwise work per DeiT-Tiny block.} Unary evaluations per instance of each transform; $N=197$ tokens, $h=3$ heads of dimension $d=64$, MLP width $4hd$. Source: the source graph (Definition~\ref{def:work}), with GELU and dense attention. Bound: Theorem~\ref{thm:lower} for dense scores, Corollary~\ref{cor:sparse} for the $k$ routed scores. Strict: Theorem~\ref{thm:strict}; routed: Proposition~\ref{prop:routed}, with $k=16$ in the DeiT-Tiny rows. Hadamard: Proposition~\ref{prop:hsr}. Extended: Theorem~\ref{thm:ext}. Block totals count both LayerNorms.}
\label{tab:pw}
\centering
\footnotesize
\setlength{\tabcolsep}{3pt}
\begin{tabular}{l cccccc}
\toprule
Transform & Source & Bound, dense / routed & Strict & Strict, routed & Hadamard & Extended\\
\midrule
\multicolumn{7}{l}{\emph{Symbolic}}\\
$\LN$ & $N$ & & $2Nhd{+}2N$ & $2Nhd{+}2N$ & $N$ & $N$\\
$\softmax$ & $hN^2{+}hN$ & & $8hN^2{+}2hN$ & $8hNk{+}2hN$ & $3hN^2{+}hN$ & $hN^2{+}hN$\\
$\mathrm{MLP}_{\gelu}$ & $4Nhd$ & & $4Nhd$ & $4Nhd$ & $4Nhd$ & $4Nhd$\\
$\mathrm{Score}$ & $0$ & $hN^2$ / $hNk$ & $hN^2d{+}2hNd$ & $hNkd{+}2hNd$ & $0$ & $0$\\
$\mathrm{Mix}$ & $0$ & $hNd$ / $hNd$ & $hN^2d{+}hN^2{+}hNd$ & $hNkd{+}hNk{+}hNd$ & $0$ & $0$\\
\midrule
\multicolumn{7}{l}{\emph{DeiT-Tiny}}\\
$\LN$ & 197 & & $7.60\times10^{4}$ & $7.60\times10^{4}$ & 197 & 197\\
$\softmax$ & $1.17\times10^{5}$ & & $9.33\times10^{5}$ & $7.68\times10^{4}$ & $3.50\times10^{5}$ & $1.17\times10^{5}$\\
$\mathrm{MLP}_{\gelu}$ & $1.51\times10^{5}$ & & $1.51\times10^{5}$ & $1.51\times10^{5}$ & $1.51\times10^{5}$ & $1.51\times10^{5}$\\
$\mathrm{Score}$ & 0 & $1.16\times10^{5}$ / 9456 & $7.53\times10^{6}$ & $6.81\times10^{5}$ & 0 & 0\\
$\mathrm{Mix}$ & 0 & $3.78\times10^{4}$ / $3.78\times10^{4}$ & $7.61\times10^{6}$ & $6.52\times10^{5}$ & 0 & 0\\
\midrule
Block total & $2.69\times10^{5}$ & & $1.64\times10^{7}$ & $1.71\times10^{6}$ & $5.02\times10^{5}$ & $2.69\times10^{5}$\\
\bottomrule
\end{tabular}
\end{table}

\subsubsection{Activation--activation products remove the cost}
\label{sec:ext}

Theorem~\ref{thm:lower} names the missing operation type: the product of two activations. Every NPU we survey provides it as an elementwise product, together with the sigmoid (Supplementary Table~\ref{tab:s-npu}).

\begin{definition}[Hadamard basis]
\label{def:hadamard}
The Hadamard basis $\Bh(\Phi)$ has the three node types of $\Bs(\Phi)$ and a fourth, the \emph{Hadamard product} $a\odot b$ of two activations with broadcasting.
\end{definition}

\begin{definition}[Hadamard unary set]
\label{def:phih}
$\Phi_{\odot}=\{\sigma,\ (\cdot)^{-1/2},\ \gelu\}$.
\end{definition}

With the Hadamard product, the square is $x\odot x$, the exponential is $e^t=\sigma(t)\odot\bigl(\sigma(-t)^{-1/2}\odot\sigma(-t)^{-1/2}\bigr)$, and the reciprocal is $x^{-1/2}\odot x^{-1/2}$.

\begin{proposition}[Sigmoid and inverse square root suffice with the Hadamard product]
\label{prop:hsr}
Every finite Transformer block over $\Ttr$ is computed exactly by a finite graph over $\Bh(\Phi_{\odot})$ in which softmax adds $\Delta\PW=2N^2$ per head and every other transform adds $\Delta\PW=0$.
\end{proposition}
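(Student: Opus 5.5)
The proof is transform-wise. We give, for each transform in $\Ttr$, a graph over $\Bh(\Phi_{\odot})$ that computes it exactly, compare its unary evaluations with the source graph of Definition~\ref{def:work}, and compose. The key point is that the Hadamard product with broadcasting turns every product of two activations in a source graph into contraction-type work rather than pointwise work. A matrix product of two activations is a Hadamard product followed by a fixed summation. For $\mathrm{Score}$ we broadcast $Q$ to $N\times N\times d$ along the key axis and $K$ along the query axis, and take their Hadamard product. The fixed affine map that sums over the last axis and scales by $1/\sqrt d$ then gives $QK^\top/\sqrt d$ with no unary evaluation. $\mathrm{Mix}=PV$ is built the same way, summing over the key axis. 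Both therefore have $\Delta\PW=0$, matching the source $\PW=0$.

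$\Aff$ is a single affine node. $\mathrm{MLP}_{\gelu}$ uses its own $N\cdot 4hd$ evaluations of $\gelu\in\Phi_\odot$, so $\Delta\PW=0$. Under the sigmoid approximation, $x\,\sigma(1.702x)$ is one sigmoid evaluation followed by a Hadamard product with $x$. This matches the source count, since the approximate GELU is itself the source activation.

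For LayerNorm, centering is affine. The squares $(x_i-\mean)^2$ are self-Hadamard products, and averaging and adding $\eps$ is affine. One inverse square root per token is followed by a broadcast Hadamard product with the centered vector and an affine map with $\gamma,\beta$. This gives exactly the source's one unary evaluation per token, so $\Delta\PW=0$.

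Softmax is where the cost arises, and it is the main obstacle. The task is to avoid $\exp$ and reciprocal while keeping the count at exactly $2N^2$ extra evaluations per head. First, subtract the row max, which is routing plus an affine map, so that $t\le 0$. Then use $e^t=\sigma(t)\,(\sigma(-t)^{-1/2})^2$. Per score entry this costs two sigmoids and one inverse square root, three evaluations against the source's one exponential, which adds $2N^2$ per head.

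The row normalization must add nothing beyond that. The row sum $s$ is affine, and $1/s=s^{-1/2}\odot s^{-1/2}$ uses one inverse square root per row. This replaces the source's one reciprocal per row, so the row term adds zero. The final division is a broadcast Hadamard product.

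Validity of the domain needs checking. $\sigma(-t)>0$ always, and $s\ge1$ after the max shift, so $(\cdot)^{-1/2}$ is always evaluated at positive arguments. The max shift leaves softmax unchanged exactly, and routing has a fixed tie-breaking rule. This gives exactly $3N^2+N$ evaluations per head, consistent with Table~\ref{tab:pw}.

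Finally, we compose the transform graphs, with residuals as affine nodes and masks as routing and affine maps. The result is a finite graph over $\Bh(\Phi_{\odot})$. Its added work is $2N^2$ per head in softmax and zero elsewhere.
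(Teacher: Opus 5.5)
Your proof is correct and takes essentially the paper's approach. The paper gets the same graph by rewriting the strict graph: it replaces each polarization or norm-expansion subgraph by one Hadamard node and forms each remaining square as $x\odot x$. You instead build each transform's Hadamard realization directly. Both use $e^{t}=\sigma(t)\odot\bigl(\sigma(-t)^{-1/2}\odot\sigma(-t)^{-1/2}\bigr)$ and $1/Z=Z^{-1/2}\odot Z^{-1/2}$, and both count $3N^2+N$ unary evaluations per head in softmax against the source's $N^2+N$.
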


We call this graph the \emph{Hadamard graph}. Its products add $\Delta\MAC\le N^2d$ per head to $\mathrm{Score}$ and $\mathrm{Mix}$ and $2N^2+N$ to softmax. For one DeiT-Tiny block it makes $5.02\times10^{5}$ unary evaluations, against $1.64\times10^{7}$ for the strict graph (Table~\ref{tab:pw}).

The $2N^2$ evaluations of softmax come from building the exponential from sigmoids. With the exponential and the reciprocal as unary maps they vanish.

\begin{definition}[Extended unary set]
\label{def:phiext}
$\Phi_{\mathrm{ext}}=\{\exp,\ (\cdot)^{-1},\ (\cdot)^{-1/2},\ \gelu\}$.
\end{definition}

$\Phi_{\mathrm{ext}}$ is the set of scalar functions in the defining formulas of $\Ttr$: the exponential and the reciprocal of softmax, the inverse square root of LayerNorm and the MLP activation.

\begin{proposition}[The Hadamard product suffices]
\label{prop:hadamard}
Every finite Transformer block over $\Ttr$ is computed exactly by a finite graph over $\Bh(\Phi_{\mathrm{ext}})$ with $\Delta\PW(T)=0$ for every transform $T$. For $\mathrm{Score}$ and $\mathrm{Mix}$ a separate reduction node adds $\Delta\MAC\le N^2d$ per head; for every other transform $\Delta\MAC(T)=0$.
\end{proposition}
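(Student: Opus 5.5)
The proof will be transform-wise: for each $T\in\Ttr$ I build a graph over $\Bh(\Phi_{\mathrm{ext}})$ that computes $T$ exactly, and I compare its work with the source graph of Definition~\ref{def:work}. Composing these graphs then gives the whole block, as in Theorem~\ref{thm:strict}. The main idea is that the source graph of each transform already uses only affine maps, functions in $\Phi_{\mathrm{ext}}$, and products of two activations. So the only thing to check is that every activation--activation product in the source graph can be written as Hadamard products, and that this does not add unary evaluations. It may add contraction work only where a reduction becomes a separate node.

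I go through the transforms in order.

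$\Aff$ is a single fixed affine node, so $\Delta\PW=\Delta\MAC=0$.

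$\mathrm{MLP}_{\gelu}$ consists of two affine nodes and one $\gelu$ per hidden unit. Both are in the basis, so nothing is added. If the approximation $x\,\sigma(1.702x)$ is used, I read the source graph as one sigmoid evaluation and one product per unit. The product becomes a Hadamard product, which is the same MAC count, and the sigmoid is treated as in the convention of Definition~\ref{def:transforms}.

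For $\LN$, the mean and centering $x-\mean(x)$ are affine. The second moment is $\mathrm{mean}\bigl((x-\mean x)\odot(x-\mean x)\bigr)$: the Hadamard product computes the $d_{\text{model}}$ elementwise products and an affine averaging node sums them. Together these are the same multiply--accumulate steps that the source product node counts, so $\Delta\MAC=0$. Then comes one inverse square root per token, exactly as in the source. Finally, the product of the normalized vector with this per-token scalar is a broadcast Hadamard product, and $\gamma\odot(\cdot)+\beta$ is affine.

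For $\softmax$, I subtract the row maximum. This is routing plus an affine map and preserves exactness, but it is optional. Then I take $\exp$ entrywise ($N^2$ evaluations), the row sum (affine), one reciprocal per row ($N$ evaluations), and a broadcast Hadamard product. This matches the source counts of one exponential per entry and one reciprocal per row, so $\Delta\PW=\Delta\MAC=0$.

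For $\mathrm{Score}$ and $\mathrm{Mix}$ there is no unary map at all. I write $QK^\top$ by broadcasting $Q$ to shape $N\times1\times d$ and $K$ to $1\times N\times d$, take their Hadamard product ($N^2d$ scalar products), and reduce over the last axis with a fixed affine summation node that also absorbs the factor $1/\sqrt d$. $PV$ is handled the same way: broadcast to $N\times N\times d$, Hadamard product, and sum over the middle axis. This gives $\PW=0=\PW(T)$. On the MAC side, the Hadamard product reproduces the source's $N^2d$ multiplications, and the separate summation node adds at most one accumulate step per product, so $\Delta\MAC\le N^2d$ per head. Multiple heads are handled by routing (reshape and concatenation).

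I expect the main obstacle to be bookkeeping rather than mathematics. I need to fix a consistent accounting convention so that the reductions inside LayerNorm and softmax count as part of the source MACs, while the reduction inside Score and Mix counts as added. My plan is to apply Definition~\ref{def:work} literally. For LayerNorm the source product node already includes its accumulation, and the Hadamard-plus-mean split uses the same number of steps. For Score and Mix, the product node's $N^2d$ MACs are matched by the Hadamard multiplies, and the affine sum is charged as the extra $\le N^2d$. Exactness holds on all inputs because every node computes its defining formula identically. The reciprocal and inverse square root are only ever applied to positive arguments, namely $\Var+\eps>0$ and row sums of exponentials.
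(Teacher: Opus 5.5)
Your proposal is correct and follows the paper's proof essentially step for step. Each transform other than $\mathrm{Score}$ and $\mathrm{Mix}$ is realized by its own source graph, with its activation--activation products as Hadamard nodes: LayerNorm as $u\odot u$, an affine mean, one inverse square root and a broadcast rescaling, and softmax as one exponential per entry, an affine row sum, one reciprocal per row and a broadcast product. $\mathrm{Score}$ and $\mathrm{Mix}$ become a broadcast Hadamard product followed by a fixed reduction, and that reduction is what gives the added $\Delta\MAC\le N^2d$ per head.

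One point in your LayerNorm bookkeeping needs tidying. Read literally, $\Var(x)=\mean_i[(x_i-\mean(x))^2]$ makes the source graph an elementwise product followed by an affine mean, so your realization is node-for-node the source graph and $\Delta\MAC=0$ is immediate. Your justification instead says the source product node ``already includes its accumulation.'' Under that reading, your own $\mathrm{Score}$ and $\mathrm{Mix}$ argument would make the Hadamard-plus-mean split add one accumulate step per entry, so the justification should rest on the literal reading.
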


Matrix multiplication of two activations keeps the $N^2d$ products of $QK^\top$ and $PV$, which a Hadamard product writes to memory, inside the MAC array (Supplementary Section~S3).

\begin{definition}[Extended basis]
\label{def:ext}
The extended basis $\Be(\Phi)$ adds to $\Bh(\Phi)$ matrix multiplication (MatMul) and batched matrix multiplication (BMM) of two activations. With $\odot$ they form the \emph{activation--activation products}. For a common $\Phi$, $\Bs(\Phi)\subset\Bh(\Phi)\subset\Be(\Phi)$.
\end{definition}

\begin{theorem}[Exact compilation with no added work]
\label{thm:ext}
Every finite Transformer block over $\Ttr$ is computed exactly by a finite graph over $\Be(\Phi_{\mathrm{ext}})$, and this compilation has $\Delta\PW(T)=\Delta\MAC(T)=0$ for every transform $T$.
\end{theorem}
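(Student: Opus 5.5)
The proof is a transform-wise construction. Each transform $T\in\Ttr$ gets a graph $G_T$ over $\Be(\Phi_{\mathrm{ext}})$ that reproduces its source graph (Definition~\ref{def:work}) node for node, so that $\PW(G_T)=\PW(T)$ and $\MAC(G_T)=\MAC(T)$. Exactness and zero added work for the whole block then follow from composition, because the transform graphs are glued only by routing and affine maps, which add neither kind of work. The key observation is that $\Be(\Phi_{\mathrm{ext}})$ contains every operation type the source graphs use: fixed affine maps, the scalar functions $\exp$, $(\cdot)^{-1}$, $(\cdot)^{-1/2}$ and $\gelu$ (by Definition~\ref{def:phiext} these are exactly those of the defining formulas), and products of two activations as a single node.

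The transforms are handled one at a time.
\begin{enumerate}[leftmargin=1.5em,itemsep=0pt]
\item $\Aff$ is a single affine node.
\item $\mathrm{MLP}_{\gelu}$ is an affine node, a $\gelu$ unary node with $4Nhd$ evaluations (or, for the approximation, the form that the source itself uses), and a second affine node.
\item $\mathrm{Score}$ is one BMM node computing $QK^\top$. The factor $1/\sqrt d$ is absorbed into a fixed affine scaling, or into $W_Q$. This gives $\MAC=N^2d$ and $\PW=0$, as in the source.
\item $\mathrm{Mix}$ is one BMM node computing $PV$.
\item $\softmax$ is an affine row-max subtraction, if the source uses it; the max itself is routing. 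Then come one $\exp$ per entry, a row sum (an affine map with the same accumulation count as the source), one $(\cdot)^{-1}$ per row, and a broadcast Hadamard product to normalize. This gives $\PW=N^2+N$ per head, matching the source.
\item $\LN$ is built as follows: the centering $x-\mean(x)$ is affine; the second moment $\mean[(x-\mean x)^2]$ is a product node, realized as a Hadamard product followed by an affine mean, or as a single batched inner product; the term $+\eps$ is affine; one $(\cdot)^{-1/2}$ is applied per token; and the normalizing multiplication is a broadcast Hadamard product. Multiplying by $\gamma$ and adding $\beta$ are affine.
\end{enumerate}

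The main obstacle is bookkeeping, not expressivity. One must check that each activation--activation product in $G_T$ is counted in $\MAC$ exactly as the source graph counts its product nodes. In particular, the softmax normalization and the LayerNorm scaling must count as the same products as in the source, and the reductions must not introduce extra accumulation steps beyond the source. Where Proposition~\ref{prop:hadamard} paid $\Delta\MAC\le N^2d$ for $\mathrm{Score}$ and $\mathrm{Mix}$ because a Hadamard product needed a separate reduction node, the BMM node fuses the products and the accumulation. Its multiply--accumulate count then equals the source count $N^2d$ exactly, which is the point of adding MatMul and BMM in Definition~\ref{def:ext}.

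Finally, the count must respect the convention of Definition~\ref{def:work} for product nodes and broadcast, so that the LayerNorm and softmax normalizations give $\Delta\MAC=0$. With every $G_T$ equal to the source graph of $T$ up to renaming of node types, $\Delta\PW(T)=\Delta\MAC(T)=0$ holds for all $T$.
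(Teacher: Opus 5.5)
Your proposal is correct and is essentially the paper's proof. You realize each transform by its own source graph inside $\Be(\Phi_{\mathrm{ext}})$, use closure under composition for exactness, and read off $\Delta\PW(T)=\Delta\MAC(T)=0$ transform by transform, with one MatMul or BMM node for $\mathrm{Score}$ and $\mathrm{Mix}$, exponential, reciprocal and a broadcast Hadamard product for softmax, and a Hadamard product with an affine mean and one inverse square root per token for LayerNorm. For the non-attention transforms the paper simply cites the realizations from the proof of Proposition~\ref{prop:hadamard}, and your observation that BMM removes that proposition's separate reduction node is exactly why $\mathrm{Score}$ and $\mathrm{Mix}$ keep $\MAC=N^2d$ before and after.
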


We call the graph of Theorem~\ref{thm:ext} the \emph{extended graph}. It is the source graph of each transform (Definition~\ref{def:work}), so it adds no work: $QK^\top$ and $PV$ stay one matrix-multiplication node each, with $N^2d$ multiply--accumulate steps per head and no unary evaluation. Theorems~\ref{thm:lower} and~\ref{thm:ext} together make activation--activation products necessary and sufficient for a transform-wise compilation that adds no pointwise work.

To turn the bound of Theorem~\ref{thm:lower} into time, let $R_{\mathrm{mac}}$ and $R_{\mathrm{pw}}$ be the throughputs of the MAC array and the pointwise unit, and $\kappa=R_{\mathrm{mac}}/R_{\mathrm{pw}}$.

\begin{corollary}[Time separation]
\label{cor:time}
Fix an $h$-head block. Assume that unary maps run only on the pointwise unit, at rate at most $R_{\mathrm{pw}}$, and that the extended graph runs $\mathrm{Score}$ and $\mathrm{Mix}$ on the MAC array at rate $R_{\mathrm{mac}}$. In every transform-wise product-free compilation, the $\mathrm{Score}$ and $\mathrm{Mix}$ subgraphs take at least $h(N^2+Nd)/R_{\mathrm{pw}}$; in the extended graph they take $2hN^2d/R_{\mathrm{mac}}$. The first time is longer whenever $\kappa>2Nd/(N+d)$, in particular whenever $\kappa\ge 2d$.
\end{corollary}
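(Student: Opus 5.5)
The plan is to combine Theorem~\ref{thm:lower} with a throughput argument, treating the $h$ heads separately and then adding.

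\emph{Product-free lower bound.} In a transform-wise compilation, each head's $\mathrm{Score}$ transform is replaced by its own product-free graph that computes $QK^\top/\sqrt d$ exactly. Because the graph is transform-wise, its inputs $Q,K$ are the outputs of the preceding transforms. Theorem~\ref{thm:lower} is stated for free arguments, so I will apply it to the subgraph as a standalone graph on $\R^{N\times d}\times\R^{N\times d}$. It gives at least $N^2$ unary evaluations per head for $\mathrm{Score}$ and at least $Nd$ for $\mathrm{Mix}$.

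Summing over the $h$ heads gives $\PW\ge h(N^2+Nd)$ for the $\mathrm{Score}$ and $\mathrm{Mix}$ subgraphs. By assumption, unary maps run only on the pointwise unit, at rate at most $R_{\mathrm{pw}}$. These evaluations therefore take time at least $h(N^2+Nd)/R_{\mathrm{pw}}$. This holds whatever else runs in parallel, since the MAC array cannot perform unary evaluations.

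\emph{Extended graph.} By Theorem~\ref{thm:ext}, the extended graph realizes $\mathrm{Score}$ and $\mathrm{Mix}$ as their source graphs. These are one MatMul node each, with $N^2d$ multiply--accumulate steps per head and no unary evaluation. The scaling $1/\sqrt d$ is folded into an affine node. The total is $2hN^2d$ MACs. At rate $R_{\mathrm{mac}}$ this takes $2hN^2d/R_{\mathrm{mac}}$, which I read as the stated time under the assumption of full utilization.

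\emph{Comparison.} The product-free time exceeds the extended one iff
\[
\frac{h(N^2+Nd)}{R_{\mathrm{pw}}}>\frac{2hN^2d}{R_{\mathrm{mac}}}
\iff \kappa>\frac{2N^2d}{N^2+Nd}=\frac{2Nd}{N+d}.
\]
Since $N/(N+d)<1$, we have $2Nd/(N+d)<2d$, so $\kappa\ge 2d$ suffices.

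\emph{Main obstacle.} The delicate step is justifying that Theorem~\ref{thm:lower}, proved for free arguments, applies inside a block, where $Q,K,P,V$ are constrained. For example, $P$ has rows on the simplex, and causal masks fix some entries. Transform-wise compilation is what resolves this. The subgraph $G_T$ must compute $T$ exactly as a transform, that is, on all of its domain, independently of context. So I will define ``computes that transform exactly'' to mean on the full domain $\R^{N\times d}$ (or $\R^{N\times N}\times\R^{N\times d}$), and point out that this is the reading under which the corollary is stated.
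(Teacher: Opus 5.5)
Your proof is correct and follows the paper's argument. The steps match: per-head $\mathrm{Score}$ and $\mathrm{Mix}$ subgraphs that are exact for all arguments and, being transform-wise, share no unary evaluations, which is what lets you sum the bounds of Theorem~\ref{thm:lower} to $h(N^2+Nd)$; then the rate bound on the pointwise unit; then the $2hN^2d$ multiply--accumulate count of the extended graph; and finally the same algebra $\kappa>2N^2d/(N^2+Nd)=2Nd/(N+d)<2d$. The paper resolves your ``main obstacle'' exactly as you propose, by requiring each subgraph to compute its transform exactly for all arguments.
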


The strict graph makes more than $2hN^2d$ unary evaluations in $\mathrm{Score}$ and $\mathrm{Mix}$, so on them it is at least $\kappa$ times slower than the extended graph. For DeiT-Tiny ($N=197$, $d=64$) the threshold $2Nd/(N+d)$ is $96.6$. The extended basis maps onto a current NPU without a Transformer-specific unit; Figure~\ref{fig:overview} shows the unit that runs each node.

\begin{figure}[t]
\centering
\begin{tikzpicture}[
  font=\footnotesize,
  box/.style={draw, rounded corners=2pt, align=center, anchor=north west, inner xsep=0.1cm, inner ysep=0.1cm},
  gcell/.style={box},
  ucell/.style={box, fill=black!5},
  head/.style={font=\footnotesize\bfseries, anchor=south, align=center},
  arr/.style={-Latex, thick}
]
\node[gcell, text width=2.2cm, minimum height=6.0cm] (src) at (0,0) {\textbf{Transformer block}\\[1pt] LN, $QK^\top$,\\ softmax, $PV$,\\ GELU-MLP,\\ top-$k$};
\node[gcell, text width=3.0cm, minimum height=1.8cm] (s) at (3.2,0) {\textbf{Strict graph}\\ (Theorem~\ref{thm:strict})\\[1pt] adds about $2hN^2d$\\ squares $QK^\top, PV$};
\node[gcell, text width=3.0cm, minimum height=1.8cm] (hd) at (3.2,-2.1) {\textbf{Hadamard graph}\\ (Proposition~\ref{prop:hsr})\\[1pt] adds $2hN^2$ unary\\ maps in softmax};
\node[gcell, text width=3.0cm, minimum height=1.8cm] (e) at (3.2,-4.2) {\textbf{Extended graph}\\ (Theorem~\ref{thm:ext})\\[1pt] adds no work};
\node[head] at (8.2,0.05) {MAC array};
\node[head] at (11.8,0.05) {LUT / PWL unit};
\node[head] at (14.9,0.05) {Routing};
\node[ucell, text width=2.6cm, minimum height=1.8cm] at (6.8,0) {fixed affine maps};
\node[ucell, text width=3.4cm, minimum height=1.8cm] at (10.0,0) {square, sigmoid,\\ inverse square root,\\ GELU$^*$};
\node[ucell, text width=1.6cm, minimum height=1.8cm] at (14.0,0) {routing, $\TopK$};
\node[ucell, text width=2.6cm, minimum height=1.8cm] at (6.8,-2.1) {fixed affine maps,\\ Hadamard $\odot$};
\node[ucell, text width=3.4cm, minimum height=1.8cm] at (10.0,-2.1) {sigmoid,\\ inverse square root,\\ GELU$^*$};
\node[ucell, text width=1.6cm, minimum height=1.8cm] at (14.0,-2.1) {routing, $\TopK$};
\node[ucell, text width=2.6cm, minimum height=1.8cm] at (6.8,-4.2) {fixed affine maps,\\ $QK^\top$, $PV$,\\ Hadamard $\odot$};
\node[ucell, text width=3.4cm, minimum height=1.8cm] at (10.0,-4.2) {exponential, reciprocal,\\ inverse square root,\\ GELU$^*$  or sigmoid};
\node[ucell, text width=1.6cm, minimum height=1.8cm] at (14.0,-4.2) {routing, $\TopK$};
\draw[arr] (2.4,-3.0) -- (3.2,-0.9);
\draw[arr] (2.4,-3.0) -- (3.2,-3.0);
\draw[arr] (2.4,-3.0) -- (3.2,-5.1);
\end{tikzpicture}
\caption{\textbf{Three exact compilations of a Transformer block.} Each row lists the nodes of one graph under the NPU unit that runs them; $N$ tokens, $h$ heads of dimension $d$. $^*$if used in the source graph.}
\label{fig:overview}
\end{figure}

\subsubsection{Attention weights cluster on a few keys}
\label{sec:clustering}

Fix one query $q_i$, the keys $k_1,\dots,k_N$ and the values $v_1,\dots,v_N$, the rows of $K$ and $V$, and $1\le k<N$. Under a mask, $N$ counts the valid keys of the row. The scores are $s_j=q_i\cdot k_j/\sqrt d$, sorted as $s_{(1)}\ge\dots\ge s_{(N)}$, the attention distribution is $\pi=\softmax(s)$, and the attention output is $o=\sum_j\pi_jv_j$. Top-$k$ attention keeps the support $I=\TopK(s,k)$ and drops the other $m=N-k$ keys, the set $J$. The tail mass $\tau=\sum_{j\in J}\pi_j$ is the attention outside $I$, and $\Delta_k=s_{(k)}-s_{(k+1)}$ is the score gap. The attention weights cluster on $I$ when $\Delta_k$ is large. Queries cluster too.

\begin{lemma}[Score gap]
\label{lem:gap}
$\tau\le\frac mk\,e^{-\Delta_k}$.
\end{lemma}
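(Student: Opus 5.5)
The plan is to write the tail mass as a ratio of exponential sums and bound the numerator and denominator separately, using the gap $\Delta_k$ to compare each tail term with each head term. Set $Z=\sum_{j=1}^N e^{s_j}$, so that $\tau=\sum_{j\in J}e^{s_j}/Z$. Since every term of $Z$ is positive, dropping the tail terms from the denominator only increases the ratio, and we get
\[
\tau\;\le\;\frac{\sum_{j\in J}e^{s_j}}{\sum_{i\in I}e^{s_i}}.
\]

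Next we bound the two sums by the scores at the cut. By definition of $I=\TopK(s,k)$, every kept score satisfies $s_i\ge s_{(k)}$. Every dropped score satisfies $s_j\le s_{(k+1)}$. This holds for any fixed tie-breaking rule, since ties only occur when $\Delta_k=0$, and then the inequalities still hold with equality. Hence
\[
\sum_{i\in I}e^{s_i}\ge k\,e^{s_{(k)}},\qquad \sum_{j\in J}e^{s_j}\le m\,e^{s_{(k+1)}}.
\]
Dividing the second bound by the first gives $\tau\le\frac mk\,e^{s_{(k+1)}-s_{(k)}}=\frac mk\,e^{-\Delta_k}$, which is the claim.

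Under a mask, masked entries carry weight zero. We therefore apply the same argument to the valid keys only, with $N$ counting them as stated. There is no real obstacle here. The only point to check is that the first step, discarding $\sum_{j\in J}e^{s_j}$ from $Z$, loses nothing needed: it costs at most a factor $1+\frac mk e^{-\Delta_k}$, and the stated bound does not require recovering it. For a slightly sharper form one can keep that term and obtain $\tau\le x/(1+x)$ with $x=\frac mk e^{-\Delta_k}$, which implies the lemma.
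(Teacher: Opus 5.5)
Your proposal is correct and follows the paper's own proof: you bound the normalizer below by $k\,e^{s_{(k)}}$ using only the kept keys, bound the tail sum above by $m\,e^{s_{(k+1)}}$, and divide. Your closing remark that one can keep the tail term to get the sharper bound $\tau\le x/(1+x)$ with $x=\frac mk e^{-\Delta_k}$ is also correct, and it implies the lemma.
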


\begin{lemma}[Shared support]
\label{lem:shared}
Let $c\in\R^d$ and $\rho\ge0$. If $\|q_i-c\|\le\rho$ and $\Delta_k>2\rho\max_j\|k_j\|/\sqrt d$, the scores $c\cdot k_j/\sqrt d$ have the same top $k$ keys $I$ as $q_i$.
\end{lemma}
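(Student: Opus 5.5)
The plan is a perturbation argument: the scores of $c$ differ from those of $q_i$ by at most $\rho\max_j\|k_j\|/\sqrt d$ in every coordinate, and a uniform perturbation smaller than half the gap cannot change which keys lie above the gap. Write $M=\max_j\|k_j\|$, let $s_j=q_i\cdot k_j/\sqrt d$ as before, and define the centroid scores $s'_j=c\cdot k_j/\sqrt d$. By Cauchy--Schwarz, for every $j$,
\[
|s'_j-s_j|=\frac{|(c-q_i)\cdot k_j|}{\sqrt d}\le\frac{\|c-q_i\|\,\|k_j\|}{\sqrt d}\le\frac{\rho M}{\sqrt d}=:\delta ,
\]
and the hypothesis reads $2\delta<\Delta_k$.

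Next I compare kept and dropped keys. Take any $a\in I$ and $b\in J$. By definition of $I=\TopK(s,k)$ we have $s_a\ge s_{(k)}$ and $s_b\le s_{(k+1)}$. Then
\[
s'_a-s'_b\ge(s_a-\delta)-(s_b+\delta)\ge s_{(k)}-s_{(k+1)}-2\delta=\Delta_k-2\delta>0 .
\]
So every key of $I$ has a strictly larger centroid score than every key of $J$. The $k$ largest entries of $s'$ are therefore exactly the keys in $I$, so $\TopK(s',k)=I$.

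The only delicate point is ties, and I expect it to cause no real obstacle. The hypothesis forces $\Delta_k>0$ whenever $\rho>0$, or $\Delta_k>0$ outright, because the inequality is strict. Hence $I$ is uniquely determined for $q_i$, regardless of the fixed tie-breaking rule. The strict inequality just derived separates $I$ from $J$ under $s'$, so $\TopK(s',k)$ is also unique as a set. Ties inside $I$ or inside $J$ affect only the order of keys, not the selected set. Under a mask, the same argument applies with $N$ counting only the valid keys of the row.
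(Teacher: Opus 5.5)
Your proposal is correct and follows essentially the same route as the paper's proof. Both bound each score perturbation by $\rho\max_j\|k_j\|/\sqrt d$ via Cauchy--Schwarz, then show $s'_a-s'_b\ge\Delta_k-2\rho\max_j\|k_j\|/\sqrt d>0$ for every $a\in I$, $b\in J$. Your remark on ties is a harmless addition: the hypothesis forces $\Delta_k>0$ in every case, since its right-hand side is nonnegative.
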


By Lemma~\ref{lem:shared}, queries within $\rho$ of one point $c$ share their top $k$ keys when each meets the gap condition, and by Lemma~\ref{lem:gap} each of them puts at most $\frac mk e^{-\Delta_k}$ of its attention outside these keys.

\subsubsection{Top-$k$ attention with a tail-mass estimate}
\label{sec:topk}

The softmax over the kept scores, $\tilde\pi_j=\exp(s_j)/\sum_{l\in I}\exp(s_l)$ for $j\in I$, needs only $s_I$. The dense distribution on $I$ is $\pi_j=(1-\tau)\tilde\pi_j$. Write $a=\sum_{j\in I}\tilde\pi_jv_j$ for the mean of the kept values under $\tilde\pi$, $\bar u=\frac1m\sum_{j\in J}v_j$ for the plain mean of the dropped values, and $b=\frac1\tau\sum_{j\in J}\pi_jv_j$ for their mean under $\pi$; $\tau>0$ since every softmax weight is positive. Then $o=(1-\tau)a+\tau b$.

\begin{definition}[Tail-mass top-$k$ attention]
\label{def:tail}
For a tail mass $q\in[0,1]$, the estimate $\hat p$ of $\pi$ and the output $\hat o$ are
\[
\hat p_j=(1-q)\,\tilde\pi_j\ \ (j\in I),\qquad
\hat p_j=\frac{q}{m}\ \ (j\in J),\qquad
\hat o=(1-q)\,a+q\,\bar u .
\]
\emph{Calibrated} mode uses one constant $q$ per block, the mean tail mass of the reference model over calibration data. \emph{Exact} mode uses $q=\tau$. \emph{Tail drop}, the usual top-$k$ attention, uses $q=0$. For $k=0$, $I=\varnothing$, $m=N$ and $q=1$, so $\hat p_j=1/N$ and $\hat o$ is the mean of the values; calibrated and exact mode coincide. For $k\ge N$ the map is dense attention.
\end{definition}

The sparse transform set $\Tkq$ is $\Ttr$ with softmax and value mixing replaced by this map. This changes the model. Its source graph, used to count work, computes $\hat o$ in the all-values form
\begin{equation}
\label{eq:allvalues}
\hat o=\sum_{j\in I}\Bigl(\hat p_j-\frac qm\Bigr)v_j+\frac qm\sum_{j=1}^Nv_j ,
\end{equation}
which reads only the kept values and the sum of all values. Value mixing then costs $Nkd$ multiply--accumulate steps per head instead of $N^2d$, plus the sum of all values, kept as a prefix sum under a causal mask. Exact mode also forms all $N^2$ scores for the dense normalizer.

\begin{proposition}[Sparse attention compiles exactly]
\label{prop:sparse}
Every finite block over $\Tkq$ is computed exactly by a finite graph over $\Bs(\Phi_{\mathrm{strict}})$, and by a finite graph over $\Be(\Phi_{\mathrm{ext}})$ with $\Delta\PW(T)=\Delta\MAC(T)=0$ for every transform $T$.
\end{proposition}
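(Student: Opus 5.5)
The plan is to reuse the transform-wise constructions behind Theorem~\ref{thm:strict} and Theorem~\ref{thm:ext}. $\Tkq$ differs from $\Ttr$ only in replacing $\softmax$ and $\mathrm{Mix}$ by the tail-mass map of Definition~\ref{def:tail}. So it suffices to give, for this one new transform, a graph over $\Bs(\Phi_{\mathrm{strict}})$ and a graph over $\Be(\Phi_{\mathrm{ext}})$. The latter must coincide with its source graph, the all-values form~\eqref{eq:allvalues}. The remaining transforms are compiled exactly as before, and the graphs compose.

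\textbf{Step 1: support and gathers.} Selecting $I=\TopK(s,k)$ per row and gathering $s_I$ and $v_I$ are routing nodes with fixed tie-breaking. They are allowed in both bases and cost no pointwise or contraction work. The count $m=N-k$ is fixed by the graph, and under a mask it is fixed per row. The scores entering $\TopK$ are computed by the $\mathrm{Score}$ graph already available: the strict one from Theorem~\ref{thm:strict}, or a single MatMul node in the extended basis.

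\textbf{Step 2: the kept softmax.} $\tilde\pi$ is the softmax over $s_I$, a row of length $k$. I compile it with the softmax graph of Theorem~\ref{thm:strict}: $e^t=\sigma(t)/\sigma(-t)$, the reciprocal as $(x^{-1/2})^2$ with $x>0$, and products formed from squares. In the extended basis it is one $\exp$ per entry, one reduction and one reciprocal.

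\textbf{Step 3: the tail mass.} In calibrated mode $q$ is a constant, so $(1-q)\tilde\pi_j-q/m$ is a fixed affine map of $\tilde\pi_j$. In exact mode I form $q=\tau=1-\sum_{l\in I}e^{s_l}/\sum_{l=1}^N e^{s_l}$ from all $N$ scores. The strict graph uses sigmoids and the squared inverse square root for this. The quotient is a product of two activations, which I write via polarization with squares. In $\Be$ it is a Hadamard product. The subsequent multiplications $q\cdot\tilde\pi_j$ are activation--activation products handled the same way.

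\textbf{Step 4: the output.} For~\eqref{eq:allvalues}, the sum $\sum_{j=1}^N v_j$ is a fixed affine map, a prefix sum under a causal mask. The contraction $\sum_{j\in I}w_jv_j$ with weights $w_j=\hat p_j-q/m$ is a product of gathered activations. In $\Bs$ I compile it by the shared-squares identity of the $\mathrm{Mix}$ lemma restricted to $k$ columns. In $\Be$ it is one BMM node of $Nkd$ MACs. The $k=0$ and $k\ge N$ cases reduce respectively to an affine mean and to dense attention, which is already covered.

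For the zero-added-work claim, the extended graph is literally the source graph node for node, as in the proof of Theorem~\ref{thm:ext}, so $\Delta\PW=\Delta\MAC=0$. The main obstacle is bookkeeping rather than mathematics. I must check that the source graph's definition (Definition~\ref{def:work}) for the new transform, including exact mode's dense normalizer and the prefix sum, matches the extended graph exactly. Otherwise the difference is not zero. I would fix this by defining the source graph of the tail-mass transform to be exactly the graph built in Steps~1--4 with $\Be$ primitives.
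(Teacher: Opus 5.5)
Your exactness argument is sound and follows the paper's route: transform-wise replacement, $\TopK$ and gather as routing, the all-values form~\eqref{eq:allvalues}, and the cases $k=0$ and $k\ge N$. The gap is in the zero-added-work claim for exact mode, and the fix you propose for it is circular. The source graph is fixed by the transform, not by your compilation. The paper fixes it as the all-values form with the $q$ of the mode, plus the dense normalizer in exact mode. So in exact mode, per row, it pays:
\begin{itemize}
\item one exponential per score and one reciprocal;
\item $k$ products for the kept weights;
\item $kd$ multiply--accumulate steps for the kept mixing;
\item $d$ products for the tail term.
\end{itemize}
Redefining the source graph as whatever Steps~1--4 produce makes $\Delta\PW=\Delta\MAC=0$ hold by definition and proves nothing. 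Measured against the actual source graph, your exact mode does more work. Step~2 builds $\tilde\pi$ with its own normalizer, which is a second reciprocal per row. Step~3 forms $\tau$ as an explicit quotient and then multiplies $1-\tau$ into every $\tilde\pi_j$. Hence $\Delta\PW\ge N$ and $\Delta\MAC\ge Nk$ per head.

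The missing observation is that exact mode never needs $\tilde\pi$, because $(1-\tau)\tilde\pi_j=\pi_j$ for $j\in I$. The paper forms $n_j=\exp(s_j-\max_l s_l)$ for all $j$, then $Z=\sum_j n_j$, and $\hat p_j=n_j Z^{-1}$ for $j\in I$ only. Then $\tau=1-\sum_{j\in I}\hat p_j$ and $\omega_j=\hat p_j-\tau/m$ are affine in $(\hat p,\tau)$, and the extended graph is the source graph node for node.

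Separately, Step~4 treats $\frac qm\sum_j v_j$ as a fixed affine map, which is true only in calibrated mode. In exact mode it is the product of the activation $\tau$ with the activation $\sum_j v_j$. It therefore needs a norm expansion or polarization in $\Bs(\Phi_{\mathrm{strict}})$, and a broadcast Hadamard node with $d$ scalar products in $\Be(\Phi_{\mathrm{ext}})$; these are the same $d$ products the source graph has. This omission does not affect exactness once repaired. Your calibrated-mode construction already coincides with the source graph, so only exact mode needs these changes.
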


Let $O=(I,s_I,V)$ be what top-$k$ attention reads: the kept keys, their scores and the values.

\begin{theorem}[Tail-mass estimate is unbiased and optimal]
\label{thm:tail}
Let the scores $s$ and the values $V$ be random, with $\E\|v_j\|^2<\infty$ for every $j$. Assume that the tail scores $(s_j)_{j\in J}$ are exchangeable given $O$: their conditional law is invariant under permutations of $J$.
\begin{enumerate}[label=(\roman*),leftmargin=2em,itemsep=0pt]
\item With $q=\E[\tau\mid O]$, the estimate satisfies $\hat p=\E[\pi\mid O]$ and $\hat o=\E[o\mid O]$. Both are unbiased, and they minimize $\E\|\hat p-\pi\|^2$ and $\E\|\hat o-o\|^2$ among all functions of $O$. In exact mode the same holds given $(O,\tau)$, with $q=\tau$.
\item Tail drop has conditional bias $\E[\hat p-\pi\mid O]=\E[\tau\mid O]\,(\tilde\pi_I,-\1_J/m)$ and $\E[\hat o-o\mid O]=\E[\tau\mid O]\,(a-\bar u)$. Its conditional distribution bias vanishes only when $\E[\tau\mid O]=0$.
\end{enumerate}
\end{theorem}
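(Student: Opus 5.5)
The plan is to compute $\E[\pi\mid O]$ coordinate by coordinate, and then get optimality and unbiasedness from the fact that a conditional expectation is the $L^2$ projection onto functions of $O$.

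\emph{Kept coordinates.} For $j\in I$ we have $\pi_j=(1-\tau)\tilde\pi_j$, and $\tilde\pi_j$ depends only on $s_I$, so it is $O$-measurable. Pulling it out of the conditional expectation gives $\E[\pi_j\mid O]=(1-\E[\tau\mid O])\tilde\pi_j=\hat p_j$ when $q=\E[\tau\mid O]$.

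\emph{Tail coordinates.} Write $\pi_j=e^{s_j}/Z$, where $Z=\sum_{l\in I}e^{s_l}+\sum_{l\in J}e^{s_l}$. The second sum is symmetric in the tail scores. So, for $j,j'\in J$, swapping $s_j$ and $s_{j'}$ maps $\pi_j$ to $\pi_{j'}$ and leaves $O$ fixed. Exchangeability given $O$ then yields $\E[\pi_j\mid O]=\E[\pi_{j'}\mid O]$. Summing over $J$ gives $m\,\E[\pi_j\mid O]=\E[\tau\mid O]$, hence $\E[\pi_j\mid O]=q/m=\hat p_j$. Note that the set $J$ itself is determined by $I$, so conditioning on $O$ fixes which indices are in the tail.

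\emph{Exact mode.} The same argument works with $O$ replaced by $(O,\tau)$. Here $\tau$ is a symmetric function of the tail scores, so conditioning on it preserves exchangeability (the problem statement phrases this as an assumption "given $(O,\tau)$"; I would check that the hypothesis transfers, or state it as such). Then $\E[\tau\mid O,\tau]=\tau$.

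\emph{Output.} By linearity $o=\sum_j\pi_jv_j$. The values $V$ are part of $O$, so each $v_j$ is $O$-measurable and $\E[o\mid O]=\sum_j\E[\pi_j\mid O]v_j=(1-q)a+q\bar u=\hat o$. Integrability follows from $0\le\pi_j\le1$ and $\E\|v_j\|^2<\infty$, since $\|o\|\le\sum_j\|v_j\|$.

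\emph{Optimality.} Both $\pi$ and $o$ are square integrable, so the conditional expectation is the orthogonal projection in $L^2$ onto $\sigma(O)$-measurable functions. It minimizes $\E\|f(O)-\pi\|^2$ and $\E\|f(O)-o\|^2$ among all functions of $O$, applied componentwise. Unbiasedness follows from the tower property: $\E[\hat p]=\E[\E[\pi\mid O]]=\E[\pi]$, and likewise for $\hat o$.

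\emph{Tail drop (part (ii)).} With $q=0$, $\hat p$ equals $\tilde\pi$ on $I$ and $0$ on $J$. Subtracting the conditional means computed above gives bias $\tilde\pi_j-(1-\E[\tau\mid O])\tilde\pi_j=\E[\tau\mid O]\tilde\pi_j$ on $I$, and $-\E[\tau\mid O]/m$ on $J$. The output bias is $a-(1-\E\tau)a-\E\tau\,\bar u=\E[\tau\mid O](a-\bar u)$. The tail entries $-\E[\tau\mid O]/m$ are nonzero unless $\E[\tau\mid O]=0$, so the distribution bias vanishes only then.

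The main obstacle is making the symmetry step rigorous. I would write $\pi_j=g_j(s_I,s_J)$ and show $g_j(s_I,\sigma s_J)=g_{\sigma(j)}(s_I,s_J)$ for every permutation $\sigma$ of $J$, together with the fact that $O$ is unchanged under such permutations. I also need to address whether $V$ in $O$ interacts with the exchangeability. The hypothesis is stated for the conditional law given $O$, which includes $V$, so this is fine as stated.
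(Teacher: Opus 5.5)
Your proposal is correct and follows the paper's proof essentially step for step. The shared steps are: $O$-measurability of $\tilde\pi_I$ for the kept coordinates, exchangeability together with $\sum_{j\in J}\E[\pi_j\mid O]=\E[\tau\mid O]$ for the tail coordinates, linearity of $o$ in $\pi$ with $V$ part of $O$ for the output, the $L^2$-projection property of conditional expectation for unbiasedness and optimality, and direct subtraction for part (ii). Your hedge in exact mode is unnecessary: your own observation that $\tau$ is a symmetric function of the tail scores given $s_I$ is exactly how the paper shows that exchangeability given $O$ carries over to exchangeability given $(O,\tau)$.
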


Theorem~\ref{thm:tail} concerns the attention distribution and the output of one head, not the loss of the whole model. Part (i) covers exact mode, and calibrated mode when $\tau$ is mean-independent of $O$. Among constant tail masses, $\E(q-\tau)^2=(q-\E\tau)^2+\Var\tau$ is minimal at $q=\E\tau$, which the calibrated $q$ estimates.

\begin{proposition}[Error decomposition]
\label{prop:error}
For every query,
\[
\hat o-o=\tau\,(\bar u-b)+(q-\tau)\,(\bar u-a),\qquad
\|\hat p-\pi\|_2^2=(q-\tau)^2\,w+\sum_{j\in J}\Bigl(\pi_j-\frac{\tau}{m}\Bigr)^2,
\]
with $w=\|\tilde\pi_I\|_2^2+\frac1m$. Tail drop has output error $\tau\,(a-b)$.
\end{proposition}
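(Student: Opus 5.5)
The statement is a pair of algebraic identities for a single query, so the plan is direct substitution using the decomposition $o=(1-\tau)a+\tau b$ and Definition~\ref{def:tail}, with no probabilistic input. For the output error we start from $\hat o-o=(1-q)a+q\bar u-(1-\tau)a-\tau b$ and collect terms as $(\tau-q)a+q\bar u-\tau b$. Splitting $q\bar u=\tau\bar u+(q-\tau)\bar u$ then gives $\tau(\bar u-b)+(q-\tau)(\bar u-a)$, which is the claimed form. For tail drop we set $q=0$: the expression becomes $\tau(\bar u-b)-\tau(\bar u-a)=\tau(a-b)$, which is the stated tail-drop output error.

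For the distribution error we split the squared norm over $I$ and $J$.

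\emph{On $I$.} Here $\hat p_j-\pi_j=(1-q)\tilde\pi_j-(1-\tau)\tilde\pi_j=(\tau-q)\tilde\pi_j$, so this block contributes $(q-\tau)^2\|\tilde\pi_I\|_2^2$.

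\emph{On $J$.} Here $\hat p_j-\pi_j=\frac qm-\pi_j$. We write it as $\bigl(\frac{q-\tau}{m}\bigr)+\bigl(\frac\tau m-\pi_j\bigr)$ and expand the square. The cross term is $2\frac{q-\tau}{m}\sum_{j\in J}\bigl(\frac\tau m-\pi_j\bigr)$, and it vanishes because $\sum_{j\in J}\pi_j=\tau$ by the definition of the tail mass. The first squared part summed over the $m$ tail indices gives $m\cdot\frac{(q-\tau)^2}{m^2}=\frac{(q-\tau)^2}{m}$, and the second gives $\sum_{j\in J}(\pi_j-\tau/m)^2$.

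Adding the $I$ and $J$ contributions collects the $(q-\tau)^2$ terms into $(q-\tau)^2\bigl(\|\tilde\pi_I\|_2^2+\frac1m\bigr)=(q-\tau)^2w$, which proves the second identity.

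The only step needing care is the orthogonality on $J$: the cross term must be seen to vanish exactly by $\sum_{j\in J}\pi_j=\tau$. We also need $m\ge1$ so that $\bar u$ and $1/m$ are defined, which holds because $k<N$; and $\tau>0$ so that $b$ is defined, as the setup notes.
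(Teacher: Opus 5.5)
Your proposal is correct and follows the paper's proof essentially step for step. You subtract $o=(1-\tau)a+\tau b$ from $\hat o=(1-q)a+q\bar u$ and set $q=0$ for tail drop. You then split the distribution error over $I$ and $J$, writing the $J$-entries as $(q-\tau)/m+(\tau/m-\pi_j)$, and the cross term vanishes because $\sum_{j\in J}\pi_j=\tau$.
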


Proposition~\ref{prop:error} needs no assumption. The error splits into a uniform-tail term and a term from the error in the tail mass, and exact mode removes the second term, so for every query it minimizes the distribution error over $q$. When the tail is uniform, $\pi_j=\tau/m$ on $J$, so $b=\bar u$ and exact mode has no error, while tail drop keeps the error $\tau(a-b)$. In the expected distribution error, only $\E[(q-\tau)^2w]$ depends on $q$, so the best constant is $q^*=\E[w\tau]/\E[w]$, which equals $\E\tau$ when $w$ and $\tau$ are uncorrelated.

\subsubsection{Centroids factor the selection scores}
\label{sec:cluster}

Exact top-$k$ forms all $N^2$ scores before it selects, so its strict graph makes at least $N^2$ unary evaluations per head (Theorem~\ref{thm:lower}). When the queries cluster, the keys can be chosen before the scores, and only the $k$ kept scores of each query are formed.

\begin{corollary}[Cost of selected scores]
\label{cor:sparse}
Fix one head and $1\le k\le N$, and let each query select $k$ keys by a rule that is constant on some open set of inputs. Every product-free graph that computes the $Nk$ selected scores has $\PW\ge Nk$. Given the selected key indices, a graph over $\Bs(\Phi_{\mathrm{strict}})$ computes them with $\PW=Nkd+2Nd$.
\end{corollary}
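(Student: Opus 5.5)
The proof has two parts, a lower bound and a construction, and I treat them separately.

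\emph{Lower bound.} I reduce to the argument of Theorem~\ref{thm:lower}. By assumption the selection rule is constant on some open set $U$ of inputs $(Q,K)$. The graph may contain its own routing nodes (comparisons, $\TopK$, gather). The comparison outcomes are finitely many and the input-to-comparison maps are continuous almost everywhere, so I shrink $U$ to an open subset $U'$ on which every routing decision of the graph is fixed. On $U'$, Lemma~\ref{lem:affine} applies: the graph output is an affine function of the vector $u\in\R^{\PW}$ of its unary outputs.

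The target on $U'$ is the $Nk$ selected scores $s_{i,j}=q_i\cdot k_j/\sqrt d$ for $j\in I_i$. The index sets $I_i$ are fixed on $U'$, so the targeted pairs $(i,j)$ are $Nk$ fixed distinct pairs. These are distinct bilinear monomials in the free entries of $Q$ and $K$. I claim they are linearly independent modulo affine functions on the open set $U'$: a nontrivial combination $\sum c_{ij}s_{ij}$ is a nonzero quadratic polynomial, and no nonzero quadratic is affine on an open set. This is the same step as for the $N^2$ coordinates in Theorem~\ref{thm:lower}, so I reuse that rank argument.

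Hence the Jacobian of the target, taken modulo the affine part, has rank $Nk$ at generic points. It factors through $u$, so $\PW\ge Nk$. The main obstacle is making this factorization-through-$u$ rank argument rigorous, in particular when the unary functions in $\Phi$ are non-smooth. I would import it verbatim from the proof of Theorem~\ref{thm:lower}, restricted to the coordinates of interest and to the open set $U'$.

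\emph{Construction.} Given the selected indices, I use gather routing to form, for each query $i$, the $k$ key rows $k_j$ with $j\in I_i$. I precompute the squares $q_{i\ell}^2$ ($Nd$ evaluations) and $k_{j\ell}^2$ ($Nd$ evaluations) once. These are shared across products: each $k_j$ is squared once before gathering, then the squares are gathered along with it.

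For each selected pair $(i,j)$ and each coordinate $\ell$, I form $(q_{i\ell}+k_{j\ell})^2$ with one affine node and one square, giving $Nkd$ evaluations in total. A fixed affine map then returns
\[
s_{ij}=\frac{1}{2\sqrt d}\sum_\ell\bigl[(q_{i\ell}+k_{j\ell})^2-q_{i\ell}^2-k_{j\ell}^2\bigr].
\]
The total is $\PW=Nkd+2Nd$. All nodes belong to $\Bs(\Phi_{\mathrm{strict}})$, and the result is exact by the identity $ab=\tfrac12[(a+b)^2-a^2-b^2]$.
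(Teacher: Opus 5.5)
\textbf{Construction.} Your construction is correct and is the paper's. You gather the $k$ keys of each query, pay $d$ squares of $q_i+k_j$ per selected score, and pay $2Nd$ squares once for the row norms (Lemma~\ref{lem:score}).

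\textbf{First gap: the open set $U'$.} The lower bound fails at the step where you shrink $U$ to an open set $U'$ on which every routing decision of the graph is fixed. A product-free graph may use any set $\Phi$ of scalar functions, with no regularity, so the values its comparisons read need not behave well. Your justification, continuity almost everywhere, is not enough either. Take Thomae's function, which is continuous at every irrational, as $\phi$, and let a routing node compare $\phi(q_{11})$ with $0$. The two outcome sets are then $\{q_{11}\in\mathbb Q\}$ and its complement. Both are dense, and neither contains an open set. Such a graph can still compute the selected scores exactly, for instance when the routed value is not used. So $U'$ need not exist.

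The paper avoids this with Lemma~\ref{lem:region}. The finitely many joint routing outcomes partition $U$, and a finite union of nowhere dense sets is nowhere dense. Hence some outcome set $D$ is dense in an open ball $B\subseteq U$. Independence modulo affine functions must then be shown on $D$, not on an open set. Lemma~\ref{lem:quad} does this by continuity: a quadratic that agrees with an affine function on a dense subset of $B$ agrees with it on $B$, hence everywhere.

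\textbf{Second gap: the final step.} Your Jacobian argument needs the unary outputs to be differentiable in the inputs, which fails for arbitrary $\Phi$. You flag this yourself as the obstacle. The proof of Theorem~\ref{thm:lower} that you plan to import uses no derivatives. You already have both ingredients it uses:
\begin{itemize}
\item the affine reduction of Lemma~\ref{lem:affine};
\item linear independence of the $Nk$ selected scores modulo affine functions.
\end{itemize}
On $D$, each selected score equals an affine function plus $\sum_t c_{jt}y_t$, where the $y_t$ are the unary outputs. So the $Nk$ independent classes of the scores lie in the span of the $\PW$ classes of the $y_t$, in the space of all real functions on $D$ modulo affine ones. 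This gives $\PW\ge Nk$ by dimension counting (Lemma~\ref{lem:nrk}), with no smoothness needed. Importing the theorem's argument therefore means replacing your Jacobian step by this quotient-space count, and replacing $U'$ by the dense set $D$.
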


For each head, k-means on calibration queries gives $C$ centroids $c_1,\dots,c_C$, the rows of $W\in\R^{C\times d}$. Query $i$ goes to its nearest centroid $c_{r(i)}$, the one that maximizes the affine score $q_i\cdot c_r-\tfrac12\|c_r\|^2$, and the assignment $E\in\{0,1\}^{N\times C}$ factors the queries as $Q\approx EW$. The selection scores $EWK^\top/\sqrt d$, a gather of the centroid scores $WK^\top/\sqrt d$, replace $QK^\top/\sqrt d$, and the routed support of query $i$ is the top $k$ keys of its row. It is the exact top-$k$ support of $q_i$ whenever the gap condition of Lemma~\ref{lem:shared} holds for $c=c_{r(i)}$. Like $\TopK$ of the scores, it is constant on an open set of inputs, so Corollary~\ref{cor:sparse} bounds its scores. The source graph of the routed map is the all-values form~\eqref{eq:allvalues} on the routed support.

\begin{proposition}[Routed support]
\label{prop:routed}
For fixed centroids and an $h$-head block, tail-mass attention with a constant tail mass on the routed supports is computed exactly by a graph over $\Bs(\Phi_{\mathrm{strict}})$ whose $\mathrm{Score}$ and $\mathrm{Mix}$ subgraphs make $h(2Nkd+Nk+3Nd)$ unary evaluations, and by a graph over $\Be(\Phi_{\mathrm{ext}})$ with $\Delta\PW=0$.
\end{proposition}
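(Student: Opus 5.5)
We build the strict graph explicitly and count its unary evaluations term by term, matching the total $h(2Nkd+Nk+3Nd)$. The extended graph is then the source graph itself.

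\emph{Selection.} For fixed centroids $W$, the assignment $r(i)=\arg\max_r\,(q_i\cdot c_r-\tfrac12\|c_r\|^2)$ is a fixed affine map of $q_i$ followed by an argmax, which is routing. The centroid scores $WK^\top/\sqrt d$ are products of a fixed matrix with an activation, so they form a single fixed affine node in $K$ with no unary work. Gathering row $r(i)$ and applying $\TopK$ are routing. The selection step therefore costs $\PW=0$.

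\emph{Scores and Mix.} Given the selected indices, we gather the $k$ keys $k_j$, $j\in I_i$, for each query. By Corollary~\ref{cor:sparse} the selected scores cost $Nkd+2Nd$ unary evaluations through shared squares. Concretely, we form $(q_{il}+k_{jl})^2$ for the $Nkd$ pairs and coordinates, and the self-squares $q_{il}^2$ and $k_{jl}^2$ once per token, at $2Nd$ evaluations. Then $s_{ij}=\tfrac1{2\sqrt d}\sum_l[(q_{il}+k_{jl})^2-q_{il}^2-k_{jl}^2]$ is affine in these outputs.

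The softmax over the kept scores produces $\tilde\pi_{ij}$, and its unary work belongs to the softmax transform, not to Score or Mix. We multiply it by the constant $(1-q)$ and subtract the constant $q/m$, both affine, which gives the coefficients $c_{ij}=\hat p_{ij}-q/m$ of the all-values form~\eqref{eq:allvalues}. The term $\frac qm\sum_j v_j$ is a fixed affine map of $V$; under a causal mask it becomes a prefix sum, which is also a fixed affine map. The products $c_{ij}v_{jl}$ again use shared squares: $(c_{ij}+v_{jl})^2$ for $Nkd$ evaluations, $c_{ij}^2$ for $Nk$, and $v_{jl}^2$ for $Nd$, since each value row is squared once. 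The gather must therefore route the squared rows of $V$ as well. Per head this sums to $Nkd+2Nd+Nkd+Nk+Nd=2Nkd+Nk+3Nd$, and multiplying by $h$ gives the claimed count. Exactness follows because every step is an identity, by Lemmas~\ref{lem:basic}--\ref{lem:softmax} for the unchanged transforms.

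\emph{Extended graph.} Over $\Be(\Phi_{\mathrm{ext}})$ we keep the source graph of the routed map unchanged. The selection uses the same affine and routing nodes. The gathered $QK_I^\top$ and the sparse mixing of~\eqref{eq:allvalues} are each one BMM node with no unary evaluation, and softmax uses $\exp$ and $(\cdot)^{-1}$ directly. Hence $\Delta\PW=0$.

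\emph{Main obstacle.} The delicate point is bookkeeping, not analysis. We must check that the self-squares $k_{jl}^2$ and $v_{jl}^2$ are computed once per token before the gather, so they contribute $Nd$ each and not $Nkd$. We must also confirm that routing whose indices come from an input-dependent $\TopK$ counts as an allowed routing node. This holds because Definition~\ref{def:strict} permits selection by comparisons of computed values with a fixed tie-breaking rule.
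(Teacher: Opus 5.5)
Your proposal is correct and follows the paper's proof. In both, the assignment and centroid scores are fixed affine maps, the argmax, $\TopK$ and gathers are routing, the selected scores cost $Nkd+2Nd$ per head by the norm expansion of Corollary~\ref{cor:sparse}, the kept mixing costs $Nkd+Nk+Nd$ as in Lemma~\ref{lem:score}, the constant-$q$ tail term is a fixed affine map of $V$, and in $\Be(\Phi_{\mathrm{ext}})$ the scores and the mixing are product nodes over the $k$ kept keys, so every unary map is one of the source graph. Your mix uses $(c_{ij}+v_{jl})^2-c_{ij}^2-v_{jl}^2$ instead of the paper's per-term $v_{jl}^2-(v_{jl}-c_{ij})^2$. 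This gives the same count and changes only the rounding.
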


In both graphs the assignment scores and the centroid scores are fixed affine maps with $2hCNd$ multiply--accumulate steps, and the assignment and the selection are routing, so the router adds no unary evaluation. Proposition~\ref{prop:error} holds for every support, so the error of the routed estimate splits into the same uniform-tail and tail-mass terms.

\subsubsection{A flocking step is one attention head}
\label{sec:flock}

In the Motsch--Tadmor flocking model~\cite{motsch2011flocking}, agent $i$ at position $x_i$ with velocity $v_i$ follows
\[
\dot v_i=\alpha\Bigl(\sum_j\pi_{ij}v_j-v_i\Bigr),\qquad \pi_{ij}=\frac{g(\|x_i-x_j\|)}{\sum_lg(\|x_i-x_l\|)},
\]
where $j$ and $l$ run over all agents and $g$ is a decreasing influence function.

\begin{proposition}[Flocking step]
\label{prop:flock}
For $g(r)=e^{-\theta r^2}$ with $\theta>0$, $\pi_{ij}=\softmax_j(2\theta\,x_i^\top x_j-\theta\|x_j\|^2)$, so $\sum_j\pi_{ij}v_j$ is one head of distance-based attention~\cite{kim2021lipschitz} with queries $2\theta x_i$, keys $x_j$, a bias $-\theta\|x_j\|^2$ per key and values $v_j$. The top $k$ keys of agent $i$ are its $k$ nearest agents.
\end{proposition}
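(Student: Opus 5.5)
Expanding $\|x_i-x_j\|^2=\|x_i\|^2-2x_i^\top x_j+\|x_j\|^2$ gives
\[
g(\|x_i-x_j\|)=e^{-\theta\|x_i\|^2}\exp\bigl(2\theta\,x_i^\top x_j-\theta\|x_j\|^2\bigr).
\]
The first factor $e^{-\theta\|x_i\|^2}>0$ depends only on $i$. It therefore cancels between the numerator and the normalizer $\sum_l g(\|x_i-x_l\|)$ of $\pi_{ij}$. What remains is $\pi_{ij}=\exp(s_{ij})/\sum_l\exp(s_{il})$ with $s_{ij}=2\theta\,x_i^\top x_j-\theta\|x_j\|^2$, which is the row softmax claimed in the statement.

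Next we read off the head. Set the query to $2\theta x_i$, the key to $x_j$, and the value to $v_j$. The score is then the dot product of query and key plus a per-key bias $-\theta\|x_j\|^2$. This matches distance-based attention~\cite{kim2021lipschitz} up to the scaling convention: the factor $1/\sqrt d$ of $\mathrm{Score}$ can be absorbed into the query by taking $2\theta\sqrt d\,x_i$. With these choices, $\sum_j\pi_{ij}v_j$ is exactly the head output $o$ of Section~\ref{sec:clustering} for query $i$.

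For the top-$k$ claim, fix $i$. The score differs from $-\theta\|x_i-x_j\|^2$ only by the constant $\theta\|x_i\|^2$, which does not depend on $j$. Since $\theta>0$, the map $r\mapsto-\theta r^2$ is strictly decreasing on $r\ge0$. Ordering the $j$ by decreasing $s_{ij}$ is therefore the same as ordering them by increasing distance $\|x_i-x_j\|$, so $\TopK(s_{i\cdot},k)$ is the set of the $k$ nearest agents. Ties in distance are ties in score and are resolved by the same fixed tie-breaking rule. The agent $i$ itself, at distance $0$, is always among the top keys, consistent with $j$ running over all agents.

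No step is a real obstacle; the computation is direct. The only care needed is bookkeeping the $1/\sqrt d$ normalization and the bias term so that the identification with one head is stated exactly rather than up to scaling.
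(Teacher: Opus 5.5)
Your proof is correct and follows the paper's argument: you factor out $e^{-\theta\|x_i\|^2}$, cancel it against the normalizer, and observe that the score equals $\theta\|x_i\|^2-\theta\|x_i-x_j\|^2$, so for $\theta>0$ the $k$ largest scores belong to the $k$ nearest agents. Your aside that agent $i$ is always among its own top keys can fail when other agents sit at $x_i$ and the tie-breaking rule prefers them, but the statement does not need it.
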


Tail-mass attention (Definition~\ref{def:tail}) gives the $k$ nearest agents of agent $i$ the weights $(1-q)\tilde\pi_j$ and each of the other $m$ agents the weight $q/m$. Proposition~\ref{prop:error} gives the error of this top-$k$ flocking step exactly, and Theorem~\ref{thm:tail} applies when the far positions are exchangeable given the kept scores, velocities and $x_i$. By the proof of Proposition~\ref{prop:flock}, Lemma~\ref{lem:shared} holds for these scores with $\sqrt d$ replaced by $1$, so agents close to one centroid share its routed support when the gap condition holds.

\subsection{Experiments}
\label{sec:experiments}

\subsubsection{The compiled models match their references}

We compile three pretrained models with the extended graph and top-$k$ tail-mass attention, and evaluate them before training. With all keys kept, the compiled models match their references: $91.36\%$ against $91.36\%$ on Pet, $72.01\%$ against $72.02\%$ on ImageNet, $32.51$\,dB against $32.51$\,dB on Set5, and a token negative log-likelihood (NLL) of $4.5496$ against $4.5496$ on WikiText-2. The strict and extended graphs compute the same model exactly (Theorems~\ref{thm:strict} and~\ref{thm:ext}). Only model changes, such as top-$k$ attention, and rounding change accuracy.

\subsubsection{The strict graph pays the predicted cost}

Table~\ref{tab:strict-fast} measures the cost of the strict graphs of Theorem~\ref{thm:strict} and Proposition~\ref{prop:sparse} on a GPU. The reference calls fused attention and LayerNorm kernels. The extended graph runs at $2.5$ to $5.6$ times the reference latency in Tables~\ref{tab:strict-fast} and~\ref{tab:deit}. The strict graph takes $1.2$ to $2.4$ times the latency and, at top-$64$, $2.3$ times the peak memory of the extended graph. The GPU has $\kappa\approx2$ for squares (Methods), far below the threshold of Corollary~\ref{cor:time}, so this slowdown does not come from the bound of Theorem~\ref{thm:lower}. It comes from the construction: $d$ times more pointwise work than the bound, and $N^2d$ operands per head that the implementation writes to memory. A per-module profile puts $73$ to $77\%$ of the added time in $\mathrm{Score}$ and $\mathrm{Mix}$ (Supplementary Table~\ref{tab:s-profile}). A merged GPU kernel avoids these writes; no transform-wise product-free graph avoids the bound of Theorem~\ref{thm:lower}.

With the same GELU approximation, the strict graph stays within $0.6$ points of the Hadamard graph and of its polarization variant with top-$k$ support, and within $0.4$ points routed. Two runs of one strict configuration differ by up to $0.5$ points with top-$k$ support and $1.5$ points routed, since the tail mass and centroids are calibrated on augmented training images. Squares cost no accuracy beyond run-to-run variation. With top-$k$ support, polarization makes $1.9$ times the unary evaluations and, for $k\le64$, runs $13$ to $14$ times slower with $5.5$ times the memory (Supplementary Section~S5). In integer arithmetic both identities are exact (Methods).

\begin{table}[t]
\caption{\textbf{Measured cost of the compilations.} DeiT-Tiny on Pet, bf16, before distillation. Accuracy (\%), and latency and peak GPU memory relative to the reference, which takes $2.20$\,ms per batch of 32 and $62$\,MB (Methods). Hadamard: Proposition~\ref{prop:hsr}. Strict: Theorem~\ref{thm:strict}; polarization: polarization for every product. The extended graph uses GELU, the others $x\odot\sigma(1.702x)$. Top-$k$: the $k$ largest exact scores. Routed: $k$ keys chosen by 32 query centroids before the scores (Methods). At $k=197$ both supports hold every key.}
\label{tab:strict-fast}
\centering
\small
\setlength{\tabcolsep}{2.4pt}
\begin{tabular}{r ccc ccc ccc ccc}
\toprule
& \multicolumn{3}{c}{Extended} & \multicolumn{3}{c}{Hadamard} & \multicolumn{3}{c}{Strict} & \multicolumn{3}{c}{Strict, polarization}\\
\cmidrule(lr){2-4}\cmidrule(lr){5-7}\cmidrule(lr){8-10}\cmidrule(lr){11-13}
$k$ & Acc. & Latency & Memory & Acc. & Latency & Memory & Acc. & Latency & Memory & Acc. & Latency & Memory\\
\midrule
\multicolumn{13}{l}{\emph{Top-$k$ support}}\\
16  & 86.24 & $3.28\times$ & $1.62\times$ & 83.51 & $4.18\times$ & $8.72\times$ & 84.08 & $5.04\times$ & $8.65\times$ & 83.95 & $68.3\times$ & $47.1\times$\\
32  & 89.23 & $3.89\times$ & $2.32\times$ & 87.46 & $4.73\times$ & $8.72\times$ & 87.98 & $5.37\times$ & $8.65\times$ & 87.71 & $76.1\times$ & $47.1\times$\\
64  & 90.35 & $4.88\times$ & $3.70\times$ & 89.59 & $5.75\times$ & $8.72\times$ & 89.78 & $7.14\times$ & $8.65\times$ & 89.70 & $93.1\times$ & $47.1\times$\\
197 & 91.36 & $5.41\times$ & $2.45\times$ & 90.35 & $10.5\times$ & $17.3\times$ & 90.73 & $13.0\times$ & $17.2\times$ & 90.57 & $132\times$ & $63.1\times$\\
\midrule
\multicolumn{13}{l}{\emph{Routed support}}\\
16  & 80.13 & $3.35\times$ & $1.54\times$ & 78.60 & $3.48\times$ & $2.21\times$ & 78.22 & $4.73\times$ & $2.19\times$ & 78.44 & $14.1\times$ & $5.90\times$\\
32  & 87.14 & $4.03\times$ & $2.28\times$ & 85.66 & $4.10\times$ & $3.56\times$ & 85.96 & $4.73\times$ & $3.55\times$ & 85.55 & $22.3\times$ & $11.0\times$\\
64  & 89.81 & $4.37\times$ & $3.73\times$ & 88.63 & $5.56\times$ & $6.27\times$ & 88.88 & $6.67\times$ & $6.23\times$ & 88.99 & $40.1\times$ & $21.2\times$\\
\bottomrule
\end{tabular}
\end{table}

\subsubsection{The elementwise product removes most of the strict cost}

With top-$k$ support and $k\le64$, the Hadamard graph of Proposition~\ref{prop:hsr} runs at $1.2$ to $1.3$ times the latency of the extended graph, against $1.4$ to $1.5$ for the strict graph, at similar memory (Table~\ref{tab:strict-fast}). Its measured unary counts equal Table~\ref{tab:pw}: $2.2\times10^6$ per image at $k=16$, against $1.1\times10^8$ for the strict graph. Its remaining cost is the $N^2d$ products of $QK^\top$ per head, which this implementation writes to memory (Supplementary Table~\ref{tab:s-profile}). With top-$k$ support, the approximation of GELU costs the Hadamard graph $0.7$ to $2.6$ and the strict graph $0.3$ to $2.3$ points; with GELU, the Hadamard graph stays within $0.1$ points of the extended graph.

\subsubsection{NPUs provide the Hadamard basis}

Supplementary Table~\ref{tab:s-npu} shows where each graph runs on ten edge accelerators by documented operator support, with the approximation of GELU where GELU is missing. The R-Car V4H runs both graphs with its DSP. Of the other nine, the strict graph runs entirely on seven, including the Edge TPU and, in INT8, the Jetson DLA; the open-source NVDLA IP and Ethos-N78 lack the inverse square root. The extended graph runs entirely on five; the toolchains of the other four have no matrix product of two activations and place it on the host. All ten multiply two activations elementwise, so none is product-free. The Hadamard graph uses the operations of the strict graph, so it runs wherever the strict graph runs.

\subsubsection{Standard quantization applies to the compiled graph}

We apply the post-training quantization (PTQ) used for convolutional networks~\cite{jacob2018quantization,nagel2021white}: symmetric max-abs scales from calibration data, with no training or reconstruction and no Transformer-specific quantizer. The only fitted quantities are least-squares output maps (Methods).

The strict graph in Table~\ref{tab:ptq} replaces GELU by $x\,\sigma(1.702x)$, so it uses the reduced strict basis $\Bs(\{(\cdot)^2,\sigma,(\cdot)^{-1/2}\})$. The extended graph keeps GELU and the matrix products of $\Be(\Phi_{\mathrm{ext}})$. Quantizers cut each graph into units with integer inputs and outputs (Methods). Weights, the logits and the inputs of linear, convolution and product nodes take 8 bits; the fixed affine nodes of the strict graph take 12-bit inputs, since these include squares and sums of squares.

PTQ changes the accuracy of the strict graph by $-1.47$ to $-0.05$ points with per-channel weight scales, and by $-1.06$ to $-0.14$ with one scale per weight tensor. On the extended graph it changes accuracy by $-2.5$ to $-3.9$ points.

\begin{table}[t]
\caption{\textbf{Standard post-training quantization of the compiled DeiT-Tiny on Pet.} Top-1 accuracy (\%) of the bf16 float model and after PTQ of the whole model with per-channel or per-tensor weight scales. On the calibration data the fp32 simulation holds every integer sum exactly (Methods). Nodes: quantized nodes, the output identity included. PV: $k/N$. Strict: the reduced strict graph, before distillation. Extended: the Jeffreys-distilled extended graph.}
\label{tab:ptq}
\centering
\setlength{\tabcolsep}{4.5pt}
\begin{tabular}{rcl r r rr rr}
\toprule
& & & & & \multicolumn{2}{c}{Per-channel} & \multicolumn{2}{c}{Per-tensor}\\
\cmidrule(lr){6-7}\cmidrule(lr){8-9}
$k$ & PV & Graph & Nodes & bf16 & PTQ & $\Delta$ & PTQ & $\Delta$\\
\midrule
16 & 0.08 & strict & 380 & 84.08 & 82.61 & $-1.47$ & 83.02 & $-1.06$\\
32 & 0.16 & strict & 380 & 87.98 & 87.30 & $-0.68$ & 87.27 & $-0.71$\\
64 & 0.32 & strict & 380 & 89.78 & 89.72 & $-0.05$ & 89.64 & $-0.14$\\
197 & 1.00 & strict & 368 & 90.73 & 90.27 & $-0.46$ & 89.75 & $-0.98$\\
\midrule
16 & 0.08 & extended & 247 & 88.58 & 86.13 & $-2.45$ & 84.90 & $-3.68$\\
32 & 0.16 & extended & 247 & 90.19 & 86.26 & $-3.92$ & 86.54 & $-3.65$\\
64 & 0.32 & extended & 247 & 90.76 & 87.44 & $-3.33$ & 87.16 & $-3.60$\\
197 & 1.00 & extended & 235 & 91.36 & 88.74 & $-2.62$ & 88.50 & $-2.86$\\
\bottomrule
\end{tabular}
\end{table}

\subsubsection{The tail-mass estimate beats tail drop}

\emph{Image classification.} Table~\ref{tab:deit} reports Oxford-IIIT Pet and ImageNet-1k with calibrated tail mass. The Jeffreys divergence $D_J$ between the predictive distributions of the compiled and reference models falls from $0.98$ at $k=8$ to $0.043$ at $k=64$ and $2\times10^{-4}$ with all keys. One epoch of distillation~\cite{hinton2015distilling} against the reference with the Jeffreys divergence~\cite{jeffreys1946invariant}, with $q$ trainable, brings top-$64$ to $90.43\pm0.18\%$ over five seeds. On ImageNet, without training, top-$64$ is within $1.19$ points of the reference.

\begin{table}[t]
\caption{\textbf{DeiT-Tiny compiled with calibrated tail-mass top-$k$.} Pet: top-1 accuracy (\%) of the compiled model, Jeffreys divergence $D_J$ to the reference, accuracy after one epoch of Jeffreys distillation (mean $\pm$ s.d. over five seeds), and latency of the compiled forward pass relative to the reference (Methods). ImageNet-1k: accuracy of the compiled model without training. PV is $k/N$. References: $91.36\%$ (Pet), $72.02\%$ (ImageNet).}
\label{tab:deit}
\centering
\begin{tabular}{r c ccccc c}
\toprule
& & \multicolumn{4}{c}{Oxford-IIIT Pet} & & ImageNet-1k\\
\cmidrule(lr){3-6}\cmidrule(lr){8-8}
$k$ & PV & Compiled & $D_J$ & Distilled & Latency & & Compiled\\
\midrule
1   & 0.01 & 15.73 & 6.26 & $41.22\pm0.60$ & $2.45\times$ & & 10.84\\
2   & 0.01 & 42.95 & 4.07 & $65.30\pm0.47$ & $2.85\times$ & & 27.79\\
4   & 0.02 & 67.10 & 2.20 & $78.22\pm0.18$ & $2.99\times$ & & 45.03\\
8   & 0.04 & 79.50 & 0.97 & $84.91\pm0.15$ & $3.06\times$ & & 57.20\\
16  & 0.08 & 86.24 & 0.38 & $88.56\pm0.19$ & $3.28\times$ & & 64.66\\
32  & 0.16 & 89.23 & 0.14 & $90.18\pm0.08$ & $3.89\times$ & & 68.81\\
64  & 0.32 & 90.35 & 0.045 & $90.43\pm0.18$ & $4.88\times$ & & 70.83\\
96  & 0.49 & 90.76 & 0.016 & $90.86\pm0.22$ & $5.32\times$ & & 71.50\\
128 & 0.65 & 91.11 & 0.0055 & $91.11\pm0.00$ & $5.61\times$ & & 71.82\\
197 & 1.00 & 91.36 & 0.0002 & $91.36\pm0.00$ & $5.41\times$ & & 72.01\\
\bottomrule
\end{tabular}
\end{table}

\emph{Image restoration.} MambaIRv2-Light~\cite{guo2025mambairv2} is a super-resolution model that combines state-space scans with windowed attention over $N=256$ positions. Its attention and LayerNorm modules compile as in DeiT-Tiny, and the state-space scans stay as they are. Table~\ref{tab:mambair} reports $\times4$ and $\times2$ super-resolution on Set5~\cite{bevilacqua2012set5}, before and after three epochs of $L_1$ fine-tuning on DIV2K~\cite{agustsson2017div2k}. Before training, exact tail mass gives the highest PSNR at every $k<N$, and top-$64$ is within $0.03$\,dB of the reference. After fine-tuning, exact mode and tail drop end within $0.06$\,dB of each other for $k\ge4$.

\begin{table}[t]
\caption{\textbf{MambaIRv2-Light on Set5.} PSNR (dB) before (pre) and after (post) three epochs of $L_1$ fine-tuning on DIV2K, selected on DIV2K training crops (Methods). References: $32.51$\,dB ($\times4$), $38.24$\,dB ($\times2$). At $k=0$ all weight sits on the uniform tail, which tail drop cannot express; at $k=256$ the modes coincide and the tail-drop cells repeat the calibrated run.}
\label{tab:mambair}
\centering
\begin{tabular}{r cc cc cc cc}
\toprule
& \multicolumn{6}{c}{$\times4$} & \multicolumn{2}{c}{$\times2$}\\
\cmidrule(lr){2-7}\cmidrule(lr){8-9}
& \multicolumn{2}{c}{Tail drop} & \multicolumn{2}{c}{Calibrated} & \multicolumn{2}{c}{Exact} & \multicolumn{2}{c}{Calibrated}\\
\cmidrule(lr){2-3}\cmidrule(lr){4-5}\cmidrule(lr){6-7}\cmidrule(lr){8-9}
$k$ & pre & post & pre & post & pre & post & pre & post\\
\midrule
0 & -- & -- & 30.22 & 31.10 & 30.22 & 31.10 & 35.98 & 37.24\\
1 & 22.11 & 30.64 & 30.45 & 31.32 & 30.95 & 31.73 & 36.17 & 37.36\\
2 & 26.05 & 31.63 & 30.62 & 31.55 & 31.36 & 31.97 & 36.35 & 37.52\\
4 & 29.66 & 32.07 & 30.85 & 31.81 & 31.71 & 32.11 & 36.50 & 37.62\\
16 & 31.99 & 32.37 & 31.55 & 32.23 & 32.19 & 32.34 & 37.05 & 37.90\\
64 & 32.43 & 32.47 & 32.27 & 32.45 & 32.48 & 32.51 & 37.90 & 38.13\\
256 & 32.51 & 32.51 & 32.51 & 32.51 & 32.50 & 32.52 & 38.23 & 38.23\\
\bottomrule
\end{tabular}
\end{table}

\emph{Language modeling.} Pythia-70M~\cite{biderman2023pythia} is a decoder-only model with a causal mask, evaluated on WikiText-2~\cite{merity2017wikitext} with $N=128$ tokens. Causal rows have different numbers of valid keys, so a single calibrated $q$ does not apply, and we use exact tail mass. Table~\ref{tab:pythia} shows that exact tail mass keeps the NLL within $0.01$ nats of the reference down to $k=16$, while tail drop loses $0.04$ nats there.

\begin{table}[t]
\caption{\textbf{Pythia-70M on WikiText-2.} Token NLL (nats) and perplexity (PPL) of the compiled model without training, with exact tail mass and with tail drop, over $2.53\times10^{5}$ validation tokens in float32. $\bar\tau$ is the mean omitted tail mass of top-$k$ on the dense scores of 4 random training minibatches of 4 sequences. Reference: NLL $4.5496$, PPL $94.59$.}
\label{tab:pythia}
\centering
\begin{tabular}{r ccc ccc c}
\toprule
& \multicolumn{3}{c}{Exact tail mass} & \multicolumn{3}{c}{Tail drop} & \\
\cmidrule(lr){2-4}\cmidrule(lr){5-7}
$k$ & NLL & $\Delta$NLL & PPL & NLL & $\Delta$NLL & PPL & $\bar\tau$\\
\midrule
128 & 4.5496 & 0.0000 & 94.59 & 4.5496 & 0.0000 & 94.59 & 0.000\\
64  & 4.5497 & 0.0002 & 94.61 & 4.5497 & 0.0002 & 94.61 & 0.004\\
32  & 4.5513 & 0.0017 & 94.75 & 4.5567 & 0.0072 & 95.27 & 0.023\\
16  & 4.5584 & 0.0088 & 95.43 & 4.5885 & 0.0390 & 98.35 & 0.062\\
8   & 4.5822 & 0.0326 & 97.73 & 4.6811 & 0.1316 & 107.89 & 0.123\\
4   & 4.6534 & 0.1038 & 104.94 & 4.9232 & 0.3737 & 137.44 & 0.205\\
2   & 4.8267 & 0.2771 & 124.80 & 5.5418 & 0.9923 & 255.14 & 0.313\\
1   & 5.1804 & 0.6308 & 177.76 & 6.6381 & 2.0885 & 763.65 & 0.440\\
\bottomrule
\end{tabular}
\end{table}

Figure~\ref{fig:tail} compares the three modes. On DeiT-Tiny/Pet, exact mode at $k$ beats tail drop at $2k$ for all tested $k\le32$; calibrated mode does so for $k\le16$. At equal $k=8$, calibrated mode gains $73.8$ points over tail drop. The per-head optimality of Theorem~\ref{thm:tail} does not order model accuracy: for $k\le4$ calibrated mode beats exact mode. On Pythia-70M, exact mode at $k$ beats tail drop at $2k$ for every $k\le8$. On MambaIRv2-Light, exact mode is the best of the three at every $k<N$. Calibrated mode falls below tail drop there for $k\ge16$, since one constant per block fits poorly a per-query tail mass with s.d.\ about $0.2$ at a mean of $0.57$ ($k=16$).

\begin{figure}[t]
\centering
\begin{tikzpicture}
\pgfplotsset{
  dropstyle/.style={color=black!55, mark=triangle*, mark size=2pt, thick},
  calstyle/.style={color=blue!75!black, mark=*, mark size=1.6pt, thick},
  exactstyle/.style={color=red!75!black, mark=square*, mark size=1.6pt, thick},
}
\begin{groupplot}[
  group style={group size=3 by 1, horizontal sep=1.55cm},
  width=5.1cm, height=4.6cm,
  xmode=log, log basis x=2,
  xlabel={$k$},
  tick label style={font=\scriptsize},
  label style={font=\small},
  title style={font=\small},
  grid=major, grid style={black!10},
]
\nextgroupplot[title={(a) DeiT-Tiny, Pet}, ylabel={accuracy (\%)}, ymin=0, ymax=95,
  xtick={1,4,16,64,197}, xticklabels={1,4,16,64,197},
  legend to name=tailleg, legend columns=4, legend style={font=\small, draw=none, column sep=8pt}]
\addplot[dropstyle] coordinates {(1,2.53) (2,3.16) (4,2.83) (8,5.70) (16,39.44) (32,81.71) (64,89.10)};
\addlegendentry{tail drop}
\addplot[calstyle] coordinates {(1,15.73) (2,42.95) (4,67.10) (8,79.50) (16,86.24) (32,89.23) (64,90.35) (96,90.76) (128,91.11) (197,91.36)};
\addlegendentry{calibrated}
\addplot[exactstyle] coordinates {(1,8.69) (2,21.31) (4,58.60) (8,81.77) (16,88.80) (32,90.84) (64,91.36) (96,91.39) (128,91.28) (197,91.36)};
\addlegendentry{exact}
\addplot[dashed, black, domain=1:197, samples=2] {91.36};
\addlegendentry{reference}
\nextgroupplot[title={(b) Pythia-70M, WikiText-2}, ylabel={$\Delta$NLL (nats)}, ymode=log, ymin=1e-4, ymax=4,
  xtick={1,4,16,64}, xticklabels={1,4,16,64}]
\addplot[dropstyle] coordinates {(1,2.0885) (2,0.9923) (4,0.3737) (8,0.1316) (16,0.0390) (32,0.0072) (64,0.0002)};
\addplot[exactstyle] coordinates {(1,0.6308) (2,0.2771) (4,0.1038) (8,0.0326) (16,0.0088) (32,0.0017) (64,0.0002)};
\nextgroupplot[title={(c) MambaIRv2-Light $\times4$, Set5}, ylabel={PSNR (dB)}, ymin=21.5, ymax=33.2,
  xtick={1,4,16,64,256}, xticklabels={1,4,16,64,256}]
\addplot[dropstyle] coordinates {(1,22.111) (2,26.047) (4,29.661) (16,31.987) (64,32.429)};
\addplot[calstyle] coordinates {(1,30.448) (2,30.624) (4,30.851) (16,31.549) (64,32.275) (256,32.505)};
\addplot[exactstyle] coordinates {(1,30.951) (2,31.359) (4,31.705) (16,32.192) (64,32.483) (256,32.498)};
\addplot[dashed, black, domain=1:256, samples=2] {32.51};
\end{groupplot}
\node[anchor=north] at ($(group c2r1.south)+(0,-1.0cm)$) {\pgfplotslegendfromname{tailleg}};
\end{tikzpicture}
\caption{\textbf{Tail-mass estimate against tail drop.} Quality of the compiled model before any training, as a function of the number of kept keys $k$. (a) DeiT-Tiny on Oxford-IIIT Pet, top-1 accuracy. (b) Pythia-70M on WikiText-2, increase of the token negative log-likelihood over the reference ($N=128$); calibrated mode does not apply to causal rows, whose numbers of valid keys differ. (c) MambaIRv2-Light $\times4$ super-resolution on Set5, PSNR.}
\label{fig:tail}
\end{figure}

\subsubsection{Clustered queries cut the cost of the strict graph}

The measured unary counts equal Proposition~\ref{prop:routed} at every tested $k$. At $k=16$ the routed strict DeiT-Tiny graph uses $2.2\times$ instead of $8.6\times$ the reference memory and makes $2.4\times10^7$ instead of $1.1\times10^8$ unary evaluations per image (Table~\ref{tab:strict-fast}), at $78.22\%$ against $84.08\%$ accuracy; its latency falls from $5.0$ to $4.7$ times the reference. With twice the keys it matches exact top-$k$: at $k=32$ it reaches $85.96\%$ with $37\%$ of the unary evaluations and $41\%$ of the memory of exact top-$16$.

The routed support is accurate where the queries of a cluster attend to the same keys. On DeiT-Tiny it holds $83\%$ to $95\%$ of the attention mass of exact top-$k$ for $k=8$ to $64$, and $99\%$ in the last block, where the queries are most clustered. Pythia-70M queries cluster more tightly, but queries of one cluster attend to different keys, and with tail drop its perplexity rises by $9\%$ at $k=32$. Routing leaves the extended graph unchanged at $N=197$ and cuts the Hadamard graph's memory by $75\%$ at $k=16$.

\section{Discussion}

An NPU needs no attention, softmax or normalization unit to run Transformers. The elementwise product of two activations removes the pointwise cost that Theorem~\ref{thm:lower} prices, and a MAC array that accepts two activation operands only keeps the $N^2d$ products of $QK^\top$ and $PV$ out of memory.

Theorem~\ref{thm:lower} applies to transform-wise compilation, which keeps the attention contractions as subgraphs; a compilation that never forms $QK^\top$ is outside it (Supplementary Section~S1). Corollary~\ref{cor:time} needs hardware whose MAC array outruns its pointwise unit by more than $2Nd/(N+d)$.

The construction extends to other blocks. RMSNorm~\cite{zhang2019rmsnorm} compiles as LayerNorm without the mean. A gated MLP such as SwiGLU~\cite{shazeer2020glu} multiplies two activations, and its SiLU is $x\,\sigma(x)$, so both take a polarization graph in the strict basis and a Hadamard node in the Hadamard and extended bases.

For autonomous and multi-agent systems, the Hadamard graph lets a robot or drone run a Transformer policy on an NPU built for convolutional networks. By Proposition~\ref{prop:flock}, top-$k$ attention is the topological rule of starling flocks, where each bird follows a fixed number of nearest neighbors~\cite{ballerini2008interaction}, and the tail-mass term describes the far agents by their total weight and mean velocity, as cluster-level models of multi-agent control do~\cite{erofeeva2024multilevel}.

\emph{Future work.} The latency and memory reported here are GPU measurements, and the NPU placement of Supplementary Table~\ref{tab:s-npu} comes from documented operator support. Compiling the three graphs and their quantized integer units with the NPU toolchains, and running them on the devices, is the next step. A per-query predictor of the tail mass from the selected scores would bring calibrated mode toward exact mode without the dense normalizer. A swarm optimizer built on top-$k$ attention with a tail-mass term is the multi-agent counterpart of these results.

\section{Methods}

\subsection{Notation}

$\langle\mathcal B\rangle$ is the class of finite acyclic graphs over a basis $\mathcal B$. It is closed under composition. We write $F\in\langle\mathcal B\rangle$ when a graph in $\langle\mathcal B\rangle$ computes $F$ exactly. $M(x,y)$ is the polarization graph of $xy$: one fixed affine node forms $x\pm y$, two squares follow, and one fixed affine node returns $\tfrac14[(x+y)^2-(x-y)^2]=xy$. $\tilde M(x,y)$ is the norm-expansion graph of $xy$ when $x^2$ and $y^2$ are available: one fixed affine node forms $x+y$, one square follows, and one fixed affine node returns $\tfrac12[((x+y)^2-x^2)-y^2]=xy$, subtracting the two close squares first. For an index tuple $I$, $\operatorname{gather}(s,I)=(s_{i_1},\dots,s_{i_k})$. $\TopK$ returns indices in descending order of score with a fixed tie-breaking rule.

\subsection{Compilation lemmas}

Each lemma realizes one transform. In this section $\Bs$ is $\Bs(\Phi_{\mathrm{strict}})$ (Definition~\ref{def:phistrict}) and $\Be$ is $\Be(\Phi_{\mathrm{ext}})$ (Definition~\ref{def:phiext}).

\begin{lemma}[Affine maps, residuals, routing, GELU MLP]
\label{lem:basic}
Fixed affine maps, residual sums and routing maps are nodes of both bases. $x\mapsto A_2\gelu(A_1x+b_1)+b_2$ is the composition $\Aff_{A_2,b_2}\circ\gelu\circ\Aff_{A_1,b_1}$ in both bases.
\end{lemma}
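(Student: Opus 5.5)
The lemma is close to definitional, so I plan to check it directly against the definitions of the two bases rather than build any new construction. The argument has three steps, taken in order.

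\emph{Affine maps and residuals.} A fixed affine map $x\mapsto Ax+b$ is by definition a weight--activation contraction node of $\Bs(\Phi)$ (Definition~\ref{def:strict}, type 1). Since $\Bs(\Phi)\subset\Be(\Phi)$ (Definition~\ref{def:ext}), it is also a node of $\Be$. A residual sum $x+F(x)$ has two inputs, so I first concatenate them into one vector $(x,F(x))$ with a routing node. I then apply the fixed affine map $[I\ I]$ to that vector. This is a fixed linear map of the concatenation and involves no product of two activations. Routing maps (copy, reshape, permutation, concatenation, broadcast, masking, selection) are listed verbatim as node type 3 in both bases.

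\emph{GELU MLP.} $\gelu$ belongs to both $\Phi_{\mathrm{strict}}$ and $\Phi_{\mathrm{ext}}$ (Definitions~\ref{def:phistrict} and~\ref{def:phiext}), so entrywise $\gelu$ is a unary-map node in both bases. The map $x\mapsto A_2\gelu(A_1x+b_1)+b_2$ is then the composition of three nodes: $\Aff_{A_1,b_1}$, then $\gelu$, then $\Aff_{A_2,b_2}$. This composition is exact because $\langle\mathcal B\rangle$ is closed under composition; chaining finite acyclic graphs yields a finite acyclic graph.

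\emph{Approximate-GELU variant.} If the model instead uses $x\,\sigma(1.702x)$, Definition~\ref{def:transforms} adds $\sigma$ and removes $\gelu$ from the unary set. The activation is then a product of two activations, so it is not a single unary node. In $\Be$ I compute it as one Hadamard node applied to $x$ and $\sigma(1.702x)$, where the scaling $1.702$ sits in the preceding affine node. In $\Bs$ I compute it with the polarization graph $M(x,\sigma(1.702x))$ defined in the Notation.

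This lemma is mostly bookkeeping. The only point that needs care is the residual sum, which must be shown to be a fixed linear map of a concatenation and not a product. Since the statement refers only to $\gelu$, I would treat the approximate-GELU variant as a remark rather than part of the proof.
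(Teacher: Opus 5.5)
Your proposal is correct and takes essentially the same approach as the paper. The paper treats the stated part as immediate from the definitions and writes out only the approximate-GELU case, exactly as you do: the polarization graph $M\bigl(x,\sigma(1.702x)\bigr)$, with two squares per hidden unit, in the strict basis, and one Hadamard node in the extended basis.
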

\begin{proof}
For the approximation, $x\,\sigma(1.702x)$ is one sigmoid after a fixed scaling and one product: $M\bigl(x,\sigma(1.702x)\bigr)$, two squares per hidden unit, in the strict basis, and one Hadamard node in the extended basis.
\end{proof}

\begin{lemma}[LayerNorm]
\label{lem:ln}
$\LN\in\langle\Bs\rangle$ and $\LN\in\langle\Be\rangle$.
\end{lemma}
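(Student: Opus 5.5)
I will build LayerNorm for one token $x\in\R^n$ as a short chain of nodes, in the order the formula is evaluated. The construction is essentially the same in both bases; only the realization of products differs.

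\emph{Centering.} The map $x\mapsto y=x-\mean(x)\mathbf 1=(I-\tfrac1n\mathbf 1\mathbf 1^\top)x$ is one fixed affine node.

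\emph{Variance.} In $\Bs$, square each entry with a unary node, giving $y_i^2$. A fixed affine node then forms
\[
u=\tfrac1n\textstyle\sum_i y_i^2+\eps=\Var(x)+\eps .
\]
This is exact because $\Var(x)$ is the mean of the squared centered entries. Since $\eps>0$ we have $u>0$, so the unary inverse square root gives $r=u^{-1/2}$ exactly, one evaluation per token. In $\Be$ the square is the Hadamard node $y\odot y$, and the rest is unchanged.

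\emph{Normalization.} The output is $\beta+\gamma\odot(y\,r)$, where $y_ir$ is a product of two activations.
\begin{itemize}[leftmargin=1.5em,itemsep=0pt]
\item In $\Be$, this is one Hadamard node with $r$ broadcast, followed by a fixed affine node applying $\mathrm{diag}(\gamma)$ and $\beta$.
\item In $\Bs$, we need each $y_ir$ through squares. The squares $y_i^2$ are already computed, and $r^2$ costs one more square per token. Then the norm-expansion graph $\tilde M(y_i,r)$ of the Notation section gives
\[
y_ir=\tfrac12\bigl[(y_i+r)^2-y_i^2-r^2\bigr].
\]
This needs one new square per entry, after a fixed affine node forms $y_i+r$ using routing broadcast of $r$. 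Alternatively, the polarization graph $M(y_i,r)$ works with two squares per entry. A final fixed affine node applies $\gamma,\beta$; since $\gamma$ is a parameter, scaling by it is affine.
\end{itemize}

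\emph{Composition.} The affine, unary and routing nodes compose, because $\langle\mathcal B\rangle$ is closed under composition. Applying the construction per token by routing gives LayerNorm on the full $N\times n$ input.

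\emph{Main obstacle.} The only delicate point is that no step may use a product of two input-dependent quantities in $\Bs$. The variance is a product $y_i\cdot y_i$, which is handled as a unary square. The normalizing product $y_ir$ is the one that needs the squares identity. Exactness holds over the reals, and the domain condition $u>0$ for the inverse square root is guaranteed by $\eps>0$.

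\emph{Work count.} This gives the count of Table~\ref{tab:pw}: in $\Bs$, about $n$ squares for the variance and $n$ for the products, plus one square and one inverse square root per token, i.e. $2Nn+2N$ with $n=hd$. In $\Be$, only the $N$ inverse square roots remain as unary evaluations.
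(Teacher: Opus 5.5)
Your proposal is correct and follows the paper's proof step for step. The steps are: centering as a fixed affine node; the variance from squares in $\Bs$, or from $y\odot y$ in $\Be$, followed by an affine mean with $\eps$ folded in; one inverse square root per token; and the normalizing product $y_i r$, formed in $\Bs$ by the norm expansion $\tilde M$ that reuses $y_i^2$ and forms $r^2$ once per token, and in $\Be$ by a broadcast Hadamard node. Your unary counts, $2Nn+2N$ in $\Bs$ and $N$ in $\Be$, also match the paper's.
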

\begin{proof}
Let $u=x-\mean(x)\1$, which is affine. Strict: $r^2=\mean_i(u_i^2)$ uses squares and an affine mean, $a=(r^2+\eps)^{-1/2}$ is one unary evaluation per token, and $z_i=\tilde M(u_i,a)=u_i/\sqrt{r^2+\eps}$ reuses $u_i^2$ and forms $a^2$ once per token, so it costs one square per entry and one per token. Then $\LN_i(x)=\beta_i+\gamma_iz_i$ is affine in $z$. Extended: $r^2=\mean(u\odot u)$ is a Hadamard product followed by an affine mean, $a=(r^2+\eps)^{-1/2}$ is one unary evaluation per token, and $\LN(x)=\beta+\gamma\odot(u\odot a)$, where the product with the fixed $\gamma$ is affine and $u\odot a$ is a broadcast Hadamard product.
\end{proof}

\begin{lemma}[Scores and value mixing]
\label{lem:score}
$(Q,K)\mapsto QK^\top/\sqrt d$ and $(P,V)\mapsto PV$ lie in $\langle\Bs\rangle$ and in $\langle\Be\rangle$.
\end{lemma}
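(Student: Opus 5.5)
The plan is to realize both bilinear maps entry by entry, writing each output entry as a sum of scalar products. Each product is then compiled by the norm-expansion graph $\tilde M$ in the strict basis, and by a single product node in the extended basis. The aim in the strict case is to match the counts stated after Theorem~\ref{thm:lower}: $N^2d+2Nd$ unary evaluations for $\mathrm{Score}$ and $N^2d+N^2+Nd$ for $\mathrm{Mix}$.

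\emph{Extended basis.} Here the statement is immediate. $QK^\top$ is one MatMul node of two activations; the transpose is routing, and the factor $1/\sqrt d$ is a fixed affine scaling. $PV$ is one MatMul node. Per head, and batched over heads, these are BMM nodes. All of these are nodes of $\Be$ by Definition~\ref{def:ext}, so both maps lie in $\langle\Be\rangle$.

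\emph{Strict basis, $\mathrm{Score}$.} For each row I form the squares $q_{il}^2$ and $k_{jl}^2$ once, at a cost of $2Nd$ square evaluations. By broadcast routing I then form every pair sum $q_{il}+k_{jl}$ with one fixed affine node, and square it, which costs $N^2d$ squares. Finally one fixed affine node outputs
\[
\frac{1}{2\sqrt d}\sum_{l=1}^d\bigl[(q_{il}+k_{jl})^2-q_{il}^2-k_{jl}^2\bigr]=\frac{(QK^\top)_{ij}}{\sqrt d}.
\]
This is exactly $\tilde M$ applied entrywise, with the self-squares shared across products, as in Winograd's inner-product trick. Every node is affine, routing, or the unary square. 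The identity holds for all real inputs, so the graph is exact.

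\emph{Strict basis, $\mathrm{Mix}$.} I apply the same construction with the roles of the factors changed: share $P_{ij}^2$ ($N^2$ squares) and $V_{jl}^2$ ($Nd$ squares), then form and square the $N^2d$ sums $P_{ij}+V_{jl}$. The output is $\sum_j\tfrac12[(P_{ij}+V_{jl})^2-P_{ij}^2-V_{jl}^2]$, which totals $N^2d+N^2+Nd$ square evaluations.

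The only delicate step is the bookkeeping needed to confirm that the shared squares are counted once per factor entry rather than once per product. This is what separates $N^2d+2Nd$ from the $2N^2d$ that naive polarization with $M$ would cost. There is no genuine obstacle to exactness: the identity $ab=\tfrac12[(a+b)^2-a^2-b^2]$ is algebraic, so correctness on all of $\R$ follows directly.
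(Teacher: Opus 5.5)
Your proposal is correct and follows the paper's proof. In $\Be$ you use one MatMul/BMM node plus a fixed scaling, and in $\Bs$ the norm expansion with each factor's self-squares formed once, giving $N^2d+2Nd$ and $N^2d+N^2+Nd$ squares per head. The only difference is cosmetic: for $\mathrm{Mix}$ the paper writes $(PV)_{ij}=\tfrac12\bigl[\sum_\ell\bigl(v_{\ell j}^2-(v_{\ell j}-p_{i\ell})^2\bigr)+\sum_\ell p_{i\ell}^2\bigr]$ and subtracts the two close squares per term inside the summing node, a choice made for bf16 rounding that changes neither exactness nor the count.
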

\begin{proof}
Strict: $S_{ij}=(2\sqrt d)^{-1}\bigl[\sum_f(q_{if}+k_{jf})^2-\|q_i\|^2-\|k_j\|^2\bigr]$ and $(PV)_{ij}=\tfrac12\bigl[\sum_\ell\bigl(v_{\ell j}^2-(v_{\ell j}-p_{i\ell})^2\bigr)+\sum_\ell p_{i\ell}^2\bigr]$, with the norms formed once per row and the sums affine: $N^2d+2Nd$ squares per head for the scores and $N^2d+N^2+Nd$ for the mix. In the mix the two close squares are subtracted per term, inside the node that sums over $\ell$. Extended: each map is one $\mathrm{MatMul}$ or $\mathrm{BMM}$ node followed by a fixed scaling.
\end{proof}

\begin{lemma}[Softmax]
\label{lem:softmax}
$s\mapsto\softmax(s)$ lies in $\langle\Bs\rangle$ and in $\langle\Be\rangle$.
\end{lemma}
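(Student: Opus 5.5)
We build softmax row by row in both bases. The first step is the same in each: stabilize by subtracting the row maximum. Let $m_i=\max_j s_{ij}$. This is routing (selection by comparison), and $t_{ij}=s_{ij}-m_i\le 0$ is then an affine combination of routed values. Subtracting $m_i$ does not change the softmax, and it keeps every later quantity bounded: $e^{t_{ij}}\in(0,1]$, and the row sum $Z_i=\sum_j e^{t_{ij}}$ lies in $[1,N]$, so $Z_i>0$. This positivity is what lets the inverse square root and the reciprocal be applied exactly.

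\textbf{Extended basis.} Apply $\exp$ entrywise to $t$, which costs one unary evaluation per entry. Form $Z_i$ by a fixed affine sum and apply $(\cdot)^{-1}$ once per row. Finally take the broadcast Hadamard product $e^{t_{ij}}\odot Z_i^{-1}$. This is exactly the source graph, one exponential per entry and one reciprocal per row, so nothing beyond the definition is needed.

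\textbf{Strict basis.} We need the exponential, the reciprocal and one product per entry, all without Hadamard nodes.

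First, the exponential. Apply $\sigma$ to $t_{ij}$ and, after a fixed negation, to $-t_{ij}$. Then use $e^{t}=\sigma(t)/\sigma(-t)$. Since $t\le 0$, the denominator satisfies $\sigma(-t)\in[1/2,1)$, which is strictly positive. Its reciprocal is $\bigl(\sigma(-t)^{-1/2}\bigr)^2$: one inverse square root followed by one square.

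Second, the product $\sigma(t)\cdot\sigma(-t)^{-1}$. Both factors are activations, so we use the polarization graph $M$, which needs two squares.

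Third, the normalization. Sum $e^{t_{ij}}$ affinely to get $Z_i$, then form $Z_i^{-1}=\bigl(Z_i^{-1/2}\bigr)^2$, again one inverse square root and one square per row.

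Fourth, the output $\pi_{ij}=e^{t_{ij}}\cdot Z_i^{-1}$. This is one more activation–activation product, realized by $M$ with two squares per entry. We could reduce this with $\tilde M$ by sharing the squares of each factor, but exactness does not need it.

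Each step is a node of the strict basis: affine maps, routing, or a unary map from $\Phi_{\mathrm{strict}}$ applied to an argument in its domain. Exactness then follows from closure of $\langle\Bs\rangle$ under composition.

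\textbf{Main obstacle.} The delicate point is domain validity, not the algebra. The inverse square root must only ever see strictly positive arguments. The max-subtraction guarantees this for $\sigma(-t)\ge 1/2$ and for $Z_i\ge 1$. Under a mask, masked entries are routed out before the maximum and the sum are taken, so $Z_i$ still counts at least the maximizing entry. With this handled, the per-entry count is $2$ sigmoids, $1$ inverse square root, $1$ square for the reciprocal and $4$ squares for the two polarizations, i.e.\ $8$ evaluations per entry. Per row there is $1$ inverse square root and $1$ square. This matches the $8hN^2+2hN$ entry of Table~\ref{tab:pw}, which confirms the choice of polarization over norm expansion here.
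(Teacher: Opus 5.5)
Your proposal is correct and follows the paper's proof essentially step for step: the routed row maximum as shift, $n=M\bigl(\sigma(t),(\sigma(-t)^{-1/2})^2\bigr)$ and $p=M\bigl(n,(Z^{-1/2})^2\bigr)$ in the strict basis, and one $\exp$ per entry, one reciprocal per row and a broadcast Hadamard product in the extended basis, with the same count of $8$ unary evaluations per entry and $2$ per row. One aside is off, though it lies outside the proof: $\tilde M$ would not reduce the last product, because each $e^{t_{ij}}$ enters only one product and its square is not shared, so $\tilde M$ costs two squares per entry plus one per row, which is why polarization is the cheaper choice there.
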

\begin{proof}
Let $t_i=s_i-\max_ls_l\le0$, where the maximum is a routing selection, and $n_i=e^{t_i}$, so that $Z=\sum_jn_j\ge1$. The shift leaves $\softmax$ unchanged and keeps $\sigma(-t_i)$ in $[\tfrac12,1)$ and $Z$ in $[1,N]$, so every inverse square root below acts on a number in $[\tfrac12,N]$. Strict: since $\sigma(t)/\sigma(-t)=e^t$ for every $t$, $n_i=M\bigl(\sigma(t_i),\,(\sigma(-t_i)^{-1/2})^2\bigr)$, with two sigmoids, one inverse square root and three squares per entry; $Z$ is affine in $n$, and $p_i=M\bigl(n_i,(Z^{-1/2})^2\bigr)$ adds two squares per entry and one inverse square root and one square per row. Extended: $n_i=\exp(t_i)$, one exponential per entry, and $p=n\odot Z^{-1}$, one reciprocal per row and a broadcast Hadamard product.
\end{proof}

\begin{lemma}[Tail-mass top-$k$ attention]
\label{lem:topk}
The map of Definition~\ref{def:tail} from $(s,V)$ to $\hat o$, for a fixed calibrated $q$ or for $q=\tau$, lies in $\langle\Bs\rangle$ and in $\langle\Be\rangle$.
\end{lemma}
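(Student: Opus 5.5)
We build the graph stage by stage from the map of Definition~\ref{def:tail}, using the all-values form~\eqref{eq:allvalues}. The plan reuses Lemmas~\ref{lem:basic}--\ref{lem:softmax} and the polarization graphs $M$ and $\tilde M$.

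\emph{Selection.} The first stage is pure routing. $I=\TopK(s,k)$ uses the fixed tie-breaking rule, and $s_I=\operatorname{gather}(s,I)$. The kept values $V_I$ are gathered by the same indices. A fixed affine node (a sum over rows, or a prefix sum under a causal mask) forms the sum of all values $\Sigma=\sum_{j=1}^Nv_j$.

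\emph{Kept weights.} Lemma~\ref{lem:softmax}, applied to the $k$-vector $s_I$, gives $\tilde\pi_I$ in both bases. The max shift is taken over $I$, so every inverse square root still acts on a number in $[\tfrac12,k]$.

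\emph{Calibrated $q$.} The coefficients $c_j=(1-q)\tilde\pi_j-q/m$ are affine in $\tilde\pi_I$, because $q$ and $m$ are fixed. The output is $\hat o=\sum_{j\in I}c_jv_j+(q/m)\Sigma$. The second term is affine. The first is the value mix of Lemma~\ref{lem:score} with $P$ replaced by the $1\times k$ row $c$ and $V$ by $V_I$. In the strict basis this is polarization or the norm expansion; in the extended basis it is one BMM node.

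\emph{Exact $q=\tau$.} Here the dense normalizer is needed. Let $\mu=\max_ls_l$ (routing), $n_j=e^{s_j-\mu}$ for all $j$ (built as in Lemma~\ref{lem:softmax}), $Z=\sum_jn_j$ and $Z_I=\sum_{j\in I}n_j$; both sums are affine. The kept weights are then formed directly as $\pi_j=n_j\cdot Z^{-1}$ for $j\in I$, which uses only sigmoids, inverse square roots and $M$. The tail mass $\tau=1-\sum_{j\in I}\pi_j$ is affine. The coefficients become $c_j=\pi_j-\tau/m$, which is affine, and $\hat o=\sum_{j\in I}c_jv_j+(\tau/m)\Sigma$.

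The step I expect to be the main obstacle is that $\tau$ is now an activation, so $(\tau/m)\Sigma$ is a product of two activations. In the strict basis I would write each of its $d$ coordinates as $M(\tau,\Sigma_f/m)$, costing two squares per coordinate. In the extended basis it is one broadcast Hadamard node.

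It remains to check that every inverse square root sees a positive argument. This holds because $Z\ge1$ and $\sigma(-t)\ge\tfrac12$ after the shift, and because $m\ge1$ whenever $k<N$. The cases $k=0$ and $k\ge N$ reduce to the mean of the values and to dense attention, both covered by earlier lemmas. Composition closure of $\langle\mathcal B\rangle$ then completes the proof.
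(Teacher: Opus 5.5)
Your proposal is correct and follows the paper's proof essentially step for step. The shared steps are: routing for $\TopK$ and gather; the all-values form; Lemma~\ref{lem:softmax} on $s_I$ in calibrated mode, or on all scores with the dense maximum as shift in exact mode; affine coefficients $\hat p_j-q/m$; the value mix of Lemma~\ref{lem:score} over the kept values; and the exact-mode tail term $(\tau/m)\sum_j v_j$ as the one genuine product of two activations. The only difference is that you realize that product in the strict basis by polarization $M(\tau,\Sigma_f/m)$, at two squares per coordinate, whereas the paper uses the norm expansion with $\tau^2$ formed once per row and the squares of $\sum_j v_j$ once per head; both are exact, and the paper's choice just costs less pointwise work.
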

\begin{proof}
Masks are fixed routing and affine maps. We treat three cases.

\emph{$k=0$.} Definition~\ref{def:tail} gives $\hat o=\frac1N\sum_jv_j$, a fixed affine map.

\emph{$k\ge N$.} Nothing is dropped and the map is dense attention, covered by Lemmas~\ref{lem:score} and~\ref{lem:softmax}.

\emph{$1\le k<N$.} $I=\TopK(s,k)$ and $\hat s=\operatorname{gather}(s,I)$ are routing. In both modes the graph computes the all-values form~\eqref{eq:allvalues}, which equals Definition~\ref{def:tail} because $\sum_{j\in J}v_j=\sum_{j=1}^Nv_j-\sum_{j\in I}v_j$. Its parts are the coefficients $\omega_j=\hat p_j-q/m$, the kept mixing $\sum_{j\in I}\omega_jv_j$, and the tail term $(q/m)\sum_{j=1}^Nv_j$. The kept mixing is the norm expansion of Lemma~\ref{lem:score} over the $k$ kept values in the strict basis and one product node over $k$ values in the extended basis.
In calibrated mode, $\tilde\pi$ follows Lemma~\ref{lem:softmax} on $\hat s$, with the selected maximum $\hat s_1$ as shift; $\hat p_j=(1-q)\tilde\pi_j$ and $\omega_j$ are affine in $\tilde\pi$, and the tail term is a fixed affine map of $V$.
In exact mode, the dense normalizer is formed from all scores with the dense maximum as shift: $n_j=\exp(s_j-\max_ls_l)$, $Z=\sum_jn_j$, $\hat p_j=n_j/Z$ for $j\in I$ (Lemma~\ref{lem:softmax}), and $\tau=1-\sum_{j\in I}\hat p_j$. The coefficients $\omega_j=\hat p_j-\tau/m$ are affine in $(\hat p,\tau)$. The tail term $(\tau/m)\sum_jv_j$ is a product of the scalar activation $\tau$ with the $d$-dimensional activation $\sum_jv_j$: a norm expansion with $\tau^2$ once per row and the squares of $\sum_jv_j$ once per head in the strict basis, and one broadcast Hadamard node with $d$ scalar products in the extended basis.
\end{proof}

\subsection{Proofs of Theorem~\ref{thm:strict}, Propositions~\ref{prop:hsr} and~\ref{prop:hadamard} and Theorem~\ref{thm:ext}}

\begin{proof}[Proof of Theorem~\ref{thm:strict}]
Replace every transform of the block by its strict realization from Lemmas~\ref{lem:basic}--\ref{lem:softmax}. Each realization computes its transform exactly, and $\langle\Bs\rangle$ is closed under composition.
\end{proof}

\begin{proof}[Proof of Proposition~\ref{prop:hsr}]
Take the strict graph of Theorem~\ref{thm:strict}. Replace each product graph, $M(x,y)$ or $\tilde M(x,y)$, by one Hadamard node $x\odot y$, drop the squares of the factors it used, and form each remaining square as $x\odot x$. Since $M(x,y)=\tilde M(x,y)=xy$, the new graph computes the same function, and its unary maps are those of $\Phi_{\odot}$. $\mathrm{Score}$ and $\mathrm{Mix}$ become a broadcast Hadamard product and a fixed sum, $S_{ij}=d^{-1/2}\sum_f(q_{i\cdot}\odot k_{j\cdot})_f$ and $(PV)_{ij}=\sum_\ell(p_{i\cdot}\odot v_{\cdot j})_\ell$, with no unary map. LayerNorm keeps one inverse square root per token, as its source graph. The MLP keeps one GELU per hidden unit, or one sigmoid and one Hadamard product for the approximation, as its source graph. Softmax keeps, per entry, two sigmoids and one inverse square root for $n_i=\sigma(t_i)\odot\bigl(\sigma(-t_i)^{-1/2}\odot\sigma(-t_i)^{-1/2}\bigr)$, and, per row, one inverse square root for $1/Z=Z^{-1/2}\odot Z^{-1/2}$ (Lemma~\ref{lem:softmax}). Its source graph pays one exponential per entry and one reciprocal per row, so softmax adds $2N^2$ unary evaluations per head. For the contraction work, the products of $\mathrm{Score}$ and $\mathrm{Mix}$ cost the $N^2d$ multiplications of the source, and each fixed sum adds at most $N^2d$ multiply--accumulate steps per head. Softmax adds two products per entry for $n_i$ and one per row for $1/Z$, so $2N^2+N$ per head; its sum over the row and its product $n\odot Z^{-1}$ are those of the source graph.
\end{proof}

\begin{proof}[Proof of Proposition~\ref{prop:hadamard}]
Replace every transform except $\mathrm{Score}$ and $\mathrm{Mix}$ by its extended realization from Lemmas~\ref{lem:basic}--\ref{lem:softmax}; these use only Hadamard products among the activation--activation products. Each performs the operations of its source graph (Definition~\ref{def:work}). Affine maps, residuals, routing and the GELU MLP are basis nodes. LayerNorm performs, per token, an affine mean, the product $u\odot u$ and its mean, one inverse square root, and the rescaling. Softmax performs one exponential per entry, an affine sum, one reciprocal per row and a broadcast product. So $\Delta\PW=\Delta\MAC=0$ for these transforms. Compute $\mathrm{Score}$ and $\mathrm{Mix}$ as in the proof of Proposition~\ref{prop:hsr}, by a broadcast Hadamard product and a fixed reduction. No unary map is added, so $\Delta\PW=0$, and the reduction adds at most $N^2d$ multiply--accumulate steps per head.
\end{proof}

\begin{proof}[Proof of Theorem~\ref{thm:ext}]
Replace every transform of the block by its extended realization from Lemmas~\ref{lem:basic}--\ref{lem:softmax}. Each realization computes its transform exactly, and $\langle\Be\rangle$ is closed under composition. Every transform except $\mathrm{Score}$ and $\mathrm{Mix}$ has the realization of the proof of Proposition~\ref{prop:hadamard}, with $\Delta\PW=\Delta\MAC=0$. $\mathrm{Score}$ and $\mathrm{Mix}$ are single $\mathrm{MatMul}$ or $\mathrm{BMM}$ nodes of the source shape: $\PW=0$ and $\MAC=N^2d$ before and after. Hence $\Delta\PW=\Delta\MAC=0$ for every transform.
\end{proof}

The bases $\Bs(\Phi_{\mathrm{strict}})$ and $\Be(\Phi_{\mathrm{ext}})$ are not nested. With $\Phi_{\mathrm{all}}=\Phi_{\mathrm{strict}}\cup\Phi_{\mathrm{ext}}$, Theorems~\ref{thm:strict} and~\ref{thm:ext} also hold in the nested bases $\Bs(\Phi_{\mathrm{all}})\subset\Be(\Phi_{\mathrm{all}})$, since enlarging the unary set keeps every graph valid.

\subsection{Proof of Theorem~\ref{thm:lower}}

The proof follows the nonlinear rank of the target. $\Phi$ is an arbitrary set of real functions. For a set $D\subseteq\R^n$ write $\mathcal A(D)$ for the restrictions to $D$ of affine functions on $\R^n$. For $F\colon D\to\R^r$, the nonlinear rank
\[
\nrk_D(F)=\dim\Bigl(\bigl(\operatorname{span}\{F_1,\dots,F_r\}+\mathcal A(D)\bigr)/\mathcal A(D)\Bigr)
\]
counts the coordinates of $F$ that stay linearly independent on $D$ after adding affine functions.

\begin{lemma}[Fixed routing on a set dense in a ball]
\label{lem:region}
Let $G$ be a finite product-free graph with inputs in $\R^n$, and let $U\subseteq\R^n$ be a nonempty open set. There is a set $D\subseteq U$ and an open ball $B\subseteq U$ such that $D$ is dense in $B$ and every routing decision of $G$ is the same at every point of $D$.
\end{lemma}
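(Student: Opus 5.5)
We prove Lemma~\ref{lem:region} by induction over the routing nodes of $G$ in topological order, shrinking the region each time a decision is fixed. A routing decision is a finite outcome. It is a comparison between two computed values, or more generally the index pattern returned by max, $\TopK$ or gather under a fixed tie-breaking rule. Each outcome is determined by finitely many comparisons of the form $y_a(x)\ge y_b(x)$ or $y_a(x)>y_b(x)$, where $y_a,y_b$ are values computed upstream of the node.

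The plan is to maintain a pair $(D_t,B_t)$ such that $B_t\subseteq U$ is an open ball, $D_t\subseteq B_t$ is dense in $B_t$, and the first $t$ routing decisions are constant on $D_t$. We start with $D_0=B_0$, any open ball inside $U$.

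For the inductive step, fix the next routing node. It has finitely many possible outcomes $o_1,\dots,o_r$. Write $E_i=\{x\in D_t:\text{the node yields }o_i\}$, so that $D_t=E_1\cup\dots\cup E_r$ is a finite partition. We now use the Baire-type fact that if a finite union of sets is dense in a ball $B_t$, then at least one of them is dense in some smaller open ball $B'\subseteq B_t$. To see this, suppose no $E_i$ is somewhere dense in $B_t$. Then each closure $\overline{E_i}$ has empty interior in $B_t$. A finite union of closed sets with empty interior has empty interior, so $\bigcup_i\overline{E_i}$ cannot contain $B_t$. This contradicts $\overline{D_t}\supseteq B_t$. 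Hence some $\overline{E_i}$ contains an open ball $B_{t+1}\subseteq B_t$, and we set $D_{t+1}=E_i\cap B_{t+1}$, which is dense in $B_{t+1}$. Every earlier decision remains constant on $D_{t+1}\subseteq D_t$, and the new one is constant by construction.

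The argument needs only finiteness of the outcome set. It needs no continuity of the functions in $\Phi$, which is important because $\Phi$ is arbitrary. After the last of the finitely many routing nodes we set $D=D_T$ and $B=B_T$, and the lemma follows.

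The main obstacle I expect is making precise that each routing node has finitely many outcomes that fully determine its behaviour. Masking, broadcast, copy, reshape and permutation are fixed and input-independent. Max, $\TopK$ and gather by computed indices select among finitely many index tuples under the fixed tie rule. I would state this explicitly, so that once the outcomes are fixed on $D$, every node downstream acts as a fixed selection on $D$. This is exactly what the subsequent Lemma~\ref{lem:affine} needs.
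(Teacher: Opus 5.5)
Your proof is correct and uses the same argument as the paper: finitely many routing outcomes, plus the fact that a finite union of nowhere dense sets is nowhere dense, with no continuity of $\Phi$. The only difference is that the paper partitions $U$ once by the \emph{joint} outcome of all routing nodes and applies this fact a single time, while your node-by-node shrinking of balls applies it once per routing node and reaches the same conclusion.
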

\begin{proof}
$G$ has finitely many routing nodes, and each, with its fixed tie-breaking rule, has finitely many outcomes: an order of finitely many compared values, or a selection among finitely many entries. The joint outcome therefore takes finitely many values, and the sets of points of $U$ with a given joint outcome form a finite partition of $U$. A finite union of nowhere dense sets is nowhere dense, and $U$ is not, so some part $D'$ is not nowhere dense: its closure contains an open ball $B$, which we may take inside $U$. Then $D=D'\cap B$ is dense in $B$.
\end{proof}

No continuity is used: the outcome sets may be arbitrary, so $\Phi$ needs no continuity or measurability.

\begin{lemma}[Affine reduction]
\label{lem:affine}
Let $G$ be product-free with outputs $G_1,\dots,G_r$, let all routing of $G$ be constant on a set $D$, and let $y_1,\dots,y_u$ be the outputs of its unary evaluations, $u=\PW(G)$, viewed as functions on $D$. Then there are affine $\ell_j$ and constants $c_{jt}$ with $G_j=\ell_j+\sum_tc_{jt}y_t$ on $D$.
\end{lemma}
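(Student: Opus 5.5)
I would prove Lemma~\ref{lem:affine} by induction over a topological order of the nodes of $G$, with a strengthened hypothesis that covers every intermediate scalar, not only the outputs. The claim is this: for every scalar value $z$ carried by any edge of $G$, viewed as a function on $D$, there are an affine function $\ell_z$ on $\R^n$ and constants $c_{z,t}$, with $c_{z,t}=0$ for every unary evaluation $t$ that does not precede $z$ in the order, such that $z=\ell_z+\sum_t c_{z,t}y_t$ on $D$. The lemma is the case where $z$ is one of $G_1,\dots,G_r$. Indexing by evaluations rather than nodes is what makes the count $u=\PW(G)$ come out right: an entrywise unary node applied to a vector of length $L$ contributes $L$ separate $y_t$.

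The base case is the input coordinates $x_1,\dots,x_n$, which are affine on $\R^n$, so one takes $c=0$. The inductive step splits by node type.
\begin{enumerate}[leftmargin=1.5em,itemsep=0pt]
\item \emph{Fixed affine node.} Each output coordinate is $\sum_i A_{\cdot i}z_i+b_\cdot$ with constant $A,b$. Substituting the representations of the $z_i$ gives an affine function plus a constant combination of the $y_t$, because affine functions are closed under constant linear combinations and adding constants.
\item \emph{Unary node.} Each output entry is by definition a new evaluation $y_t=\phi(z)$. It is represented by $\ell=0$ and $c_{t}=1$, with all other coefficients zero. This step needs nothing about $\phi$, so no regularity of $\Phi$ enters.
\item \emph{Routing node.} Copy, reshape, permutation, concatenation, broadcast and gather each send an output entry to one input entry. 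Masking either does the same or writes a fixed constant, which is affine. Max and $\TopK$ return selected entries or indices. Since the joint routing outcome is constant on $D$, the index chosen for each output entry is one fixed index on all of $D$. The output entry therefore equals a fixed input entry, or a constant, at every point of $D$, and it inherits that entry's representation. An index output, such as the result of $\TopK$, is a constant on $D$.
\end{enumerate}
Definition~\ref{def:strict} allows no other node type. In particular, no node multiplies two input-dependent quantities, so the induction covers $G$.

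The main obstacle is the routing case, and it is the only place the hypothesis on $D$ is used. One point needs care: the representation must hold pointwise on all of $D$, not only generically. This holds because the routing outcome, with its fixed tie-breaking, is identical at every point of $D$, so the selected position does not vary. Since the identity is only asserted on $D$, no continuity or measurability of the $\phi\in\Phi$ is ever needed, consistent with the remark after Lemma~\ref{lem:region}. Collecting the output coefficients gives $G_j=\ell_j+\sum_t c_{jt}y_t$ on $D$, as claimed.
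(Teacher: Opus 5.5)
Your proof is correct and follows the paper's argument: induction along a topological order shows that every value is, on $D$, an affine function of the inputs plus a constant combination of the $y_t$, with fixed affine nodes preserving this form, each unary evaluation contributing a fresh $y_t$, and routing acting on $D$ as a fixed $0/1$ selection. Your extra care with masking constants, index outputs that are constant on $D$, and counting by evaluations rather than nodes makes explicit what the paper's short proof leaves implicit.
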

\begin{proof}
By induction along a topological order, every value in $G$ is on $D$ an affine function of $(x,y_1,\dots,y_u)$. Inputs are. A fixed affine node maps affine functions to affine functions. A routing node acts on $D$ as a fixed $0/1$ selection matrix. A unary node outputs some $y_t$. No node multiplies two values, so the induction never leaves the affine functions of $(x,y)$.
\end{proof}

\begin{lemma}[Pointwise work bounds nonlinear rank]
\label{lem:nrk}
If a product-free graph $G$ computes $F$ exactly on a set $D$ on which its routing is constant, then $\PW(G)\ge\nrk_D(F)$.
\end{lemma}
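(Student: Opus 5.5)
The plan is to apply Lemma~\ref{lem:affine} on $D$ and then compare dimensions of function spaces modulo $\mathcal A(D)$. Since the routing of $G$ is constant on $D$, Lemma~\ref{lem:affine} gives affine functions $\ell_j$ and constants $c_{jt}$ with $F_j=G_j=\ell_j+\sum_{t=1}^{u}c_{jt}y_t$ on $D$ for every output coordinate $j$, where $u=\PW(G)$ and $y_1,\dots,y_u$ are the outputs of the unary evaluations, viewed as functions on $D$. Here $\ell_j$ is the restriction to $D$ of an affine function of the input $x\in\R^n$, so $\ell_j\in\mathcal A(D)$.

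Hence every $F_j$ lies in the subspace $W=\operatorname{span}\{y_1,\dots,y_u\}+\mathcal A(D)$ of the real functions on $D$. So $\operatorname{span}\{F_1,\dots,F_r\}+\mathcal A(D)\subseteq W$. Passing to the quotient by $\mathcal A(D)$ preserves this inclusion, and $W/\mathcal A(D)$ is spanned by the $u$ classes $[y_t]$. Its dimension is therefore at most $u$, which gives
\[
\nrk_D(F)\le\dim\bigl(W/\mathcal A(D)\bigr)\le u=\PW(G).
\]

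The main point of care is that the $y_t$ are counted as evaluations, not nodes. Lemma~\ref{lem:affine} already indexes them by scalar unary evaluations, so a node applied entrywise contributes one $y_t$ per entry. This matches Definition~\ref{def:work}. The $y_t$ may be arbitrary functions, since no continuity of $\Phi$ is needed; the argument is pure linear algebra on the vector space of real functions on $D$.

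Every step is routine once Lemma~\ref{lem:affine} is available. The one check I would make is that the $\ell_j$ produced there really are affine in the graph inputs $x$ and not in intermediate quantities. This holds because the induction expresses every value as an affine function of $(x,y)$, and the $y$-part has been separated out into the sum $\sum_t c_{jt}y_t$.
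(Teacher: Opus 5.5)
Your proof is correct and is essentially the paper's argument. You use Lemma~\ref{lem:affine} to place each $F_j$ in $\operatorname{span}\{y_1,\dots,y_u\}+\mathcal A(D)$, then note that the quotient of this space by $\mathcal A(D)$ is spanned by the $u=\PW(G)$ classes $[y_t]$, so $\nrk_D(F)\le u$.
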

\begin{proof}
By Lemma~\ref{lem:affine}, every $F_j$ equals $\sum_tc_{jt}y_t$ modulo $\mathcal A(D)$. In the ambient quotient of all real functions on $D$ by $\mathcal A(D)$, the classes of $F_1,\ldots,F_r$ lie in the span of the $u$ unary-output classes. Their span therefore has dimension at most $u$.
\end{proof}

\begin{lemma}[Rank of quadratic forms]
\label{lem:quad}
Let $D$ be dense in an open ball $B\subseteq\R^n$, and let $F_1,\dots,F_r$ be homogeneous quadratic polynomials on $\R^n$ such that each $F_s$ has a monomial, with nonzero coefficient, that no other $F_t$ has. Then $\nrk_D(F)=r$.
\end{lemma}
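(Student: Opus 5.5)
The bound $\nrk_D(F)\le r$ is immediate, since the quotient is spanned by the $r$ classes of $F_1,\dots,F_r$. For the reverse inequality I will show that the classes are linearly independent modulo $\mathcal A(D)$. Suppose constants $c_1,\dots,c_r$ and an affine function $\ell$ on $\R^n$ satisfy $\sum_s c_sF_s=\ell$ on $D$. The goal is to conclude that every $c_s=0$.

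The first step moves the identity from $D$ to all of $\R^n$. Both sides of $\sum_sc_sF_s-\ell=0$ are polynomials, hence continuous on $\R^n$. The identity holds on $D$, and $D$ is dense in $B$, so it holds on the open ball $B$. A polynomial that vanishes on a nonempty open set is the zero polynomial. I will cite this standard fact, or argue that all its Taylor coefficients at the centre of $B$ vanish. Therefore $\sum_sc_sF_s=\ell$ as polynomials on $\R^n$.

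The second step compares homogeneous components. The left side is homogeneous of degree $2$, and $\ell$ has degree at most $1$. Comparing degree-$2$ parts gives $\sum_sc_sF_s=0$, and comparing lower-degree parts gives $\ell=0$. Now fix $s$ and take the monomial $\mu_s$ that appears in $F_s$ with nonzero coefficient $a_s$ and in no other $F_t$. The coefficient of $\mu_s$ in $\sum_tc_tF_t$ is then $c_sa_s$. This coefficient must vanish, so $c_s=0$. Since $s$ was arbitrary, all $c_s=0$, the $r$ classes are independent, and $\nrk_D(F)=r$.

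The only point that needs care is the first step, which is where density in a ball replaces any continuity assumption on the graph. It uses only that the $F_s$ and $\ell$ are polynomials, so it is unaffected by the arbitrary outcome sets of Lemma~\ref{lem:region}. Everything after that is coefficient bookkeeping.
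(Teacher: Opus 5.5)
Your proof is correct and follows the paper's argument step for step. Like the paper, you extend the affine identity from $D$ to $B$ by continuity and density, then to all of $\R^n$ because the two sides are polynomials, drop the affine part by homogeneity, and read off each coefficient from the monomial unique to $F_s$. You also state the upper bound $\nrk_D(F)\le r$ explicitly, which the paper leaves implicit.
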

\begin{proof}
Suppose $\sum_s\lambda_sF_s$ agrees on $D$ with an affine function. Both sides are continuous and $D$ is dense in $B$, so they agree on $B$. Polynomials that agree on an open set agree everywhere, and a homogeneous quadratic polynomial that is affine is zero. The coefficient of the monomial that only $F_s$ has is $\lambda_s$ times a nonzero number, so every $\lambda_s$ vanishes.
\end{proof}

\begin{proof}[Proof of Theorem~\ref{thm:lower}]
Let $G$ be a product-free graph with inputs in $\R^n$ that computes $S=QK^\top/\sqrt d$ exactly, and take $D$ and $B$ from Lemma~\ref{lem:region} with $U=\R^n$. The entry $S_{ij}=d^{-1/2}\sum_fq_{if}k_{jf}$ is a homogeneous quadratic polynomial, and no other entry has its monomials $q_{if}k_{jf}$. Lemma~\ref{lem:quad} gives $\nrk_D(S)=N^2$, and Lemma~\ref{lem:nrk} gives $\PW(G)\ge N^2$. For $\mathrm{Mix}$, the entry $(PV)_{ij}=\sum_\ell p_{i\ell}v_{\ell j}$, $(i,j)\in[N]\times[d]$, has the monomials $p_{i\ell}v_{\ell j}$, which no other entry has, so Lemmas~\ref{lem:quad} and~\ref{lem:nrk} give $\PW(G)\ge Nd$.
\end{proof}

\begin{proof}[Proof of Corollary~\ref{cor:time}]
A transform-wise product-free compilation of an $h$-head block contains its own $\mathrm{Score}$ and $\mathrm{Mix}$ subgraph for each head, each exact for all arguments and sharing no unary evaluation with the others, so these subgraphs have $\PW\ge h(N^2+Nd)$ by Theorem~\ref{thm:lower}. Unary maps run only on the pointwise unit, at rate at most $R_{\mathrm{pw}}$, so they take at least $h(N^2+Nd)/R_{\mathrm{pw}}$. In the extended graph, $\mathrm{Score}$ and $\mathrm{Mix}$ have $\PW=0$ and $\MAC=2hN^2d$ and, by assumption, take $2hN^2d/R_{\mathrm{mac}}$. The first time exceeds the second exactly when $\kappa>2N^2d/(N^2+Nd)=2Nd/(N+d)$, which is below $2d$. The strict graph of Theorem~\ref{thm:strict} has $\PW=h(2N^2d+N^2+3Nd)>2hN^2d$ in $\mathrm{Score}$ and $\mathrm{Mix}$ (Lemma~\ref{lem:score}), hence a pointwise time of at least $\kappa$ times the extended time.
\end{proof}

\subsection{Proofs for clustered attention}

\begin{proof}[Proof of Lemma~\ref{lem:gap}]
The normalizer is at least $\sum_{j\in I}e^{s_j}\ge k\,e^{s_{(k)}}$, and the tail sum $\sum_{j\in J}e^{s_j}$ is at most $m\,e^{s_{(k+1)}}$, so $\tau\le\frac mk e^{s_{(k+1)}-s_{(k)}}$.
\end{proof}

\begin{proof}[Proof of Lemma~\ref{lem:shared}]
Let $R=\max_l\|k_l\|$. For every key, $|q_i\cdot k_j-c\cdot k_j|\le\|q_i-c\|\,\|k_j\|\le\rho R$. For $j\in I$ and $l\in J$, $(c\cdot k_j-c\cdot k_l)/\sqrt d\ge(q_i\cdot k_j-q_i\cdot k_l)/\sqrt d-2\rho R/\sqrt d\ge\Delta_k-2\rho R/\sqrt d>0$, so every key of $I$ scores above every key of $J$ under $c$.
\end{proof}

\begin{proof}[Proof of Proposition~\ref{prop:sparse}]
Exactness: replace softmax and value mixing by the realization of Lemma~\ref{lem:topk} in each basis, and the other transforms as in Theorems~\ref{thm:strict} and~\ref{thm:ext}. Work in the extended basis: for $k=0$ the map is an affine average and for $k\ge N$ it is dense attention, both equal to their source graphs. For $1\le k<N$ the source graph is the all-values form~\eqref{eq:allvalues} with the $q$ of the mode, and the extended realization of Lemma~\ref{lem:topk} is this graph. $\TopK$ and gather are routing. The softmax follows Lemma~\ref{lem:softmax}, with one exponential per kept score in calibrated mode and one per score in exact mode, and one reciprocal per row. The kept mixing is one product node over $k$ values. The tail term is a fixed affine map in calibrated mode and one broadcast product node with $d$ scalar products in exact mode. Hence $\Delta\PW=\Delta\MAC=0$ for every transform in both modes.
\end{proof}

\begin{proof}[Proof of Theorem~\ref{thm:tail}]
(i) Given $O$, the normalizer $Z=\sum_je^{s_j}$ is a symmetric function of the tail scores, so the tail weights $\pi_j=e^{s_j}/Z$, $j\in J$, are exchangeable given $O$, and $\E[\pi_j\mid O]$ is the same for all $j\in J$. These $m$ values sum to $\E[\tau\mid O]$, so each equals $\E[\tau\mid O]/m$. For $j\in I$, $\pi_j=(1-\tau)\tilde\pi_j$ and $\tilde\pi_j$ is a function of $O$, so $\E[\pi_j\mid O]=(1-\E[\tau\mid O])\tilde\pi_j$. With $q=\E[\tau\mid O]$ this is $\hat p=\E[\pi\mid O]$. Since $V$ is part of $O$ and $o$ is linear in $\pi$, $\E[o\mid O]=\sum_j\E[\pi_j\mid O]v_j=\hat o$. The weights lie in $[0,1]$ and $\E\|v_j\|^2<\infty$, so $\pi$ and $o$ are square integrable, and a conditional expectation of a square-integrable variable is unbiased and minimizes the mean squared error over all functions of the conditioning variables. In exact mode, $\tau$ is observed. Given $O$, $\tau$ is a symmetric function of the tail scores, so they stay exchangeable given $(O,\tau)$. Then $\E[\pi_j\mid O,\tau]$ is the same for all $j\in J$ and sums to $\tau$ over $J$, so it equals $\tau/m$, and $\E[\pi_j\mid O,\tau]=(1-\tau)\tilde\pi_j$ for $j\in I$. This is $\hat p$ with $q=\tau$.
(ii) For $q=0$, $\hat p_I=\tilde\pi_I$ and $\hat p_J=0$, so $\E[\hat p-\pi\mid O]=(\tilde\pi_I-(1-\E[\tau\mid O])\tilde\pi_I,\,-\E[\tau\mid O]\1_J/m)=\E[\tau\mid O](\tilde\pi_I,-\1_J/m)$. By (i), $\E[o\mid O]=(1-\E[\tau\mid O])a+\E[\tau\mid O]\bar u$, and $\hat o=a$ for $q=0$, so $\E[\hat o-o\mid O]=\E[\tau\mid O](a-\bar u)$.
\end{proof}

\begin{proof}[Proof of Proposition~\ref{prop:error}]
By Definition~\ref{def:tail}, $\hat o=(1-q)a+q\bar u$, and $o=(1-\tau)a+\tau b$. Subtracting, $\hat o-o=(\tau-q)a+q\bar u-\tau b=\tau(\bar u-b)+(q-\tau)(\bar u-a)$. For $q=0$ this is $\tau(a-b)$. For the distribution, $\hat p_j-\pi_j=(\tau-q)\tilde\pi_j$ on $I$, and $\hat p_j-\pi_j=(q-\tau)/m+(\tau/m-\pi_j)$ on $J$. Squaring and summing, the cross term is $2(q-\tau)m^{-1}\sum_{j\in J}(\tau/m-\pi_j)=0$, since $\sum_{j\in J}\pi_j=\tau$, which leaves $(q-\tau)^2(\|\tilde\pi_I\|_2^2+1/m)+\sum_{j\in J}(\pi_j-\tau/m)^2$.
\end{proof}

\begin{proof}[Proof of Corollary~\ref{cor:sparse}]
Choose an open set $U$ of inputs on which the selection is constant, take $D$ and $B$ from Lemma~\ref{lem:region} inside $U$, and write the selected key indices of query $i$ as $j_{i1},\ldots,j_{ik}$. The selected score $F_{it}=d^{-1/2}\sum_f q_{if}k_{j_{it}f}$ has the monomials $q_{if}k_{j_{it}f}$, which no other selected score has, since the $k$ keys of a query are distinct. Lemmas~\ref{lem:quad} and~\ref{lem:nrk} give $\PW\ge Nk$. For the upper bound, routing gathers the $k$ keys of each query, each selected score costs $d$ squares of $q_i+k_j$, and the norms of the $N$ queries and the $N$ keys cost $2Nd$ (Lemma~\ref{lem:score}).
\end{proof}

\begin{proof}[Proof of Proposition~\ref{prop:routed}]
The assignment scores $q_i\cdot c_r-\tfrac12\|c_r\|^2$ and the centroid scores $c_r\cdot k_j$ are fixed affine maps of $Q$ and $K$, with $hCNd$ multiply--accumulate steps each. $\TopK$ over centroids, $\TopK$ over keys for each centroid, and the gather of each query's list are routing. The selected scores cost $Nkd+2Nd$ squares per head (Corollary~\ref{cor:sparse}), and the kept mixing of Lemma~\ref{lem:topk} costs $Nkd+Nk+Nd$ (Lemma~\ref{lem:score}). With a constant tail mass the tail term is a fixed affine map of $V$. In the extended basis the scores and the mixing are product nodes over $k$ keys, and every unary map is one of the source graph.
\end{proof}

\begin{proof}[Proof of Proposition~\ref{prop:flock}]
$g(\|x_i-x_j\|)=e^{-\theta\|x_i\|^2}\,e^{2\theta x_i^\top x_j-\theta\|x_j\|^2}$. The first factor does not depend on $j$ and cancels against the same factor in the sum over $l$, which leaves the softmax. The score of agent $j$ equals $\theta\|x_i\|^2-\theta\|x_i-x_j\|^2$, whose first term does not depend on $j$; since $\theta>0$, the $k$ largest scores belong to the $k$ nearest agents. Lemma~\ref{lem:shared} holds for these scores with $\sqrt d$ replaced by $1$: the bias of agent $j$ enters the scores of every query equally and cancels in each comparison of its proof.
\end{proof}

\subsection{Models and data}

\emph{DeiT-Tiny.} We use the timm~\cite{wightman2019timm} DeiT-Tiny~\cite{touvron2021deit} (patch 16, $224\times224$ input, $N=197$). For Oxford-IIIT Pet~\cite{parkhi2012pet} (37 classes, 3{,}680 training and 3{,}669 test images) we fine-tune the classifier head for 20 epochs on the training split and keep the epoch with the highest training accuracy, measured without augmentation; the reference reaches $91.36\%$. Pet has no validation split. We report its test split, and no step selects on it. For ImageNet-1k~\cite{deng2009imagenet} we use the pretrained checkpoint ($72.02\%$).

\emph{MambaIRv2-Light.} We use the released $\times2$ and $\times4$ lightweight super-resolution weights~\cite{guo2025mambairv2}. The attention acts on $16\times16$ windows, so $N=256$. Training uses DIV2K~\cite{agustsson2017div2k} patches of $64\times64$ low-resolution pixels; evaluation reports PSNR on the Y channel of Set5~\cite{bevilacqua2012set5} with the standard border crop.

\emph{Pythia-70M.} We use EleutherAI's Pythia-70M~\cite{biderman2023pythia} with eager attention on non-overlapping 128-token windows of WikiText-2 (raw)~\cite{merity2017wikitext}, $2.53\times10^{5}$ validation tokens. The partial rotary embedding multiplies each query and key by a fixed rotation for each position, an affine node.

\subsection{Tail-mass calibration}

Calibrated mode runs the reference model on calibration minibatches from the training split and records, for each block, the dense-softmax mass outside the top-$k$ scores of every query. The block's $q$ is the mean over queries. We use the full Pet training split (115 minibatches of 32), 10 ImageNet training minibatches and 8 DIV2K minibatches. Exact mode computes $\tau$ for each query at run time from the full softmax normalizer. Supplementary Section~S6 compares the tail modes and the calibrated with the exact tail mass.

\subsection{Query-centroid routing}

For each block and head, k-means (k-means++ initialization, 30 Lloyd iterations) fits 32 centroids $c_r$ to at most $2^{16}$ queries from 20 random training minibatches (64 for Pythia-70M), taken after the rotary embedding. Each query goes to its nearest centroid and takes the top $k$ keys of $c_{r(i)}\cdot k_j$, under the causal mask for Pythia-70M. Without a causal mask, each centroid keeps its top $k$ keys and each query takes the list of its centroid, so no $N\times N$ matrix is formed. For causal routing, the implementation materializes an $N\times N$ matrix of centroid selection scores and forms valid-value means with a dense lower-triangular contraction; keeping each centroid's top-$k$ keys and the value sum over successive prefixes removes both dense operations. In calibrated mode and with tail drop the graph gathers the $k$ routed keys of each query and forms only their scores, by the norm expansion in the strict graph and by a batched product in the extended graph. Exact mode forms all scores for its normalizer. The tail term is that of Definition~\ref{def:tail}. For the strict graph with GELU at $k=16$, $32$ and $64$, the latency benchmark gives $4.19$, $3.89$ and $5.41$ times the reference latency with the routed support against $4.34$, $4.58$ and $6.19$ with exact top-$k$; the extended graph gives $3.35$ to $4.37$ against $3.28$ to $4.88$, and its peak memory changes by less than $5\%$. Calibrated mode uses the mean mass outside the routed support on the calibration minibatches of the top-$k$ runs. We report, on the held-out Pet test split and the WikiText-2 validation split, the dense mass inside the routed support relative to the exact top-$k$ support, and the accuracy before fine-tuning. Pythia-70M has no calibrated mode, so its perplexity is compared with tail dropping in both supports. With 8 and 128 centroids the routed mass at $k=16$ is $76\%$ and $92\%$ of the top-$k$ mass on DeiT-Tiny and $89\%$ and $93\%$ on Pythia-70M.

\subsection{Distillation and fine-tuning}

For DeiT-Tiny on Pet we distill~\cite{hinton2015distilling} the compiled model against the frozen reference for one epoch with AdamW (batch 64, learning rate $5\times10^{-5}$, weight decay $0.05$, cosine schedule without warm-up, gradient clipping at $1.0$) and loss $(1-\lambda)\,\mathrm{CE}(s,y)+\lambda\bigl(\mathrm{KL}(s\|t)+\mathrm{KL}(t\|s)\bigr)$, with $\lambda=1$ (Jeffreys) or $\lambda=0$ (cross-entropy), over seeds 42 to 46. Supplementary Section~S7 compares the two losses. Each run keeps the epoch, or the model before training, with the highest training accuracy, measured without augmentation, and later steps use the seed with the highest such accuracy; ties keep the earlier one. The tail mass $q$ is trainable, a scalar parameter per block clamped to $[0,1]$ in the forward pass. For MambaIRv2-Light we fine-tune for three epochs with AdamW (batch 8, learning rate $2\times10^{-4}$, weight decay $10^{-4}$, cosine schedule, gradient clipping at $1.0$) on the $L_1$ loss to the high-resolution targets and keep the epoch with the highest training PSNR on one center crop of each DIV2K training image: the third epoch in 22 of the 26 runs, the second in 3, and the untrained model for $\times2$ at $k=256$. Set5 is only evaluated. Pythia-70M is evaluated without training.

\subsection{Post-training quantization}

\emph{Precision.} We target edge NPUs, so DeiT-Tiny and MambaIRv2-Light run in bf16, which stores each value in half the memory of fp32 with 8 significant bits. With all keys, bf16 changes the accuracy of every DeiT-Tiny graph by at most $0.46$ points and the PSNR of MambaIRv2-Light by at most $0.04$\,dB against fp32 (Supplementary Section~S8). An 8-bit PTQ code drops the 8-bit exponent of a bf16 value and keeps its sign and 7 more bits: all values of a tensor share one scale. Near the largest value of its tensor the code is as fine as bf16, and smaller values lose bits. A square doubles the exponent range of its input: a value at $2^{-4}$ of the largest has a square at $2^{-8}$ of the largest square. Bit widths are those of the input codes. The fixed affine nodes of the strict graph read squares and sums of squares, so their inputs take 12 bits; weights, the logits and the inputs of linear, convolution and product nodes, including the two scalings by the calibrated tail mass in each block, take 8 bits.

\emph{Quantized graph.} Quantized nodes are the linear and convolution layers, the products of two activations or of an activation and a parameter, and the fixed affine nodes with a coefficient tensor, including the residual and position-embedding additions. Each quantized node has one quantizer per input, and this quantizer is also the output quantizer of the preceding node, so every boundary is quantized once. An identity node at the model output gives the last layer an output quantizer too, so the logits are 8-bit codes with one scale, and the model is quantized from the image to the logits. Unary maps, routing, and plain sums and means over an axis have no quantizer of their own. A sum or mean reads integer codes with fixed scales, or their squares, so its result is an integer with a fixed scale. It keeps its full integer range, with no rounding to a specific bit width; the next quantizer sets the width. The graph between two quantizers is thus a unit with integer inputs and integer outputs: a quantized node with its output map, followed by any unary, routing and summing nodes. Every tensor that passes from one unit to the next is an integer code, so the quantized graph is an integer graph however its units are evaluated. On an NPU, a unit is an integer multiply--accumulate followed by a requantization, a fixed multiplier and offset~\cite{jacob2018quantization}, and the lookup-table or piecewise-linear units of the NPU approximate its unary nodes over the calibrated input range~\cite{yu2022nnlut,upadhyay2024nova}.

\emph{Simulation and calibration.} We simulate each unit in fp32~\cite{nagel2021white}, with the integer codes held in fp32 tensors. fp32 holds every integer up to $2^{24}\approx1.7\times10^{7}$, so the integer sums are exact. The run checks this for every node with integer inputs: over the calibration data, the sum of the absolute products of the node stays below $4.5\times10^{6}$, a factor $3.7$ below this limit. The output maps and unary nodes run in fp32. The bf16 float model is the reference. Calibration runs it for two forward passes over the full Pet training split: the first records the maximum absolute value of every node input, the second fits the output maps to its outputs. Activations use one symmetric max-abs scale per input tensor. Linear and convolution weights use one symmetric scale per output channel; affine coefficient tensors use one scale. A linear or convolution node takes its output scale as the product of input and weight scales and fits a per-channel output bias; other nodes fit an affine output map by least squares. A node that combines several inputs of different range, as the norm expansion does with $v^2$ and $(v-p)^2$ or with $u^2$ and $a^2$, quantizes each input on its own grid and rescales it before the sum, as an integer add with per-input multipliers does. Quantizing the logits changes accuracy by $-0.16$ to $+0.30$ points. Supplementary Section~S9 gives PTQ of the strict graph with exact GELU.

\emph{Integer products from squares.} For integer operands both identities hold exactly when no intermediate step overflows or requantizes. For signed 8-bit operands in $[-128,127]$, the sums and differences lie in $[-256,255]$. A shared table of squares needs 512 signed-index entries, or 257 magnitude-index entries, with 17-bit unsigned outputs, and the difference of two squares fits in 18 signed bits. Dividing $(a+b)^2-(a-b)^2$ by four, or $(a+b)^2-a^2-b^2$ by two, before accumulation gives the integer product exactly. Each product has magnitude at most $2^{14}$, so a signed 32-bit accumulator holds a dot product of every length $d<2^{17}$.

\subsection{Timing and unary counts}

Latency is the GPU time of the forward pass alone, divided by that of the reference, on one NVIDIA RTX~4070~Ti~SUPER. Ten batches of 32 images from the held-out Pet test split are first stored on the GPU. Each model runs in bf16 under \texttt{torch.compile} in the default, reduce-overhead and max-autotune modes, and we report its fastest mode: the median of five timed runs over the ten batches, with CUDA events after a warm-up. Peak memory is the maximum allocated GPU memory during an eager evaluation pass over the held-out Pet test split, with only the measured model on the GPU; Supplementary Section~S10 gives the tensor sizes behind it. Unary evaluations are counted with forward hooks on every unary node over one test minibatch. With all keys, the DeiT-Tiny count per image equals twelve block totals of Table~\ref{tab:pw} plus the final LayerNorm: $6.02\times10^{6}$ for the Hadamard graph and $1.96\times10^{8}$ for the strict graph with GELU. The implementation of the extended graph forms the squares $u\odot u$ of its 25 LayerNorms with a unary square node, so its count of $4.17\times10^{6}$ adds $25Nhd=9.46\times10^{5}$ to that sum. With $x\odot\sigma(1.702x)$ the strict graph forms this product from two squares, $8Nhd$ more per block.

\emph{The GPU ratio.} The RTX~4070~Ti~SUPER has 66 Ada streaming multiprocessors at a boost clock of $2.61$\,GHz. Each multiprocessor retires 128 FP32 operations per clock on its CUDA cores and 256 dense bf16 multiply--accumulates per clock with FP32 accumulation on its tensor cores~\cite{nvidia_ada}. This gives $R_{\mathrm{pw}}\approx22$\,T/s, $R_{\mathrm{mac}}\approx44$\,T/s and $\kappa\approx2$.

\subsection{Numerical conventions}

The compiled LayerNorm uses the $\eps$ of each reference: $10^{-6}$ for DeiT-Tiny and $10^{-5}$ for MambaIRv2-Light and Pythia-70M. Softmax normalization divides by the max-shifted partition sum $Z$. The DeiT-Tiny and MambaIRv2-Light checkpoints store their weights in bf16; Pythia-70M runs in fp32 (Supplementary Section~S8). The bf16 runs evaluate the reference and every compiled graph under PyTorch CUDA autocast, which keeps means and sums in fp32; a node that adds several inputs stores its result in the precision of its first input. In float64, against the reference with its weights rounded to bf16 the same way, the extended and strict DeiT-Tiny graphs and the Hadamard graph with exact GELU, all with all keys, reproduce the reference logits on 16 Pet test images to $6\times10^{-8}$. The differences between the compiled and reference models in Table~\ref{tab:strict-fast} therefore come from bf16 weights and arithmetic. The implementation of the strict graph evaluates $(x+\eps)^{-1/2}$ as a single kernel; it equals the inverse square root after a fixed affine map.

\section{Data availability}
All datasets are public: Oxford-IIIT Pet~\cite{parkhi2012pet}, ImageNet-1k~\cite{deng2009imagenet}, DIV2K~\cite{agustsson2017div2k}, Set5~\cite{bevilacqua2012set5} and WikiText-2~\cite{merity2017wikitext}. Pretrained weights come from timm, the MambaIRv2 release and the Hugging Face hub.

\section{Code availability}
The compiler, the experiment configurations and the scripts that produce every table and figure are released under the MIT license at \url{https://github.com/iceslice0/transformer-to-dnn}.

\bibliographystyle{naturemag}
\bibliography{main}

\clearpage
\begin{center}
{\Large\sffamily\bfseries Supplementary Information}
\end{center}

\setcounter{table}{0}
\setcounter{figure}{0}
\setcounter{theorem}{0}
\setcounter{lemma}{0}
\renewcommand{\thetable}{S\arabic{table}}
\renewcommand{\thefigure}{S\arabic{figure}}

\section{S1. Tightness and scope of the bound}

\emph{Squares only.} With $\Phi=\{(\cdot)^2\}$ and one nonlinear layer, $\PW$ is within a factor two of the bilinear rank of the matrix multiplication tensor, since polarization turns each product into two squares and each square is one product. Fast matrix multiplication~\cite{strassen1969,alman2025asymmetry} then gives strict graphs with $\PW=O(n^{2.373})$ for $n\times n$ products, so no $\Omega(N^2d)$ bound holds for this $\Phi$, and the factor $d+2d/N$ between the strict graph of Theorem~\ref{thm:strict} and Theorem~\ref{thm:lower} is not forced. Conversely, on the set of Lemma~\ref{lem:region} a graph over $\{(\cdot)^2\}$ with $\PW$ squarings is an arithmetic circuit with $\PW$ nonscalar multiplications, and nonscalar multiplications never exceed the total arithmetic cost~\cite{burgisser1997act}. For $d=N$, a lower bound of order $N^{2+\delta}$ for this $\Phi$ would therefore prove that the exponent of matrix multiplication is at least $2+\delta$, an open problem. For fixed $d$, $N^2d=O(N^2)$ and the theorem is tight in order. MAC arrays and tensor units run the classical product, and the strict graph follows it.

\emph{Unrestricted $\Phi$.} On a bounded box, $\mathrm{Score}$ is computed by a product-free graph with $N^2+2Nd$ unary evaluations, so no argument that ignores the regularity of $\Phi$ can improve Theorem~\ref{thm:lower} beyond the additive $2Nd$. Map the $n=2Nd$ inputs affinely into $[0,1)$. For $p=1,\dots,n$ let $\psi_p$ send $x\in[0,1)$, written in binary with infinitely many zero digits, to the number whose digit at position $rn+p$ is the $r$-th digit of $x$ and whose other digits are zero. The affine sum $z=\sum_p\psi_p(x_p)$ interleaves the digits of all inputs, has infinitely many zero digits, and so determines every $x_p$. One further unary map per output, $\phi_{ij}(z)=S_{ij}$, decodes the score. The total is $n+N^2$ evaluations. The construction reaches the bound by first encoding its inputs; one evaluation of a linear form alone cannot compute a product on an open set, because $\phi(a\cdot x+b)$ is constant on hyperplanes.

\emph{Transform-wise scope.} Applied to a whole block as one map, the nonlinear rank is at most the number of outputs, $Nhd=3.78\times10^{4}$ for DeiT-Tiny, below the pointwise work of the source block. A compilation that never forms $QK^\top$ is therefore outside Theorem~\ref{thm:lower}. The theorem concerns the attention contractions as transforms, which is the form in which every Transformer implementation and every NPU operator set presents them.

\section{S2. NPU operator support}

Table~\ref{tab:s-npu} answers, for ten edge accelerators and from their documented operator support, whether the dense strict, Hadamard and extended graphs run entirely on the NPU. ``Whole'' means the documentation lists every operation of the graph as supported on the accelerator; otherwise the cell lists the operations left to the host CPU or GPU, or, on R-Car, to the on-chip DSP. MatMul is a matrix product of two activations, as in $QK^\top$ and $PV$, and Rsqrt is the inverse square root. Where the compiler or runtime is open source we read the placement from its code at the commit given in the reference; for the three closed toolchains, Rockchip, Jetson DLA and Edge TPU, we use the vendor documentation. A square runs as the elementwise product $x\odot x$, which every accelerator in the table supports. A softmax counts as on the NPU when the NPU runs it as a native row-wise operator or runs both the exponential and the reciprocal. Where an accelerator lacks GELU, the MLP uses $x\,\sigma(1.702x)$, which every accelerator in the table runs with its sigmoid and its elementwise product; edge deployments also replace GELU by ReLU and fine-tune~\cite{mirzadeh2024relu}. Compiling the graphs with these toolchains and running them on the devices is future work.

\begin{table}[!t]
\caption{\textbf{Documented NPU support of the dense compiled graphs.} Whole-graph compilation on the devices is future work.}
\label{tab:s-npu}
\centering
\small
\setlength{\tabcolsep}{4pt}
\renewcommand{\arraystretch}{1.15}
\begin{tabular}{@{}>{\raggedright\arraybackslash}p{3.3cm} >{\raggedright\arraybackslash}p{3.7cm} >{\raggedright\arraybackslash}p{2.0cm} >{\raggedright\arraybackslash}p{2.0cm} >{\raggedright\arraybackslash}p{3.8cm}@{}}
\toprule
Accelerator & Unary maps on the NPU & Strict graph & Hadamard graph & Extended graph\\
\midrule
Arm Ethos-U55/U65~\cite{arm_vela_code} & Exp, Log, Rsqrt, GELU, Sigmoid, Tanh & Whole & Whole & Whole\\
Arm Ethos-U85~\cite{arm_vela_code} & Exp, Log, Rsqrt, GELU, Sigmoid, Tanh & Whole & Whole & Whole\\
VeriSilicon VIP~\cite{verisilicon_vxdelegate} & Exp, Log, Rsqrt, GELU, Sigmoid, Tanh & Whole & Whole & Whole\\
TI TDA4VM, C7x with MMA~\cite{ti_tidl_ops} & Exp, Log, Sqrt, GELU, Sigmoid, Tanh; power by a constant & Whole & Whole & Whole\\
Rockchip RK3588~\cite{rknn_ops} & GELU, Sigmoid, Tanh; power with exponent $\pm0.5$, 2 or 3 & Whole & Whole & Whole\\
Renesas R-Car V4H/V4M, CNN-IP and DSP~\cite{renesas_hyco} & CNN-IP: ReLU, Sigmoid, Tanh & DSP: Rsqrt & DSP: Rsqrt & DSP: MatMul, softmax, LayerNorm, GELU\\
\midrule
NVIDIA Jetson Xavier/Orin DLA~\cite{nvidia_dla} & ReLU, Sigmoid, Tanh, Abs, Sin, Cos, Atan; INT8 power and division by a constant & Whole & Whole & Host: MatMul\\
Google Edge TPU~\cite{coral_edgetpu_models} & ReLU, Sigmoid, Tanh, Rsqrt & Whole & Whole & Host: MatMul, row-wise softmax\\
Open-source NVDLA IP~\cite{nvdla_sw} & ReLU, Sigmoid, Tanh & Host: Rsqrt & Host: Rsqrt & Host: MatMul, softmax, LayerNorm Rsqrt\\
Arm Ethos-N78~\cite{arm_ethosn} & ReLU, Sigmoid, Tanh & Host: Rsqrt & Host: Rsqrt & Host: MatMul, softmax, LayerNorm Rsqrt\\
\bottomrule
\end{tabular}
\end{table}

The strict graph needs only squares, the sigmoid and the inverse square root, so it runs whole on seven accelerators: Ethos-U55/U65, Ethos-U85, VeriSilicon VIP, the TI TDA4VM, the Rockchip RK3588, the Edge TPU and, in INT8, the Jetson DLA. On the Jetson DLA the inverse square root runs as an INT8 power and the reciprocal as an INT8 division, each with a constant operand, and softmax runs on the NPU on Orin. The R-Car CNN-IP lacks the inverse square root and runs it on the on-chip DSP. The open-source NVDLA IP and Ethos-N78 lack the inverse square root too and leave it, with the LayerNorm and the reciprocal that need it, to the host. The extended graph runs whole on Ethos-U, VeriSilicon, the TDA4VM and the RK3588, and on R-Car with the DSP. The toolchains of the other four have no matrix product of two activations and place the MatMul on the host, on three of them also softmax, and on two also the inverse square root of LayerNorm. The Edge TPU compiler adds a limit of its own: it splits a model only once, so the first unsupported operation and everything after it run on the CPU~\cite{coral_edgetpu_models}. The strict graph runs whole there, and the extended graph moves to the CPU from the first attention score on. On NVIDIA these limits come from the toolchains, not the hardware: the open-source NVDLA IP, which the Jetson DLA is based on, has a programmable lookup table with linear interpolation, two tables of 65 and 257 entries, in its activation and normalization units~\cite{nvdla_lut}. It can hold every function of $\Phi_{\mathrm{strict}}$ and $\Phi_{\mathrm{ext}}$, but both NVIDIA toolchains fill it only for the functions in the table.

Every accelerator multiplies two activations elementwise, so none is product-free in the sense of Definition~\ref{def:strict}: a CNN-class NPU provides the Hadamard basis $\Bh$. There, $QK^\top$ and $PV$ need no pointwise work, since a broadcast Hadamard product followed by a fixed sum computes them (Proposition~\ref{prop:hadamard}). With the sigmoid and the inverse square root, the Hadamard basis also runs the rest of the block (Proposition~\ref{prop:hsr}). The Hadamard graph takes the operations of the strict graph, with the elementwise product between two activations where the strict graph multiplies an activation by itself, so it runs on the accelerators where the strict graph runs. Where the toolchain multiplies only tensors of equal shape, as on the Ethos-N78, broadcast routing expands both operands first, as the strict graph does for its polarization operands. The Ethos-U55 compiler takes this route: it turns a matrix multiplication between activations into an elementwise product into an int32 buffer followed by a sum, one output column at a time. The Ethos-U85 compiler runs the same operation on its MAC array. $\TopK$ runs on the TDA4VM accelerator and on the R-Car DSP. On the other accelerators top-$k$ attention and the routed support keep the selection on the host; Ethos-U and VeriSilicon can instead replace $\TopK$ by $k$ arg-max passes with masking.

\section{S3. Why the bound counts pointwise work}

Write $\mathrm{IO}_L(G)$ for the minimum number of words moved between a fast memory of $L$ words and slow memory, over all schedules of a graph $G$. With bandwidth $\beta$, every schedule takes time
\[
T(G)\ \ge\ \max\Bigl(\mathrm{IO}_L(G)/\beta,\ \MAC(G)/R_{\mathrm{mac}},\ \PW(G)/R_{\mathrm{pw}}\Bigr).
\]
The unfused strict graph writes the $N^2d$ values $q_{if}+k_{jf}$ to memory (Supplementary Section~S10), so this one schedule has arithmetic intensity $\Theta(1)$ and moves far more than $\mathrm{IO}_L$. These writes are a property of the schedule, not of the basis. The strict expansion of $\mathrm{Score}$ in Lemma~\ref{lem:score} has the access pattern of the matrix product it replaces, so the tiling of the matrix product applies. With square output tiles of side $\Theta(\sqrt L)$, streaming over the reduction index, and $L$ small compared with the operands, a fused schedule moves
\[
\mathrm{IO}_L=O\Bigl(\frac{N^2d}{\sqrt L}+N^2+Nd\Bigr)
\]
words per head, where the last two terms are the compulsory traffic of writing the scores and reading $Q$ and $K$; for classical (non-Strassen) schedules of the native product the first term is also a lower bound in this regime~\cite{hong1981io}. A fused strict kernel therefore has the same traffic order as native $QK^\top$. Fusion changes neither the number of unary evaluations nor the unit that must perform them. That is why Theorem~\ref{thm:lower} bounds $\PW$ and Corollary~\ref{cor:time} rests on the third term. The same tiling applies to a Hadamard product followed by a fixed reduction (Proposition~\ref{prop:hadamard}): fused, it has the access pattern of the matrix product, so the Hadamard and extended bases differ only in the schedule a compiler chooses.

\section{S4. Where the compiled graph spends its time}

Table~\ref{tab:s-profile} runs each graph eagerly, times the top-level submodules of every block with CUDA events and records their working memory, for DeiT-Tiny on Pet at $k=16$ (bf16, 20 test batches of 32, RTX~4070~Ti~SUPER). The working memory of a module is the peak memory it allocates above what is in use when it starts; the table reports the largest value over its calls. The strict graph adds $64.8$\,ms per batch over the extended graph; $73.3\%$ of it falls in $\mathrm{Score}$ and $\mathrm{Mix}$, and $9.8$\,ms in softmax. The module totals are $117.0$ and $52.2$\,ms per batch, a ratio of $2.2$. The profiled reference runs its attention unfused, with $QK^\top$, softmax and $PV$ as separate kernels: it spends $9.0$\,ms there, against $4.1$\,ms and $2$\,MiB in its fused attention kernel, with a module total of $25.9$\,ms. At $k=32$ and $k=64$ the shares are $74.8\%$ and $76.5\%$. $\mathrm{Score}$ costs the same at every $k$ because scores are computed densely; $\mathrm{Mix}$ grows with $k$. The routed graph of Table~\ref{tab:strict-fast} forms only the selected scores: $\mathrm{Score}$ falls from $50$ to $11$\,ms per batch and its working memory from $469$ to $77$\,MiB, and the centroid selection adds $9.6$\,ms.

The Hadamard graph of Proposition~\ref{prop:hsr} forms the same broadcast grid as the strict graph, with one product per term instead of a sum and a square. Its $\mathrm{Score}$ takes $31.6$\,ms per batch, against $49.7$\,ms for the strict graph, and $471$\,MiB, the bf16 tensor of the $N^2d$ products per head of one batch; the strict graph holds one tensor of the same size, since it squares its sums in place. Its module total is $78.2$\,ms. With routed support its $\mathrm{Score}$ falls to $9.2$\,ms and $78$\,MiB, the centroid selection adds $9.7$\,ms, and the module total is $65.2$\,ms.

The profile runs one kernel per node and synchronizes around every module, so it shows where eager time goes. Table~\ref{tab:strict-fast} times the compiled forward pass on batches already on the GPU (Methods), where \texttt{torch.compile} fuses the elementwise chains of every graph. At $k=16$ the compiled reference takes $2.20$\,ms per batch, and the extended, Hadamard and strict graphs take $7.21$, $9.18$ and $11.08$\,ms. The strict graph thus takes $1.54$ times the time of the extended graph, against $2.2$ in the eager module totals. Fused kernels keep the intermediates of a chain in fp32; all accuracies come from the eager bf16 runs.

\begin{table}[h]
\caption{\textbf{Per-module GPU time and working memory at $k=16$} in eager execution, summed over the 12 blocks of one batch. Routed: 32 query centroids. Ext.: the extended graph, with GELU; the other graphs use $x\odot\sigma(1.702x)$. Ref.: the reference with unfused attention. Softmax: top-$k$ with the tail term, dense for Ref.}
\label{tab:s-profile}
\centering
\small
\setlength{\tabcolsep}{3pt}
\begin{tabular}{l rrrrrr rrrrrr}
\toprule
& \multicolumn{6}{c}{Time (ms/batch)} & \multicolumn{6}{c}{Working memory (MiB)}\\
\cmidrule(lr){2-7}\cmidrule(lr){8-13}
& \multicolumn{2}{c}{Strict} & \multicolumn{2}{c}{Hadamard} & & & \multicolumn{2}{c}{Strict} & \multicolumn{2}{c}{Hadamard} & &\\
\cmidrule(lr){2-3}\cmidrule(lr){4-5}\cmidrule(lr){8-9}\cmidrule(lr){10-11}
Module & top-$k$ & routed & top-$k$ & routed & Ext. & Ref. & top-$k$ & routed & top-$k$ & routed & Ext. & Ref.\\
\midrule
$\mathrm{Score}$ ($QK^\top$) & 49.7 & 10.7 & 31.6 & 9.2 & 4.0 & 4.1 & 469 & 77 & 471 & 78 & 14 & 14\\
$\mathrm{Mix}$ ($PV$) & 7.4 & 11.8 & 4.9 & 7.0 & 5.6 & 2.8 & 81 & 81 & 78 & 78 & 39 & 12\\
LayerNorm & 23.0 & 22.2 & 17.9 & 17.9 & 17.9 & 5.5 & 16 & 17 & 19 & 19 & 19 & 9\\
Softmax & 24.9 & 24.0 & 14.0 & 12.7 & 15.1 & 2.1 & 15 & 18 & 9 & 11 & 8 & 28\\
Centroid selection & -- & 9.6 & -- & 9.7 & -- & -- & -- & 4 & -- & 3 & -- & --\\
Linear (qkv, proj) & 3.3 & 3.8 & 4.2 & 3.6 & 4.3 & 5.0 & 7 & 7 & 9 & 9 & 9 & 9\\
MLP & 8.7 & 9.4 & 5.5 & 5.1 & 5.3 & 6.3 & 83 & 83 & 28 & 28 & 18 & 18\\
\bottomrule
\end{tabular}
\end{table}

\section{S5. Polarization and squares}

The strict graph forms each product of two activations from squares, by one of two classical identities:
\[
ab=\tfrac14\bigl[(a+b)^2-(a-b)^2\bigr],\qquad ab=\tfrac12\bigl[(a+b)^2-a^2-b^2\bigr].
\]
The first, polarization, is the method of quarter-squares~\cite{laundy1856quarter}. The second, the norm expansion, forms products in expressivity proofs for ReLU networks~\cite{yarotsky2017error}.

\emph{Cost.} Polarization pays two squares per product. The norm expansion pays one square per product plus the squares $a^2$ and $b^2$ of the factors, so an isolated product costs two squares by polarization and three by the norm expansion. In a contraction each factor enters many products, and its square is formed once. $QK^\top$ of two $N\times d$ matrices then costs $2N^2d$ squares by polarization and $N^2d+2Nd$ by the norm expansion, and the ratio to the bound of Theorem~\ref{thm:lower} falls from $2d$ to $d+2d/N$. This is Winograd's precomputation of the self-terms of each row~\cite{winograd1968inner}, with squares in place of products. The strict graph uses the norm expansion where the squares of the factors are shared, in $QK^\top$, $PV$, LayerNorm and the exact tail term, and polarization elsewhere, in softmax and the GELU approximation.

\emph{Exactness.} Both identities are exact in real arithmetic and, without overflow or requantization, in integer arithmetic (Methods).

\emph{Rounding.} Both identities recover a small product as the difference of large squares. In the value mix the result $y=\sum_kp_kv_k$ is a weighted mean of the values, while the squares sum to about $\sum_kv_k^2$; on DeiT-Tiny this ratio is $65$ at $k=16$ and $543$ with all keys. In bf16 the order of the operations then sets the error (Table~\ref{tab:s-mix}). Rounding each partial sum before the subtraction, $\tfrac12\bigl[\sum_k(p_k+v_k)^2-\sum_kp_k^2-\sum_kv_k^2\bigr]$, gives $57\%$ relative error with all keys. Subtracting the two close squares per term inside the node that sums over keys gives $14\%$ for either identity: $\tfrac14\sum_k\bigl[(p_k+v_k)^2-(p_k-v_k)^2\bigr]$ by polarization and $\tfrac12\bigl[\sum_k\bigl(v_k^2-(v_k-p_k)^2\bigr)+\sum_kp_k^2\bigr]$ by the norm expansion. With this order the two identities differ in cost, not in precision. As Brent showed for Winograd's identity~\cite{brent1970error}, scaling the operands helps further: the fixed affine maps $p\to\lambda p$ and $v\to v/\lambda$ with $\lambda^2=\|v\|/\|p\|$ cut the error to $2.7\%$.

\emph{Scaling and PTQ.} Brent's scaling and the PTQ scales are the same choice. PTQ gives each tensor a grid with step $s=\max|x|/(2^{b-1}-1)$ for $b$ bits. The node that forms $a\pm b$ adds two tensors, so in integers both operands sit on one grid, and with the step set by the larger operand the smaller keeps few levels: in the mix the weights $p\le1$ share the grid of values several times larger. The scaling $a\to\lambda a$, $b\to b/\lambda$ leaves the product unchanged and gives both operands the same range, so one grid fits both. It also makes the squares as small as they can be, since $\lambda^2a^2+b^2/\lambda^2\ge2|ab|$ with equality at $\lambda^2=|b|/|a|$, and the size of the squares sets the step of the node that subtracts them. PTQ calibration already measures the ranges, so $\lambda^2=\max|b|/\max|a|$ follows from the same statistics, and $\lambda$ folds into the fixed affine maps that produce $a$ and $b$ and into the output maps, so it costs nothing at run time. A node that combines several inputs, as the norm expansion does, quantizes each input on its own grid and rescales it before the sum (Methods); this takes care of the subtraction, but the balance of the operands inside $a\pm b$ still sets the number of bits. The strict graph in this paper applies no $\lambda$, and its fixed affine nodes take 12-bit input codes. Brent's scaling can reduce this bit width.

\begin{table}[h]
\caption{\textbf{bf16 error of the value mix on DeiT-Tiny/Pet}, relative to float64, averaged over the 12 blocks of 16 test images. Partial sums and scaled: emulated kernels with bf16 rounding of every stored value; $\lambda$ is fitted on 8 other images. The cancellation factor is $\sum_kv_k^2/|y|$.}
\label{tab:s-mix}
\centering
\begin{tabular}{r c cccc}
\toprule
& & & \multicolumn{3}{c}{Norm expansion}\\
\cmidrule(lr){4-6}
$k$ & Cancellation & Polarization, per term & partial sums & per term & per term, scaled\\
\midrule
16 & 65 & 4.3\% & 9.1\% & 5.0\% & 1.0\%\\
197 & 543 & 14.1\% & 57.4\% & 14.4\% & 2.7\%\\
\bottomrule
\end{tabular}
\end{table}

\section{S6. Tail-mass ablation on DeiT-Tiny}

Table~\ref{tab:s-tail} compares the tail modes on Pet before training, and after Jeffreys distillation with trainable $q$. A fixed small tail mass ($q=10^{-3}$) behaves like tail drop: the omitted mass must be estimated. Table~\ref{tab:s-q} compares the calibrated $q$, a constant per block averaged on the training split, with the exact per-query tail mass on the held-out Pet test split, which no calibration or selection reads. Both sample 4096 queries per block from each minibatch of 32 images. The two means agree within $0.012$, and within $0.001$ for $k\ge16$, while single queries spread widely around them.

This spread leaves room between the two modes. Calibrated mode uses $q=\E\tau$ and exact mode $q=\tau$. Between them lies the conditional mean $\E[\tau\mid\phi]$ for any statistic $\phi$ of what top-$k$ attention reads, such as the log-sum-exp of the kept scores, the $k$-th score and the number $m$ of dropped keys. Its expected squared error is never larger than that of the constant, since $\Var\tau=\E\bigl(\E[\tau\mid\phi]-\tau\bigr)^2+\Var\E[\tau\mid\phi]$. Predicting the dropped partition sum as $m\,g(\phi)$, instead of $\tau$ itself, gives $q=0$ when no key is dropped. It therefore also covers causal attention, where the tail mass grows with the position of the query and one constant per block fits poorly: on Pythia-70M at $k=16$ the exact tail mass on 4 training minibatches is $0.062\pm0.115$, between $0$ and $0.70$.

\begin{table}[h]
\caption{\textbf{Tail modes on DeiT-Tiny/Pet}, top-1 accuracy (\%). The last column is the saved one of five Jeffreys-distilled runs with trainable $q$, the run with the highest training accuracy.}
\label{tab:s-tail}
\centering
\begin{tabular}{r ccccc}
\toprule
$k$ & tail drop & fixed $q=10^{-3}$ & calibrated & exact & calibrated, distilled\\
\midrule
1  & 2.53 & 2.78 & 15.73 & 8.69 & 41.89\\
2  & 3.16 & 2.86 & 42.95 & 21.31 & 64.57\\
4  & 2.83 & 3.11 & 67.10 & 58.60 & 78.09\\
8  & 5.70 & 5.51 & 79.50 & 81.77 & 84.71\\
16 & 39.44 & 39.33 & 86.24 & 88.80 & 88.58\\
32 & 81.71 & 82.07 & 89.23 & 90.84 & 90.19\\
64 & 89.10 & 88.91 & 90.35 & 91.36 & 90.76\\
\bottomrule
\end{tabular}
\end{table}

\begin{table}[h]
\caption{\textbf{Calibrated against exact tail mass on DeiT-Tiny/Pet.} Calibrated: the per-block constant from the training split, averaged over blocks. Exact: per-query tail mass on the held-out Pet test split, mean $\pm$ s.d.\ within a block averaged over blocks (min, max).}
\label{tab:s-q}
\centering
\begin{tabular}{r cc}
\toprule
$k$ & calibrated mean $q$, training & exact $\tau$, test\\
\midrule
1   & 0.7825 & $0.7943\pm0.1540$ (0.0007, 0.9925)\\
2   & 0.7139 & $0.7229\pm0.1703$ (0.0004, 0.9852)\\
4   & 0.6294 & $0.6349\pm0.1839$ (0.0004, 0.9710)\\
8   & 0.5278 & $0.5300\pm0.1885$ (0.0001, 0.9438)\\
16  & 0.4116 & $0.4125\pm0.1770$ (0.0000, 0.8915)\\
32  & 0.2826 & $0.2831\pm0.1475$ (0, 0.7920)\\
64  & 0.1511 & $0.1513\pm0.0980$ (0, 0.6052)\\
96  & 0.0811 & $0.0814\pm0.0607$ (0, 0.4286)\\
128 & 0.0394 & $0.0398\pm0.0330$ (0, 0.2627)\\
197 & 0 & 0\\
\bottomrule
\end{tabular}
\end{table}

\section{S7. Distillation loss}

Table~\ref{tab:s-distill} compares cross-entropy and Jeffreys distillation on Pet. For $2\le k\le32$, Jeffreys distillation recovers $45\%$ to $46\%$ of the accuracy gap between the compiled model and the reference, and cross-entropy $36\%$ to $43\%$. Jeffreys gains most, $1.2$ to $1.5$ points, at $k=2$ and $4$, since it matches the full teacher distribution and not only the label.

\begin{table}[h]
\caption{\textbf{Distillation on DeiT-Tiny/Pet}, top-1 accuracy (\%) of the saved run, the one with the highest training accuracy, and mean $\pm$ s.d. over five seeds.}
\label{tab:s-distill}
\centering
\begin{tabular}{r c cc cc}
\toprule
$k$ & compiled & CE saved & CE mean & Jeffreys saved & Jeffreys mean\\
\midrule
1   & 15.73 & 41.24 & $40.94\pm0.24$ & 41.89 & $41.22\pm0.60$\\
2   & 42.95 & 64.49 & $63.82\pm0.93$ & 64.57 & $65.30\pm0.47$\\
4   & 67.10 & 77.02 & $77.03\pm0.18$ & 78.09 & $78.22\pm0.18$\\
8   & 79.50 & 84.33 & $84.40\pm0.18$ & 84.71 & $84.91\pm0.15$\\
16  & 86.24 & 88.17 & $88.26\pm0.06$ & 88.58 & $88.56\pm0.19$\\
32  & 89.23 & 89.92 & $90.00\pm0.19$ & 90.19 & $90.18\pm0.08$\\
64  & 90.35 & 90.84 & $90.45\pm0.22$ & 90.76 & $90.43\pm0.18$\\
96  & 90.76 & 90.90 & $90.83\pm0.09$ & 91.25 & $90.86\pm0.22$\\
128 & 91.11 & 91.11 & $91.11\pm0.00$ & 91.11 & $91.11\pm0.00$\\
197 & 91.36 & 91.36 & $91.36\pm0.00$ & 91.36 & $91.36\pm0.00$\\
\bottomrule
\end{tabular}
\end{table}

\section{S8. bf16 against fp32}

Table~\ref{tab:s-dtype} compares bf16 with fp32 for every network with all keys. The bf16 column evaluates the saved models: weights in bf16 and arithmetic under CUDA autocast, as in the main tables. The fp32 column uses the fp32 reference weights and IEEE fp32 arithmetic. Each compiled graph is rebuilt in fp32, so its fixed coefficients are fp32 too. With all keys, no calibrated quantity is active. The DeiT-Tiny and Pythia-70M results repeat exactly between runs. MambaIRv2-Light uses nondeterministic kernels, and two runs differ by up to $0.01$\,dB.

In fp32, every compiled graph with exact GELU reproduces its reference (Theorems~\ref{thm:strict} and~\ref{thm:ext}, Proposition~\ref{prop:hsr}). On Pet, the extended, strict and Hadamard graphs match the accuracy and loss of DeiT-Tiny to four digits. The extended graph matches on ImageNet-1k and WikiText-2, and on Set5 within the run-to-run variation. With $x\,\sigma(1.702x)$, the strict and Hadamard graphs compute the same model and give the same fp32 accuracy. bf16 changes the accuracy of DeiT-Tiny by at most $0.46$ points and the PSNR of MambaIRv2-Light by at most $0.04$\,dB.

Pythia-70M runs in fp32. In bf16 its validation NLL rises from $4.55$ to $6.68$ nats, for the reference and the compiled graph alike. Only bf16 weights with fp32 arithmetic give $4.56$. The degradation cause is the logits. They share a large offset that softmax ignores: their median is $1061$, while the top two differ by a median of $0.73$. Near $1000$, bf16 resolves steps of $8$, so it loses the ranking of the tokens. The output projection causes $1.7$ of the $2.1$ nats and attention the other $0.4$.

\begin{table}[h]
\caption{\textbf{bf16 against fp32 with all keys.} Top-1 accuracy (\%) for DeiT-Tiny, PSNR (dB) on the Y channel of Set5 for MambaIRv2-Light, validation NLL (nats) for Pythia-70M. Sigmoid GELU: $x\,\sigma(1.702x)$.}
\label{tab:s-dtype}
\centering
\begin{tabular}{llrrr}
\toprule
Network & Model & bf16 & fp32 & bf16 $-$ fp32\\
\midrule
DeiT-Tiny, Pet & reference & 91.36 & 91.33 & $+0.03$\\
 & extended & 91.36 & 91.33 & $+0.03$\\
 & strict, GELU & 91.06 & 91.33 & $-0.27$\\
 & Hadamard, GELU & 91.28 & 91.33 & $-0.05$\\
 & strict, sigmoid GELU & 90.73 & 90.27 & $+0.46$\\
 & Hadamard, sigmoid GELU & 90.35 & 90.27 & $+0.08$\\
\midrule
DeiT-Tiny, ImageNet-1k & reference & 72.02 & 72.02 & $+0.01$\\
 & extended & 72.01 & 72.02 & $0.00$\\
\midrule
MambaIRv2-Light $\times2$, Set5 & reference & 38.234 & 38.272 & $-0.038$\\
 & extended & 38.231 & 38.270 & $-0.039$\\
\midrule
MambaIRv2-Light $\times4$, Set5 & reference & 32.511 & 32.512 & $-0.002$\\
 & extended & 32.502 & 32.507 & $-0.006$\\
\midrule
Pythia-70M, WikiText-2 & reference & 6.681 & 4.550 & $+2.131$\\
 & extended & 6.680 & 4.550 & $+2.130$\\
\bottomrule
\end{tabular}
\end{table}

\section{S9. Quantization with exact GELU}

Table~\ref{tab:ptq} reports each graph in its deployment configuration: the strict graph with $x\,\sigma(1.702x)$ and 12-bit inputs of its fixed affine nodes, before distillation, and the extended graph with GELU and 8-bit codes, after Jeffreys distillation. Table~\ref{tab:s-ptq-gelu} compares the strict graph with exact GELU and with $x\,\sigma(1.702x)$ under the same PTQ, both before distillation. The quantized graph has 344 nodes with GELU and 380 with its approximation, 332 and 368 at $k=197$. Exact GELU matches no NPU configuration and serves as a comparison. With either GELU, PTQ costs the strict graph at most $1.5$ points, against $2.5$ to $3.9$ points for the extended graph.

\begin{table}[h]
\caption{\textbf{PTQ of the strict graph with exact GELU and with $x\,\sigma(1.702x)$} on DeiT-Tiny/Pet, before distillation, with the PTQ of Table~\ref{tab:ptq}. bf16: top-1 accuracy (\%) of the float model. $\Delta$: change after PTQ with per-channel or per-tensor weight scales. The approximation columns repeat Table~\ref{tab:ptq}.}
\label{tab:s-ptq-gelu}
\centering
\begin{tabular}{r rrr rrr}
\toprule
& \multicolumn{3}{c}{GELU} & \multicolumn{3}{c}{$x\,\sigma(1.702x)$}\\
\cmidrule(lr){2-4}\cmidrule(lr){5-7}
$k$ & bf16 & $\Delta$, per-channel & $\Delta$, per-tensor & bf16 & $\Delta$, per-channel & $\Delta$, per-tensor\\
\midrule
16 & 86.37 & $-1.44$ & $-0.90$ & 84.08 & $-1.47$ & $-1.06$\\
32 & 89.07 & $-0.76$ & $-0.68$ & 87.98 & $-0.68$ & $-0.71$\\
64 & 90.35 & $-0.49$ & $+0.05$ & 89.78 & $-0.05$ & $-0.14$\\
197 & 91.06 & $-0.08$ & $+0.19$ & 90.73 & $-0.46$ & $-0.98$\\
\bottomrule
\end{tabular}
\end{table}

\section{S10. Tensor sizes}

Native $QK^\top$ reads $2Nd$ values per head and writes $N^2$. The unfused strict graph also writes the $N^2d$ sums $q_{if}+k_{jf}$, squares them in place, and reduces over $f$. For DeiT-Tiny that is $2.48\times10^{6}$ values per head per image against $3.88\times10^{4}$, a factor $d=64$, which drives the memory column of Table~\ref{tab:strict-fast}. Top-$k$ mixing reads $k$ of the $N$ value rows per query, a fraction $k/N$ of the value traffic of dense mixing.

\end{document}